\newtheorem{theorem}{$\mathbf{Theorem}$}[section]
\newtheorem{definition}{$\mathbf{Definition}$}[section]
\newenvironment{breakablealgorithm}
  {
     \refstepcounter{algorithm}
     \hrule height.8pt depth0pt \kern2pt
     \renewcommand{\caption}[2][\relax]{
       {\raggedright\textbf{Model~\thealgorithm} ##2\par}%
       \ifx\relax##1\relax 
         \addcontentsline{loa}{algorithm}{\protect\numberline{\thealgorithm}##2}%
       \else 
         \addcontentsline{loa}{algorithm}{\protect\numberline{\thealgorithm}##1}%
       \fi
       \kern2pt\hrule\kern2pt
     }
  }{
     \kern2pt\hrule\relax
  }
\renewcommand{\thesubfigure}{\roman{subfigure}} \makeatletter
\renewcommand{\@thesubfigure}{(\thesubfigure)\space}
\renewcommand{\p@subfigure}{\thefigure} \makeatother
\def\BibTeX{{\rm B\kern-.05em{\sc i\kern-.025em b}\kern-.08em
    T\kern-.1667em\lower.7ex\hbox{E}\kern-.125emX}}
\begin{document}
\title{A general model for plane-based clustering with loss function}
\author{Zhen Wang,~Yuan-Hai~Shao,~Lan~Bai,~Chun-Na~Li,~and~Li-Ming~Liu
\thanks{Submitted in \today. This work is supported in part by National Natural Science Foundation of
China (Nos. 11501310, 61866010, 11871183, and 61703370), in part by Natural Science Foundation of Hainan Province (No. 118QN181), and in part by Scientific Research Foundation of Hainan University (No. kyqd(sk)1804).}
\thanks{Zhen Wang is with
School of Mathematical Sciences, Inner Mongolia University, Hohhot,
010021, P.R.China e-mail: wangzhen@imu.edu.cn.}
\thanks{Yuan-Hai Shao (*Corresponding author) is with School of Economics and Management, Hainan University, Haikou,
570228, P.R.China e-mail: shaoyuanhai21@163.com.}
\thanks{Lan Bai is with
School of Mathematical Sciences, Inner Mongolia University, Hohhot,
010021, P.R.China e-mail: imubailan@163.com.}
\thanks{Chun-Na Li is with Zhijiang College, Zhejiang University of Technology, Hangzhou, 310024, P.R.China e-mail:
na1013na@163.com.}
\thanks{Li-Ming Liu is with School of Statistics, Capital University of Economics and Business, Beijing, 100070, P.R.China. e-mail: llm5609@163.com.}
}

\maketitle

\begin{abstract}
In this paper, we propose a general model for plane-based clustering. The general model contains many existing plane-based clustering methods, e.g., k-plane clustering (kPC), proximal plane clustering (PPC), twin support vector clustering (TWSVC) and its extensions. Under this general model, one may obtain an appropriate clustering method for specific purpose.
The general model is a procedure corresponding to an optimization problem, where the optimization problem minimizes the total loss of the samples. Thereinto, the loss of a sample derives from
both within-cluster and between-cluster. In theory, the termination conditions are discussed, and we prove that the general model terminates in a finite number of steps at a local or weak local optimal point.
Furthermore, based on this general model, we propose a plane-based clustering
method by introducing a new loss function
to capture the data distribution precisely. Experimental
results on artificial and public available datasets verify the effectiveness of
the proposed method.
\end{abstract}

\begin{IEEEkeywords}
Unsupervised learning, plane-based clustering, general model, twin support vector clustering, loss function.
\end{IEEEkeywords}

\section{Introduction}
\IEEEPARstart{C}{lustering}, discovering the similarity among the data samples, is one of the most important unsupervised learning topics \cite{ClusterBook3,ClusterBook2,ClusterTNNA2}. Many approaches assign the samples into the clusters via certain cluster centers \cite{nKmeans,DP,Kmeans2,Kmedian,Kplane,Kflat}. The plane-based clustering treats the cluster center as a plane, and thus it is able to find the plane-based shape clusters. Moreover, the plane-based clustering can be extended to nonlinear manifold modeling easily to cope with complex data structures. The plane-based clustering has attracted much attention \cite{Kplane,kPPC,PPC,TWSVC,TWSVC_A1,TWSVC_A2,TWSVC_A3,RTWSVC}.

The first plane-based clustering, k-plane clustering (kPC) \cite{Kplane}, was proposed by O.L. Mangasarian et al., where the discriminative information from within-cluster was considered. Subsequently, the discriminative information from between-cluster has been introduced in plane-based clustering. For instance, proximal plane clustering (PPC) \cite{PPC} and twin support vector clustering (TWSVC) \cite{TWSVC} considered that the cluster center plane should be not only as close as possible to the current cluster samples but also far away from the other clusters. Still later, robust twin support vector clustering (RTWSVC) and fast robust twin support vector clustering (FRTWSVC) were also appeared \cite{RTWSVC}. Until recently, ramp-based twin support vector clustering (RampTWSVC) \cite{RampTWSVC} was proposed to deal with noise or outliers. So it is interesting to find a cluster center plane by considering the discriminative information both from within-cluster and between-cluster.

Let us notice the close relationship between the cluster problem and the classification problem. In fact, there are the following corresponding relationships between them: PPC corresponds to the generalized eigenvalue proximal support vector machines (GEPSVM) \cite{GEPSVM}, TWSVC to the twin support vector machines (TWSVM) \cite{TWSVM,TBSVM}, RTWSVC to the $L_1$-TWSVM \cite{L1TWSVM}, FRTWSVC to the $L_1$ least square TWSVM \cite{L1LSTWSVM}, and RampTWSVC to the best fitting hyperplanes for classification (BFHC) \cite{BFHC}. So it seems like a great way to relate the plane-based clustering to the supervised learning.

Briefly speaking, the supervised learning is essentially based on two concepts ``loss function'' and ``regularization'' \cite{SLT,L1SVM,PBSVM,LSSVM1,LPSVM,L1TWSVM,L1LSTWSVM,LPNSVM,ClusterTNNA3}. We find that the plane-based clustering can also be established in a similar way. This yields our general model. It is concerned with the new defined loss function from discriminative information \cite{ClusterTNNA1} and the regularization. The general model iteratively implements two parts: cluster update and cluster assignment. In the cluster update, the new cluster center planes would be obtained by minimizing the loss derived from the current cluster assignment. Besides, in the cluster assignment, each sample would be assigned to the cluster with the least loss.
For the general model, it is allowed to select various loss functions and regularization terms, and most of the existing plane-based clustering methods can be regarded as the particular cases with different selections. Furthermore, a new plane-based clustering is derived from the general model. More precisely, following the model, we propose a robust fitting distribution planes clustering (RFDPC) by hiring a new loss function, the ramp loss \cite{BFHC} combined with certain statistics, which owns clear geometric meaning and captures the data distribution.

The main contributions of this paper include:

\noindent
(i) A general model for plane-based clustering is proposed, in which different loss functions and regularization terms can be chosen, particularly yielding the existing kPC, PPC, TWSVC, RTWSVC, FRTWSVC, RampTWSVC, and etc.

\noindent
(ii) The cluster update and cluster assignment in the general model is consistent on minimizing the loss of samples, resulting in its finite termination at a local or weak local optimal point.

\noindent
(iii) A new loss function is introduced in the general model with named RFDPC, to cope with outliers, noise, and capture the data distribution more precisely.

\noindent
(iv) Experiments show the amazing performance of RFDPC compared with the existing plane-based clustering methods.

The rest of this paper is organized as follows. The general model is elaborated in section II. Some plane-based clustering methods are summarised under the general model in section III. A novel plane-based clustering method (RFDPC) is described in section IV.
Experiments and conclusions are presented in sections V and VI,
respectively.

\section{The general model for plane-based clustering}

\subsection{Formulation}
Remind the clustering problem with $m$
data samples $\{x_1,\,x_2,\ldots,x_m\}$ in the $n$-dimensional real vector space
$R^{n}$, which is represented by $X\in R^{n\times m}$. Assume that these $m$ samples belong to $k$ clusters
with their corresponding labels $y\in\{1,\,2,\ldots,k\}$.
Our task is to assign the $m$ samples into $k$ clusters, or to give their cluster labels
\begin{equation}\label{Y}
\begin{array}{l}
\mathbf{y}=(y(x_1),y(x_2),\ldots,y(x_m))^\top.
\end{array}
\end{equation}
For partition-based clustering \cite{Kmeans,Kmedian,PartClustering,PartClustering2}, the usual way is to find the cluster labels $\mathbf{y}$ as well as the $k$ cluster centers. The plane-based
clustering treats each cluster center as a plane. The $k$ cluster center planes are described as
\begin{equation}\label{Center}
\begin{array}{l}
w_j^\top x +b_j=0,~j=1,\ldots,k,
\end{array}
\end{equation}
where $w_j\in R^n$ is the weight vector and $b_j\in R$ is the bias term.
Consider the deviation of a sample $x$ from the $j$-th cluster center plane $w_j^\top x+b_j=0$ ($j=1,\ldots,k$). For instance, the deviation can be measured by the signed distance of $x$ to the plane as
\begin{equation}\label{D1}
\begin{array}{l}
f(x;w_j,b_j)=\frac{w_j^\top x+b_j}{||w_j||},
\end{array}
\end{equation}
where $||\cdot||$ denotes $L_2$ norm.
Another simpler way to reduce computation is to hire
\begin{equation}\label{D2}
\begin{array}{l}
f(x;w_j,b_j)=w_j^\top x+b_j.
\end{array}
\end{equation}

Combining these $k$ deviation functions, either \eqref{D1} or \eqref{D2}, yields the vector function
\begin{equation}\label{F}
\begin{array}{l}
F(x;W,\mathbf{b})=(f(x;w_1,b_1),f(x;w_2,b_2),\ldots,f(x;w_k,b_k))^\top,
\end{array}
\end{equation}
where $W=(w_1,\ldots,w_k)$ and $\mathbf{b}=(b_1,\ldots,b_k)^\top$.
Thus, the $k$ cluster center planes can be represented as
\begin{equation}\label{C3}
\begin{array}{l}
F(x;W,\mathbf{b})=0.
\end{array}
\end{equation}

One of the popular approaches \cite{Kplane,PPC,TWSVC} is to find the cluster labels $\mathbf{y}$ and the $k$ cluster center planes \eqref{C3} iteratively. Start with an initial assignment $\mathbf{y}$. Next, for the given $\mathbf{y}$, find the corresponding $F(x;W,\mathbf{b})$ by establishing and solving an optimization problem. Then, update $\mathbf{y}$ and the vector function $F(x;W,~\mathbf{b})$ alternately until certain termination conditions are satisfied.

The key point of this paper is to introduce the loss function into a general optimization problem. For the $i$th sample $x_i$ with assigned label $y(x_i)$, the ideal case is to find  $W^*=(w_1^*,\ldots,w_k^*)$ and $\mathbf{b}^*=(b_1^*,\ldots,b_k^*)^\top$ such that, on one hand the sample $x_i$ lies exactly on the center plane $f(x_i;w_{y(x_i)}^*,b_{y(x_i)}^*)=0$, and on the other hand, the sample $x_i$ is far away from other center planes (in extremity, $f(x_i;w_j^*,b_j^*)=\pm\infty$, $j\neq y(x_i)$). For the actual situation, the loss of sample $x_i$ should be a measure of the deviation from the ideal case. Therefore, it should consist of two parts: (i) for its own center plane, the loss should depend on the deviation $f(x_i;w_{y(x_i)},b_{y(x_i)})$ and can be measured by a within-cluster function $J^w(f(x_i;w_{y(x_i)},b_{y(x_i)}))$, where $J^w(\rho)$ is a function from $R$ to $R$ with the condition $J^w(0)=0$; (ii) for other center planes, the loss can be measured by a between-cluster function $J^b(f(x_i;w_j,b_j))$ with $j\neq y(x_i)$, where $J^b(\rho)$ is a function from $R$ to $R$. Thus, for the sample $x_i$ ($i=1,\ldots,m$), the loss is described by
\begin{equation}\label{LossDef}
\begin{array}{l}
L(y(x_i);F(x_i;W,\mathbf{b}))=c_wJ^w(f(x_i;w_{y(x_i)},b_{y(x_i)}))+\\c_b\sum\limits_{\begin{subarray}{c}
j=1\\
    j\neq y(x_i)
  \end{subarray}}^kJ^b(f(x_i;w_j,b_j)),
\end{array}
\end{equation}
where $c_w$ and $c_b$ are positive parameters. Furthermore,
for the dataset $X=\{x_1,\ldots,x_m\}$, the total loss is
\begin{equation}\label{Tloss}
\begin{array}{l}
L(\mathbf{y};F(x_1;W,\mathbf{b}),\ldots,F(x_m;W,\mathbf{b}))=\sum\limits_{i=1}^mL(y(x_i);F(x_i;W,\mathbf{b})).
\end{array}
\end{equation}
This leads to the optimization problem for both $\mathbf{y}$ and ($W$, $\mathbf{b}$) as
\begin{equation}\label{MainProblem}
\begin{array}{l}
\underset{\mathbf{y},W,\mathbf{b}}{\min}~~G(\mathbf{y};W,\mathbf{b})=L(\mathbf{y};F(x_1;W,\mathbf{b}),\ldots,F(x_m;W,\mathbf{b}))+||F||_\mathcal{F},
\end{array}
\end{equation}
where $||\cdot||_\mathcal{F}$ denotes the regularization term in the functional space $\mathcal{F}$.

Problem \eqref{MainProblem} is very similar to the optimization problem in supervised learning, i.e., it consists of the loss and regularization, but their concerns are not the same. For supervised learning, it aims at predicting the unknown samples by minimizing the loss of the training samples. However, for clustering, we focus on minimizing the loss of the given $m$ samples, and the regularization makes this efficient.
Based on problem \eqref{MainProblem}, the general model for plane-based clustering is constructed in Model 1.

~

\begin{breakablealgorithm}
	\caption{The general model}
\noindent \textbf{Input:} Dataset $X$, the within-cluster function $J^w(\rho)$, the between-cluster function $J^b(\rho)$, and the parameters $c_w$, $c_b$.\\
\textbf{Output:} $\mathbf{y}^*$ and $(W^*,~\mathbf{b}^*)$.\\
1. Initialize the sample labels $\mathbf{y}^{(0)}=(y^{(0)}(x_1),\ldots,y^{(0)}(x_m))^\top$.

\noindent 2. \textbf{For} $t=0,1,2,\ldots$, compute $W^{(t)},~\mathbf{b}^{(t)}$ and $\mathbf{y}^{(t+1)}$ by the following steps:

\textbf{(a) Cluster update:} For the current $\mathbf{y}^{(t)}$, $(W^{(t)},~\mathbf{b}^{(t)})$ is set to be the solution to the optimization problem
\begin{equation}\label{M1}
\begin{array}{l}
\underset{W,\mathbf{b}}{\min}~~G(\mathbf{y}^{(t)};W,\mathbf{b}),
\end{array}
\end{equation}
where $G(\cdot)$ is given by \eqref{MainProblem},
or equivalently the solutions to $k$ subproblems with $j=1,\ldots,k$ as follow:
\begin{equation}\label{Sub1}
\begin{array}{l}
\underset{w_j,b_j}{\min}~~c_w\sum\limits_{\begin{subarray}{c}
i=1\\
    y^{(t)}(x_i)=j
  \end{subarray}}^m J^w(f(x_i;w_j,b_j))+\\c_b\sum\limits_{\begin{subarray}{c}
i=1\\
    y^{(t)}(x_i)\neq j
  \end{subarray}}^m J^b(f(x_i;w_j,b_j))+||f(x_i;w_j,b_j)||_\mathcal{F}.
\end{array}
\end{equation}

\textbf{(b) Cluster assignment:} For the current $(W^{(t)},~\mathbf{b}^{(t)})$, the labels $\mathbf{y}^{(t+1)}$ are set to be the solution of the following optimization problem
\begin{equation}\label{M2}
\begin{array}{l}
\underset{\mathbf{y}}{\min}~~G(\mathbf{y},W^{(t)},\mathbf{b}^{(t)}),
\end{array}
\end{equation}
or equivalently, the labels $\mathbf{y}^{(t+1)}$ are given by
\begin{equation}\label{Sub2}
\begin{array}{l}
y(x_i)=\underset{j}{\arg}\min ~~L(j;F(x_i;W^{(t)},\mathbf{b}^{(t)}))
\end{array}
\end{equation}
with $i=1,\ldots,m$.
If there is a tie, the cluster with the smallest label number is selected.

\textbf{(c) Repetitiveness check:} If $(W^{(t)},~\mathbf{b}^{(t)})$ is a solution to problem \eqref{M1} where $\mathbf{y}^{(t)}$ is replaced by $\mathbf{y}^{(t+1)}$, break the loop and go to step 3.

\textbf{(d)} If the termination condition is satisfied, go to step 3; otherwise, set $t=t+1$ and back to step 2(a).

3. Set $\mathbf{y}^*=\mathbf{y}^{(t)}$, $W^*=W^{(t)}$, $\mathbf{b}^*=b^{(t)}$.
\end{breakablealgorithm}

~

\noindent$\mathbf{Remark.}$ In step 1 of Model 1, a common way is to assign the samples into $k$ clusters randomly, resulting in unstable clustering performance. It is preferable to choose some stable initialization techniques, e.g., nearest neighbor graph (NNG) \cite{TWSVC}, which has been successfully applied to several plane-based clustering methods \cite{TWSVC,RTWSVC,RampTWSVC}. In step 2(a), if there are many global solutions can be obtained, the same ones are selected for the same $\mathbf{y}$. However, we may only get a local solution. Note that there has been $(W^{(t-1)},~\mathbf{b}^{(t-1)})$ (for $t\geq1$) before solving problem \eqref{M1}. The local solution in step 2(a) must be not worse than previous solution if a local solution is inevitable. Thus, it is a good choice to hire the $(t-1)$-th local solution as the initial point of the $t$-th problem in step 2(a), if the assumption is false in step 2(c). In other words, the inequality $G(\mathbf{y}^{(t)},W^{(t-1)},\mathbf{b}^{(t-1)})\geq G(\mathbf{y}^{(t)},W^{(t)},\mathbf{b}^{(t)})$ always holds in iteration.

Besides, the functions $J^w(\rho)$ and $J^b(\rho)$ should also be pre-defined. Obviously, it is reasonable to select them with the following properties.

\noindent$\mathbf{Properties:}$

\noindent
(i) $J^w(\rho)=J^w(-\rho)$ and $J^b(\rho)=J^b(-\rho)$.

\noindent
(ii) $J^w(\rho)$ is monotonically non-decreasing in $[0,\infty)$.

\noindent
(iii) $J^b(\rho)$ is monotonically non-increasing in $[0,\infty)$.\newline
In this case, we have following theorem.
\begin{theorem}
If the within-cluster function
$J^w(\rho)$ and the between-cluster function $J^b(\rho)$ satisfy the above three properties (i)-(iii), then the sample assignment \eqref{Sub2} can be simplified as
\begin{equation}\label{Predict}
\begin{array}{l}
y(x_i)=\underset{j}{\arg}\min |f(x_i;w_j,b_j)|,
\end{array}
\end{equation}
where $|\cdot|$ denotes the absolute value.
\end{theorem}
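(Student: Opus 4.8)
The plan is to show that, under properties (i)--(iii), the label $j$ minimizing the full loss $L(j;F(x_i;W,\mathbf{b}))$ given in \eqref{LossDef} is exactly the label minimizing $|f(x_i;w_j,b_j)|$. Fix a sample $x_i$ and, to ease notation, write $\rho_j := f(x_i;w_j,b_j)$ for $j=1,\ldots,k$; these $k$ numbers are fixed once $(W,\mathbf{b})$ is given. For a candidate label $\ell$, the loss \eqref{LossDef} reads
\begin{equation}\label{proofLoss}
L(\ell) = c_w J^w(\rho_\ell) + c_b\sum_{j\neq \ell} J^b(\rho_j).
\end{equation}
The first step is to rewrite the between-cluster sum over all $j$ and subtract the missing term: $\sum_{j\neq\ell}J^b(\rho_j) = \big(\sum_{j=1}^k J^b(\rho_j)\big) - J^b(\rho_\ell)$. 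Substituting this into \eqref{proofLoss} gives
\begin{equation}\label{proofLoss2}
L(\ell) = c_b\sum_{j=1}^k J^b(\rho_j) + \Big(c_w J^w(\rho_\ell) - c_b J^b(\rho_\ell)\Big).
\end{equation}
The key observation is that the first term $c_b\sum_{j=1}^k J^b(\rho_j)$ does not depend on $\ell$, so it is a constant for the purpose of the $\arg\min$ over $\ell$. Hence minimizing $L(\ell)$ is equivalent to minimizing the scalar function
\begin{equation}\label{proofPhi}
\varphi(\ell) := c_w J^w(\rho_\ell) - c_b J^b(\rho_\ell).
\end{equation}

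The second step is to show that $\varphi(\ell)$ is a monotonically non-decreasing function of $|\rho_\ell|$. By property (i), both $J^w$ and $J^b$ are even, so $J^w(\rho_\ell)=J^w(|\rho_\ell|)$ and $J^b(\rho_\ell)=J^b(|\rho_\ell|)$; thus $\varphi$ depends on $\rho_\ell$ only through $|\rho_\ell|$, and we may regard $\varphi$ as a function $\tilde\varphi(s) = c_w J^w(s) - c_b J^b(s)$ of $s=|\rho_\ell|\ge 0$. By property (ii), $J^w$ is non-decreasing on $[0,\infty)$, so $c_w J^w(s)$ is non-decreasing; by property (iii), $J^b$ is non-increasing on $[0,\infty)$, so $-c_b J^b(s)$ is non-decreasing (here $c_w,c_b>0$ is used). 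The sum of two non-decreasing functions is non-decreasing, so $\tilde\varphi$ is non-decreasing on $[0,\infty)$. Consequently, among $\ell=1,\ldots,k$, the value $\tilde\varphi(|\rho_\ell|)$ is smallest precisely when $|\rho_\ell|$ is smallest; that is, $\arg\min_\ell L(\ell) \supseteq \arg\min_\ell |\rho_\ell|$, which recovers \eqref{Predict}. The tie-breaking rule (smallest label number) in \eqref{Sub2} carries over verbatim, so the two selection rules produce identical labels.

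The only subtlety, and the point I would be most careful about, is that $\tilde\varphi$ is merely non-decreasing, not strictly increasing, so $\arg\min_\ell L(\ell)$ and $\arg\min_\ell|\rho_\ell|$ need not be equal as sets: a label with a strictly larger $|\rho_\ell|$ could still attain the same minimal $\tilde\varphi$ value if $\tilde\varphi$ is flat there. This is harmless for the statement because the shared tie-breaking convention resolves any such ambiguity consistently, but it is worth noting that the equivalence of \eqref{Sub2} and \eqref{Predict} should be read as "these rules return the same label under the stated tie-breaking," not as equality of the full $\arg\min$ sets. No hard analytic obstacle arises; the proof is essentially the algebraic rearrangement \eqref{proofLoss2} together with the monotonicity bookkeeping of the second step.
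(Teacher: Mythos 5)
Your proof is correct and is essentially the paper's argument in a slightly tidier packaging: where you add and subtract $c_bJ^b(\rho_\ell)$ to isolate the label-dependent term $c_wJ^w(\rho_\ell)-c_bJ^b(\rho_\ell)$ and invoke its monotonicity in $|\rho_\ell|$, the paper compares $L(l^*;F)$ with $L(l;F)$ for an arbitrary $l$, cancels the common between-cluster terms, and applies the same two monotonicity inequalities. Your closing caveat about non-strict monotonicity and the $\arg\min$ sets not coinciding is a fair observation, and it applies equally to the paper's own proof, which likewise only establishes $L(l^*;F)\leq L(l;F)$.
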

\begin{proof}
Suppose that, for an arbitrary sample $x$, $l^*$ is the label of $x$ obtained by \eqref{Predict}, i.e., $|f(x;w_{l^*},b_{l^*})|$ is the smallest one in $\{|f(x;w_1,b_1)|,\ldots,|f(x;w_k,b_k)|\}$, and suppose $l$ is an arbitrary label of $x$. The objective values of \eqref{Sub2} at $l^*$ and $l$ are
\begin{equation}\label{Sub2-1}
\begin{array}{l}
L(l^*;F)=c_wJ^w(f(x;w_{l^*},b_{l^*}))+c_bJ^b(f(x;w_l,b_l))\\+c_b\sum\limits_{\begin{subarray}{c}
j=1\\
    j\neq l^*,j\neq l
  \end{subarray}}^kJ^b(f(x;w_j,b_j)),
\end{array}
\end{equation}
and
\begin{equation}\label{Sub2-2}
\begin{array}{l}
L(l;F)=c_wJ^w(f(x;w_l,b_l))+c_bJ^b(f(x;w_{l^*},b_{l^*}))\\+c_b\sum\limits_{\begin{subarray}{c}
j=1\\
    j\neq l^*,j\neq l
  \end{subarray}}^kJ^b(f(x;w_j,b_j)),
\end{array}
\end{equation}
respectively.

From the properties (i) and (ii), we have $J^w(f(x;w_{l^*},b_{l^*}))\leq J^w(f(x;w_l,b_l))$. Similarly, we have $J^b(f(x;w_l,b_l))\leq J^b(f(x;w_{l^*},b_{l^*}))$ from the properties (i) and (iii). Thus, $L(l^*;F)\leq L(l;F)$ because of the positive $c_w$ and $c_b$. This implies that $l^*$ corresponds to the smallest objective value of \eqref{Sub2}.
\end{proof}

Now, we extend the general model to the nonlinear case via a kernel trick \cite{nKmeans,Kernel2,TWSVC,SGTSVM}. For the nonlinear manifold clustering, the $k$ cluster centers are defined as
\begin{equation}\label{nCenter}
\begin{array}{l}
w_j^\top \phi(x) +b_j=0,~j=1,\ldots,k,
\end{array}
\end{equation}
where $\phi(\cdot)$ is a pre-defined nonlinear mapping. Thus, the deviation of a sample from a cluster center depends on the nonlinear mapping $\phi(\cdot)$ strictly.
Generally, it is not necessary to give the explicit nonlinear mapping $\phi(\cdot)$. Note that the deviation in general model is just considered. There are many kernel tricks to estimate the deviation. For instance, the deviation $f(\phi(x);w_j,b_j)$ can be estimated by $f(K(x,X);w_j,b_j)$, where $K(\cdot,\cdot)$ is a predetermined kernel function \cite{Kernel2} and $K(x_1,x_2)=<\phi(x_1),\phi(x_2)>$ ($<\cdot,\cdot>$ denotes the inner product). By selecting an appropriate kernel function, the nonlinear general model can be obtained without any difficulty, so the details are omitted.

\subsection{Analysis}
In this subsection, the termination conditions of the above general model are analysed. More exactly, it is concerned with the following three termination conditions.

\noindent$\mathbf{Termination~Conditions:}$

\noindent
(i) It happens that there is a repeated overall assignment of samples to clusters, i.e., $\mathbf{y}^{(p)}=\mathbf{y}^{(q)}$ where $p\neq q$ \cite{Kplane,PPC}.

\noindent
(ii) It happens that there is a non-decrease in the objective function $G(\cdot)$ \cite{Kplane,PPC}.

\noindent
(iii) Both the cases (i) and (ii) happen.\\
Corresponding to the different meaning of the solution in step 2(a), we have the following two theorems.
\begin{theorem}
Under either termination condition (i) or (ii), the general model terminates in a finite number of steps if the solution in step 2(a) means global solution.
\end{theorem}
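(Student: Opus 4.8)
The plan is to combine two observations: the objective value returned by the cluster-update step~2(a) depends only on the current label vector (this is exactly where global optimality of the solver is used), and the label vector ranges over the finite set $\{1,\dots,k\}^{m}$, of cardinality $k^{m}$.

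First I would record the monotonicity chain
\[
G(\mathbf{y}^{(t)};W^{(t)},\mathbf{b}^{(t)})\ \ge\ G(\mathbf{y}^{(t+1)};W^{(t)},\mathbf{b}^{(t)})\ \ge\ G(\mathbf{y}^{(t+1)};W^{(t+1)},\mathbf{b}^{(t+1)}).
\]
The first inequality holds because in step~2(b) the new labels $\mathbf{y}^{(t+1)}$ minimize $G(\cdot;W^{(t)},\mathbf{b}^{(t)})$ over all label vectors while $\mathbf{y}^{(t)}$ is a feasible competitor; here one uses that the regularization term $\|F\|_{\mathcal{F}}$ in \eqref{MainProblem} is independent of $\mathbf{y}$, so step~2(b) genuinely decreases $G$ and not merely the loss part, and \eqref{Sub2} is just its per-sample form. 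The second inequality holds because, by hypothesis, $(W^{(t+1)},\mathbf{b}^{(t+1)})$ is a \emph{global} minimizer of $G(\mathbf{y}^{(t+1)};\cdot,\cdot)$ in step~2(a), for which $(W^{(t)},\mathbf{b}^{(t)})$ is feasible. Hence the numbers $g_{t}:=G(\mathbf{y}^{(t)};W^{(t)},\mathbf{b}^{(t)})$ form a non-increasing sequence, and moreover $g_{t}=G^{\star}(\mathbf{y}^{(t)})$ where $G^{\star}(\mathbf{y}):=\min_{W,\mathbf{b}}G(\mathbf{y};W,\mathbf{b})$, a function on $\{1,\dots,k\}^{m}$ taking at most $k^{m}$ distinct values.

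For termination condition~(i) I would invoke the pigeonhole principle: the iterates $\mathbf{y}^{(0)},\mathbf{y}^{(1)},\dots$ all lie in $\{1,\dots,k\}^{m}$, so among the first $k^{m}+1$ of them two must coincide, giving $\mathbf{y}^{(p)}=\mathbf{y}^{(q)}$ with $p\neq q$ and thus termination within $k^{m}+1$ steps. For termination condition~(ii) I would argue by contradiction: if no non-decrease of $G$ were detected during the first $k^{m}$ iterations, then (since $g_{t}$ is non-increasing) the inequalities would all be strict, so $g_{0}>g_{1}>\dots>g_{k^{m}}$ would be $k^{m}+1$ distinct values of $G^{\star}$, a contradiction; hence an equality $g_{t+1}=g_{t}$ — a ``non-decrease'' — occurs within $k^{m}$ iterations and triggers the test in step~2(d). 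The repetitiveness check of step~2(c) can only make the loop exit sooner, so it does not disturb this count, and condition~(iii) follows at once as the conjunction of two conditions each already forcing a stop.

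The only step I expect to require care is the second inequality above, together with the role of the word ``global'' in the hypothesis: if step~2(a) returned merely a local minimizer, the value attained when a label vector recurs could differ from the value attained the first time, so $g_{t}$ would no longer be a function of $\mathbf{y}^{(t)}$ alone; the pigeonhole argument for condition~(i) would survive, but the finite-value-set argument underpinning condition~(ii) — and with it the explicit $k^{m}$ bound — would collapse, which is precisely why the local case is treated by a separate theorem. Everything else (the $\mathbf{y}$-independence of $\|F\|_{\mathcal{F}}$, the feasibility of the previous iterate in each subproblem, and the finiteness of $\{1,\dots,k\}^{m}$) is routine bookkeeping.
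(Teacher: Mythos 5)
Your proposal is correct and follows essentially the same route as the paper: pigeonhole on the finite set $\{1,\dots,k\}^{m}$ of label assignments for condition (i), and the monotone non-increase of $G$ guaranteed by global optimality in step 2(a) to force the objective value to stop decreasing for condition (ii). Your explicit $k^{m}$ bound and the observation that $g_t=\min_{W,\mathbf{b}}G(\mathbf{y}^{(t)};W,\mathbf{b})$ is a function of $\mathbf{y}^{(t)}$ alone merely make quantitative what the paper argues by locating a repeated label vector and squeezing the chain of equalities \eqref{Equals} between its two occurrences.
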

\begin{proof}
The iterations in the general model can be summarized as
\begin{equation}\label{Sequence}
\begin{array}{l}
\mathbf{y}^{(0)}\rightarrow (W^{(0)},\mathbf{b}^{(0)})\rightarrow \mathbf{y}^{(1)}\rightarrow (W^{(1)},\mathbf{b}^{(1)})\rightarrow\cdots\\ \rightarrow \mathbf{y}^{(t)}\rightarrow (W^{(t)},\mathbf{b}^{(t)})\rightarrow \cdots.
\end{array}
\end{equation}
Since there are a finite number of ways that the $m$ samples can be assigned to $k$ clusters, there are two integers $t,p>0$ such that $\mathbf{y}^{(t)}=\mathbf{y}^{(t+p)}$. Therefore, the general model terminates in a finite number of steps under termination condition (i).

Moreover, the corresponding $(W^{(t)},\mathbf{b}^{(t)})$ and $(W^{(t+p)},\mathbf{b}^{(t+p)})$ are the global solutions to the same optimization problem \eqref{M1}. Thus, we have $G(\mathbf{y}^{(t)};W^{(t)},\mathbf{b}^{(t)})=G(\mathbf{y}^{(t+p)};W^{(t+p)},\mathbf{b}^{(t+p)})$. Note that the global solution in step 2(a) guarantees that the objective $G(\mathbf{y};W,\mathbf{b})$ is non-increasing in iteration. Then we have
\begin{equation}\label{Equals}
\begin{array}{l}
G(\mathbf{y}^{(t)};W^{(t)},\mathbf{b}^{(t)})=G(\mathbf{y}^{(t+1)};W^{(t)},\mathbf{b}^{(t)})=
G(\mathbf{y}^{(t+1)};\\W^{(t+1)}, \mathbf{b}^{(t+1)})=\cdots=G(\mathbf{y}^{(t+p)};W^{(t+p)},\mathbf{b}^{(t+p)}).
\end{array}
\end{equation}
Therefore, the general model terminates in a finite number of steps under termination condition (ii).
\end{proof}

\begin{theorem}
Suppose the number of the local solutions or the local optimal values to the problem $\underset{W,\mathbf{b}}{\min}~G(\mathbf{y};W,\mathbf{b})$ is finite. Under termination condition (iii), the general model terminates in a finite number of steps if the solution in step 2(a) means local solution.
\end{theorem}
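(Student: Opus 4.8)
The plan is to follow the pattern of the global-solution case (the preceding theorem), replacing the ``global descent'' step by the warm-start descent prescribed in the Remark. Write $g_t := G(\mathbf{y}^{(t)};W^{(t)},\mathbf{b}^{(t)})$ for the objective value at the end of the $t$-th iteration. First I would show that $\{g_t\}_{t\ge 0}$ is non-increasing. On the cluster-assignment side, $\mathbf{y}^{(t+1)}$ is by construction a minimizer of $G(\cdot;W^{(t)},\mathbf{b}^{(t)})$ over all label vectors, so $G(\mathbf{y}^{(t+1)};W^{(t)},\mathbf{b}^{(t)})\le g_t$. On the cluster-update side, the Remark prescribes using $(W^{(t)},\mathbf{b}^{(t)})$ as the initial point of problem \eqref{M1} with $\mathbf{y}=\mathbf{y}^{(t+1)}$, so the returned local solution satisfies $g_{t+1}=G(\mathbf{y}^{(t+1)};W^{(t+1)},\mathbf{b}^{(t+1)})\le G(\mathbf{y}^{(t+1)};W^{(t)},\mathbf{b}^{(t)})$. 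Chaining the two inequalities gives $g_{t+1}\le g_t$ for every $t$.

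Next I would invoke the finiteness hypothesis. The label vector $\mathbf{y}^{(t)}$ ranges over the finite set $\{1,\ldots,k\}^m$, and for each fixed label vector the set of local optimal values of $\min_{W,\mathbf{b}}G(\mathbf{y};W,\mathbf{b})$ is finite by assumption; hence $\{g_t\}$ takes values in a finite set. A non-increasing real sequence with finite range is eventually constant, so there is an index $T$ with $g_t=g_T$ for all $t\ge T$. In particular, at every iteration $t>T$ the objective fails to decrease, i.e., termination condition (ii) is met from iteration $T$ onward.

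It remains to force a repeated overall assignment as well. Suppose, toward a contradiction, that the model never terminates; then the iterations $t=T,T+1,\ldots$ all occur, which are infinitely many, while only finitely many distinct label vectors are available, so by the pigeonhole principle there exist $T\le s_1<s_2$ with $\mathbf{y}^{(s_1)}=\mathbf{y}^{(s_2)}$. At iteration $s_2$ both a repeated overall assignment (condition (i)) and a non-decrease of $G$ (condition (ii), since $s_2>T$) have occurred, so termination condition (iii) is satisfied and the loop exits at the finite step $s_2$ --- contradicting non-termination. (The repetitiveness check in step 2(c) can only make this happen sooner.) Hence the general model terminates in a finite number of steps.

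The main obstacle is the very first step: if step 2(a) is allowed to return an arbitrary local solution, it need not improve on the previous iterate, the sequence $\{g_t\}$ need not be monotone, and the ``eventually constant'' argument collapses. So the proof hinges on reading the warm-start convention of the Remark, namely $G(\mathbf{y}^{(t)};W^{(t-1)},\mathbf{b}^{(t-1)})\ge G(\mathbf{y}^{(t)};W^{(t)},\mathbf{b}^{(t)})$, into the meaning of step 2(a). A secondary point that requires care is that condition (iii) must be witnessed at a single iteration, which is why the pigeonhole argument is applied only after the objective has already stabilized at step $T$; the finiteness-of-local-values hypothesis is exactly what makes that stabilization available.
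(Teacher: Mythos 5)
Your proposal is correct and follows essentially the same route as the paper's proof: both arguments rest on the non-increase of $G$ along the iterations (justified by the warm-start convention of the Remark), the finiteness of the set of label assignments, and the assumed finiteness of the local optimal values, the only difference being that you first stabilize the objective and then pigeonhole for a repeated $\mathbf{y}$, whereas the paper first extracts a constant-$\mathbf{y}$ subsequence and then finds two equal objective values within it. Your write-up is somewhat more careful than the paper's in making the monotonicity explicit and in ensuring that conditions (i) and (ii) are witnessed at a single iteration, but these are refinements of the same argument rather than a different one.
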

\begin{proof}
Consider sequence \eqref{Sequence}. Since there are a finite number of ways that the $m$ samples can be assigned to $k$ clusters, we can find a subsequence of $\mathbf{y}$ from \eqref{Sequence} in which the elements are the same. Based on the assumptions, there are two integers $t,p>0$ such that $G(\mathbf{y}^{(t)};W^{(t)},\mathbf{b}^{(t)})=G(\mathbf{y}^{(t)};W^{(t+p)},\mathbf{b}^{(t+p)})$, where $\mathbf{y}^{(t)}$ belongs to the above subsequence of $\mathbf{y}$. Since the objective $G$ is non-increasing in the iteration, it is invariable from the step $t$ to $t+p$. Therefore, the general model terminates before or at step $t+p$ under termination condition (iii).
\end{proof}

Generally speaking, the general model may also terminate in a finite number of steps with other termination conditions. However, the termination point obtained by the general model would be very different under different termination conditions. In fact, when the general model terminates, there should not be any other available points which make the objective function $G(\cdot)$ decrease.
To study the convergence of the general model further, we introduce two definitions.

\begin{definition}
($\mathbf{Local~optimal~point}$ by O.L. Mangasarian in \cite{Kplane}) Point $(\mathbf{y}^*;W^*,\mathbf{b}^*)$ is defined as the local optimal point to the function $G(\mathbf{y};W,\mathbf{b})$ if $\mathbf{y}^*$ is the global solution to the problem $\min~G(\mathbf{y};W^*,~\mathbf{b}^*)$, and meanwhile $(W^*,~\mathbf{b}^*)$ is the global solution to the problem $\min~G(\mathbf{y}^*;W,~\mathbf{b})$.
\end{definition}

\begin{definition}
($\mathbf{Weak~local~optimal~point}$) Point $(\mathbf{y}^{**};W^{**},\mathbf{b}^{**})$ is defined as the weak local optimal point to the function $G(\mathbf{y};W,\mathbf{b})$ if $\mathbf{y}^{**}$ is the global solution to the problem $\min~G(\mathbf{y};W^{**},~\mathbf{b}^{**})$, and meanwhile $(W^{**},~\mathbf{b}^{**})$ is a local solution to the problem $\min~G(\mathbf{y}^{**};W,~\mathbf{b})$.
\end{definition}

Now, we have the following two theorems.

\begin{theorem}
The general model with termination condition (i) or (ii) terminates at a local optimal point if the solution in step 2(a) means global solution.
\end{theorem}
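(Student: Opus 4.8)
\emph{Proof proposal.} The plan is to combine Theorem 2.2 (finite termination under conditions (i) and (ii)) with the observation that both sub-steps of the iteration are \emph{exact} minimizations of $G(\cdot)$ over their respective blocks of variables, and then to verify the two clauses of Definition 2.1 at the iterate where the model halts.

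Let $t$ be the iteration at which step 3 is reached, so the returned point is $(\mathbf{y}^*;W^*,\mathbf{b}^*)=(\mathbf{y}^{(t)};W^{(t)},\mathbf{b}^{(t)})$. The second clause of Definition 2.1 is immediate: by step 2(a), $(W^{(t)},\mathbf{b}^{(t)})$ is a global solution of $\min_{W,\mathbf{b}}G(\mathbf{y}^{(t)};W,\mathbf{b})$ by construction. Everything thus reduces to the first clause, that $\mathbf{y}^{(t)}$ be a global solution of $\min_{\mathbf{y}}G(\mathbf{y};W^{(t)},\mathbf{b}^{(t)})$. Two facts help here. First, step 2(b), through the sample-wise rule \eqref{Sub2}, always returns a \emph{global} minimizer $\mathbf{y}^{(t+1)}$ of $G(\mathbf{y};W^{(t)},\mathbf{b}^{(t)})$, since that problem separates over the $m$ samples. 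Second, along \eqref{Sequence} the two sub-steps give the non-increasing chain $G(\mathbf{y}^{(t)};W^{(t-1)},\mathbf{b}^{(t-1)})\ge G(\mathbf{y}^{(t)};W^{(t)},\mathbf{b}^{(t)})\ge G(\mathbf{y}^{(t+1)};W^{(t)},\mathbf{b}^{(t)})$, valid for every $t$. Hence it suffices to show $G(\mathbf{y}^{(t)};W^{(t)},\mathbf{b}^{(t)})=G(\mathbf{y}^{(t+1)};W^{(t)},\mathbf{b}^{(t)})$: then $\mathbf{y}^{(t)}$ attains the same value of $G(\cdot;W^{(t)},\mathbf{b}^{(t)})$ as the global minimizer $\mathbf{y}^{(t+1)}$, hence is itself a global minimizer.

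It remains to extract this equality from the termination condition. Under condition (i) there are $p<q$ with $\mathbf{y}^{(p)}=\mathbf{y}^{(q)}$; since step 2(a) selects the same plane system for equal label vectors and step 2(b) is deterministic with its stated tie-breaking, the sequence \eqref{Sequence}, and with it the non-increasing sequence of $G$-values, is periodic from index $p$ on, hence constant on that tail; therefore every inequality in the chain above is an equality for indices in the cycle, and as $t$ lies in the cycle this yields the required equality. Under condition (ii) the non-increasing sequence of $G$-values fails to strictly decrease, and the relevant equal consecutive pair is exactly the one above. In either case $\mathbf{y}^{(t)}$ globally solves $\min_{\mathbf{y}}G(\mathbf{y};W^{(t)},\mathbf{b}^{(t)})$, which, together with the global optimality of $(W^{(t)},\mathbf{b}^{(t)})$ for $\min_{W,\mathbf{b}}G(\mathbf{y}^{(t)};W,\mathbf{b})$, shows $(\mathbf{y}^*;W^*,\mathbf{b}^*)$ is a local optimal point in the sense of Definition 2.1.

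The real obstacle is not the algebra but the bookkeeping around termination. One must pin down exactly which consecutive pair of iterates witnesses the ``non-decrease'' in condition (ii); if it is a cluster-update pair, so that $G(\mathbf{y}^{(t)};W^{(t-1)},\mathbf{b}^{(t-1)})=G(\mathbf{y}^{(t)};W^{(t)},\mathbf{b}^{(t)})$, an extra step is needed --- using step 2(b) at iteration $t-1$ and the ``same plane for the same label vector'' convention --- to carry global optimality of $\mathbf{y}^{(t)}$ over from $(W^{(t-1)},\mathbf{b}^{(t-1)})$ to $(W^{(t)},\mathbf{b}^{(t)})$. One must also handle the early exit in step 2(c): when it fires, $(W^{(t)},\mathbf{b}^{(t)})$ globally solves $\min_{W,\mathbf{b}}G(\mathbf{y}^{(t+1)};W,\mathbf{b})$ and $\mathbf{y}^{(t+1)}$ globally solves $\min_{\mathbf{y}}G(\mathbf{y};W^{(t)},\mathbf{b}^{(t)})$, so $(\mathbf{y}^{(t+1)};W^{(t)},\mathbf{b}^{(t)})$ is at once a local optimal point, and one should check that the \emph{returned} triple $(\mathbf{y}^{(t)};W^{(t)},\mathbf{b}^{(t)})$ is one as well --- equivalently, that step 2(c) can fire only after condition (i) or (ii) is already in force --- whereupon the monotone-chain argument above applies unchanged.
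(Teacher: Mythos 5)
Your proposal is correct and follows essentially the same route as the paper: both invoke the finite termination of Theorem 2.2 and the resulting chain of equalities \eqref{Equals} to conclude that $\mathbf{y}^{(t)}$ attains the same objective value as the global minimizer $\mathbf{y}^{(t+1)}$ of $\min_{\mathbf{y}}G(\mathbf{y};W^{(t)},\mathbf{b}^{(t)})$, which together with the global optimality of $(W^{(t)},\mathbf{b}^{(t)})$ in step 2(a) verifies both clauses of Definition 2.1. You spell out the bookkeeping (which consecutive pair witnesses the non-decrease, the step 2(c) exit) more explicitly than the paper does, but the underlying argument is the same.
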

\begin{proof}
From the proof of Theorem 2.2, there is a finite number $t>0$ such that equations \eqref{Equals} hold. Thus, the point $(\mathbf{y}^{(t)};W^{(t)},\mathbf{b}^{(t)})$ is a local optimal point and $(\mathbf{y}^{(t)};W^{(t)},\mathbf{b}^{(t)})=(\mathbf{y}^{(t+1)};W^{(t+1)},\mathbf{b}^{(t+1)})$. Then, the general model terminates at step $t+1$ under termination condition (i) or (ii).
\end{proof}

\begin{theorem}
Suppose the number of the local solutions or the local optimal values to the problem $\underset{W,\mathbf{b}}{\min}~G(\mathbf{y};W,\mathbf{b})$ is finite.
The general model with termination condition (iii) terminates at a weak local optimal point if the solution in step 2(a) means local solution.
\end{theorem}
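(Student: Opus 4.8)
The plan is to verify that the point returned by the general model satisfies Definition 2.2, i.e.\ to check the two requirements for a weak local optimal point $(\mathbf{y}^{**};W^{**},\mathbf{b}^{**})$: (A) $\mathbf{y}^{**}$ is a global solution of $\min_{\mathbf{y}}G(\mathbf{y};W^{**},\mathbf{b}^{**})$, and (B) $(W^{**},\mathbf{b}^{**})$ is a local solution of $\min_{W,\mathbf{b}}G(\mathbf{y}^{**};W,\mathbf{b})$. Requirement (B) is almost immediate: it is exactly the defining property of the cluster-update step 2(a) at the terminal iteration, since by hypothesis ``the solution in step 2(a)'' is understood as a local solution. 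So the real content lies in (A).

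For (A) I would reuse the finite-termination argument of Theorem 2.3 (whose finiteness hypothesis is the same as the one assumed here). From its proof, under termination condition (iii) there exist finite integers $t,p>0$ such that $\mathbf{y}^{(t)}=\mathbf{y}^{(t+p)}$ lies in a constant subsequence of the label iterates and $G(\mathbf{y}^{(t)};W^{(t)},\mathbf{b}^{(t)})=G(\mathbf{y}^{(t)};W^{(t+p)},\mathbf{b}^{(t+p)})$. Along the interleaved iteration sequence \eqref{Sequence} the objective $G$ is non-increasing (the cluster-update inequality $G(\mathbf{y}^{(t)};W^{(t-1)},\mathbf{b}^{(t-1)})\ge G(\mathbf{y}^{(t)};W^{(t)},\mathbf{b}^{(t)})$ from the remark following Model~1, together with the cluster-assignment inequality $G(\mathbf{y}^{(t+1)};W^{(t)},\mathbf{b}^{(t)})\le G(\mathbf{y}^{(t)};W^{(t)},\mathbf{b}^{(t)})$ coming from step 2(b)). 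A non-increasing sequence whose value at step $t+p$ equals its value at step $t$ must be constant at every node in between; in particular, passing through the first cluster-assignment node after step $t$, $G(\mathbf{y}^{(t)};W^{(t)},\mathbf{b}^{(t)})=G(\mathbf{y}^{(t+1)};W^{(t)},\mathbf{b}^{(t)})$.

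Now I would invoke step 2(b) once more: $\mathbf{y}^{(t+1)}$ is, by construction, a global minimizer of $G(\mathbf{y};W^{(t)},\mathbf{b}^{(t)})$; since $\mathbf{y}^{(t)}$ attains the same objective value, $\mathbf{y}^{(t)}$ is also a global minimizer, which is precisely (A) for the point $(\mathbf{y}^{(t)};W^{(t)},\mathbf{b}^{(t)})$. Together with (B), which holds at that point by step 2(a) at iteration $t$, Definition 2.2 shows that $(\mathbf{y}^{(t)};W^{(t)},\mathbf{b}^{(t)})$ is a weak local optimal point; and by the finite-termination part (Theorem 2.3) the model has, by step $t+p$, detected termination condition (iii) via step 2(c) or 2(d) and outputs this very point in step 3. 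Hence the model terminates at a weak local optimal point, the argument paralleling that of Theorem 2.4 with ``global'' in step 2(a) weakened to ``local'' and termination condition (i)/(ii) replaced by (iii).

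The step I expect to be the main obstacle is showing that the constancy of $G$ over the window $[t,t+p]$ genuinely propagates across the cluster-assignment transition $\mathbf{y}^{(t)}\rightarrow\mathbf{y}^{(t+1)}$, which is what lets one ``transfer'' the global minimality of $\mathbf{y}^{(t+1)}$ back onto $\mathbf{y}^{(t)}$; this relies only on the monotonicity of $G$ along \eqref{Sequence} plus the elementary fact that any point attaining the minimum value of a function is itself a minimizer, so no new estimate beyond Theorem 2.3 is needed. A minor point to keep straight is the case $\mathbf{y}^{(t)}=\mathbf{y}^{(t+1)}$, in which (A) is trivial, and the bookkeeping that step 3 returns $(\mathbf{y}^{(t)},W^{(t)},\mathbf{b}^{(t)})$ rather than the shifted point $(\mathbf{y}^{(t+1)},W^{(t)},\mathbf{b}^{(t)})$.
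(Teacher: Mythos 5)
Your proposal is correct and follows essentially the same route as the paper's own proof: it reuses the finite-termination window $[t,t+p]$ from Theorem 2.3, uses the monotone non-increase of $G$ to force $G(\mathbf{y}^{(t)};W^{(t)},\mathbf{b}^{(t)})=G(\mathbf{y}^{(t+1)};W^{(t)},\mathbf{b}^{(t)})$, and transfers the global minimality of $\mathbf{y}^{(t+1)}$ in step 2(b) back to $\mathbf{y}^{(t)}$, while the local-solution property of step 2(a) supplies the second half of Definition 2.2. The only cosmetic difference is that the paper additionally invokes the tie-breaking uniqueness of step 2(b) to assert $\mathbf{y}^{(t)}=\mathbf{y}^{(t+1)}$ outright, a point you note as minor bookkeeping.
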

\begin{proof}
From the proof of Theorem 2.3, there are two finite integers $t,p>0$ such that $G(\mathbf{y}^{(t)};W^{(t)},\mathbf{b}^{(t)})=G(\mathbf{y}^{(t)};W^{(t+p)},\mathbf{b}^{(t+p)})$. Due to the non-increase of the objective $G$ in the iteration, equations \eqref{Equals} hold and $\mathbf{y}^{(t)}=\mathbf{y}^{(t+p)}$. Note that $\mathbf{y}^{(t+1)}$ is the global solution to the problem $\underset{\mathbf{y}}{\min}~G(\mathbf{y};W^{(t)},\mathbf{b}^{(t)})$, and $G(\mathbf{y}^{(t)};W^{(t)},\mathbf{b}^{(t)})$ also attains the same optimal value. It shows that $\mathbf{y}^{(t)}$ is also the global solution to the above problem. Thus, $\mathbf{y}^{(t)}=\mathbf{y}^{(t+1)}$ holds because of the uniqueness of the assigned labels guaranteed in step 2(b), and then $(\mathbf{y}^{(t)};W^{(t)},\mathbf{b}^{(t)})$ is a weak local optimal point. Therefore, the conclusion holds by Theorem 2.3.
\end{proof}

\section{Reorganization of the plane-based clustering methods}
In this section, we show that the general model yields current plane-based clustering methods by selecting different deviation formations and loss functions.
\subsection{kPC}
kPC \cite{Kplane} is the first plane-based clustering method. It starts with a random assignment of the samples. Then, for the $j$th cluster ($j=1,\ldots,k$), its cluster center \eqref{Center} requires the samples be along with it by solving the following problem
\begin{equation}\label{kPC1}
\begin{array}{l}
\underset{w_j,b_j}{\min}~~\sum\limits_{\begin{subarray}{c}
i=1\\
    y(x_i)=j
  \end{subarray}}^m(w_j^\top x_i+b_j)^2.\\
  \text{s.t.}~~~~||w_j||=1.
\end{array}
\end{equation}
When the $k$ cluster centers are obtained, the samples are reassigned to $k$ clusters by
\begin{equation}\label{Predict-base}
\begin{array}{l}
y(x_i)=\underset{j}{\arg}\min \frac{|w_j^\top x_i+b_j|}{||w_j||}.
\end{array}
\end{equation}
The cluster centers and the samples' labels are updated alternately until termination condition (i) or (ii) is satisfied.

\begin{figure*}[htbp]
\centering
\subfigure[kPC]{\includegraphics[width=0.23\textheight]{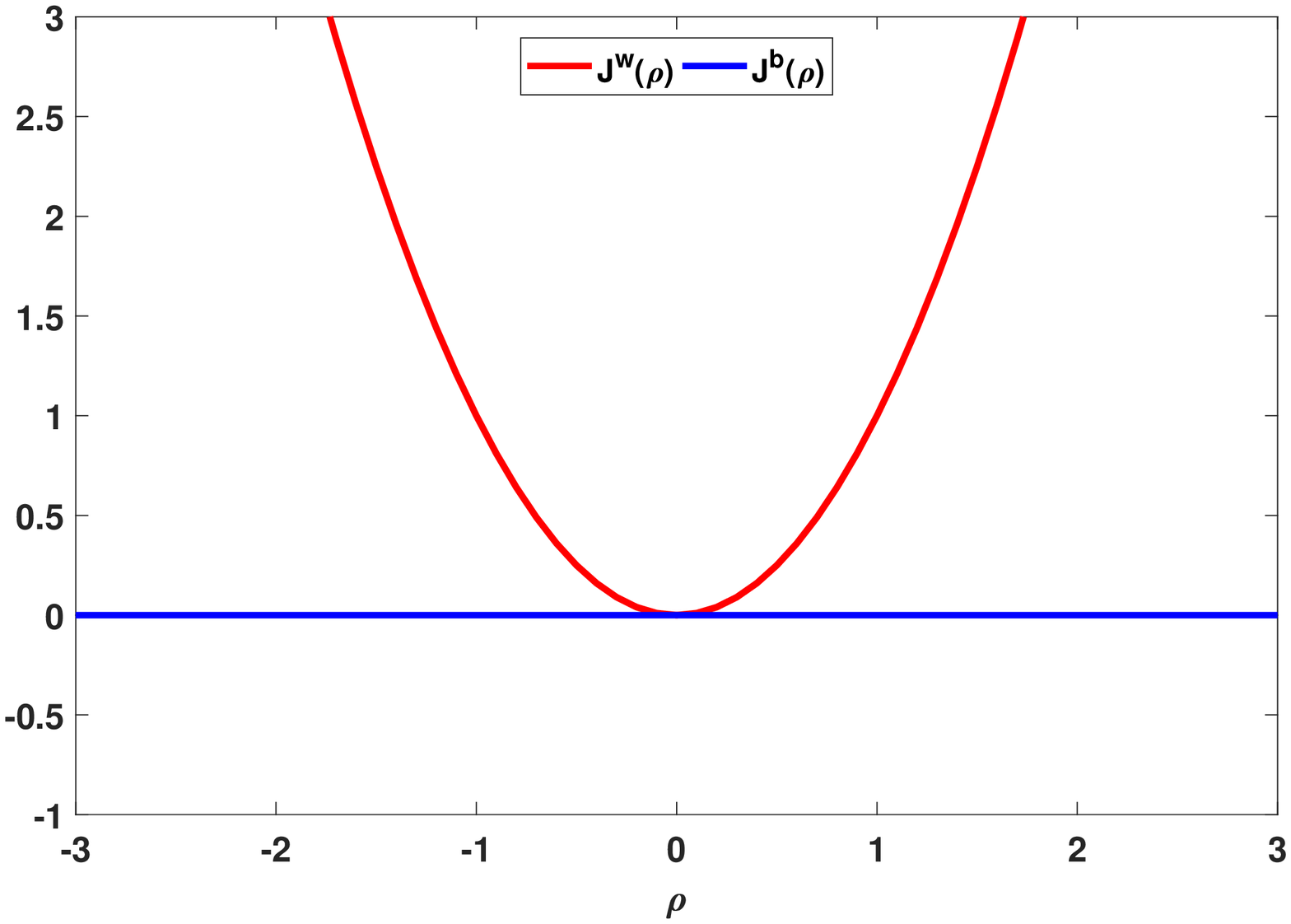}}
    \subfigure[PPC]{\includegraphics[width=0.23\textheight]{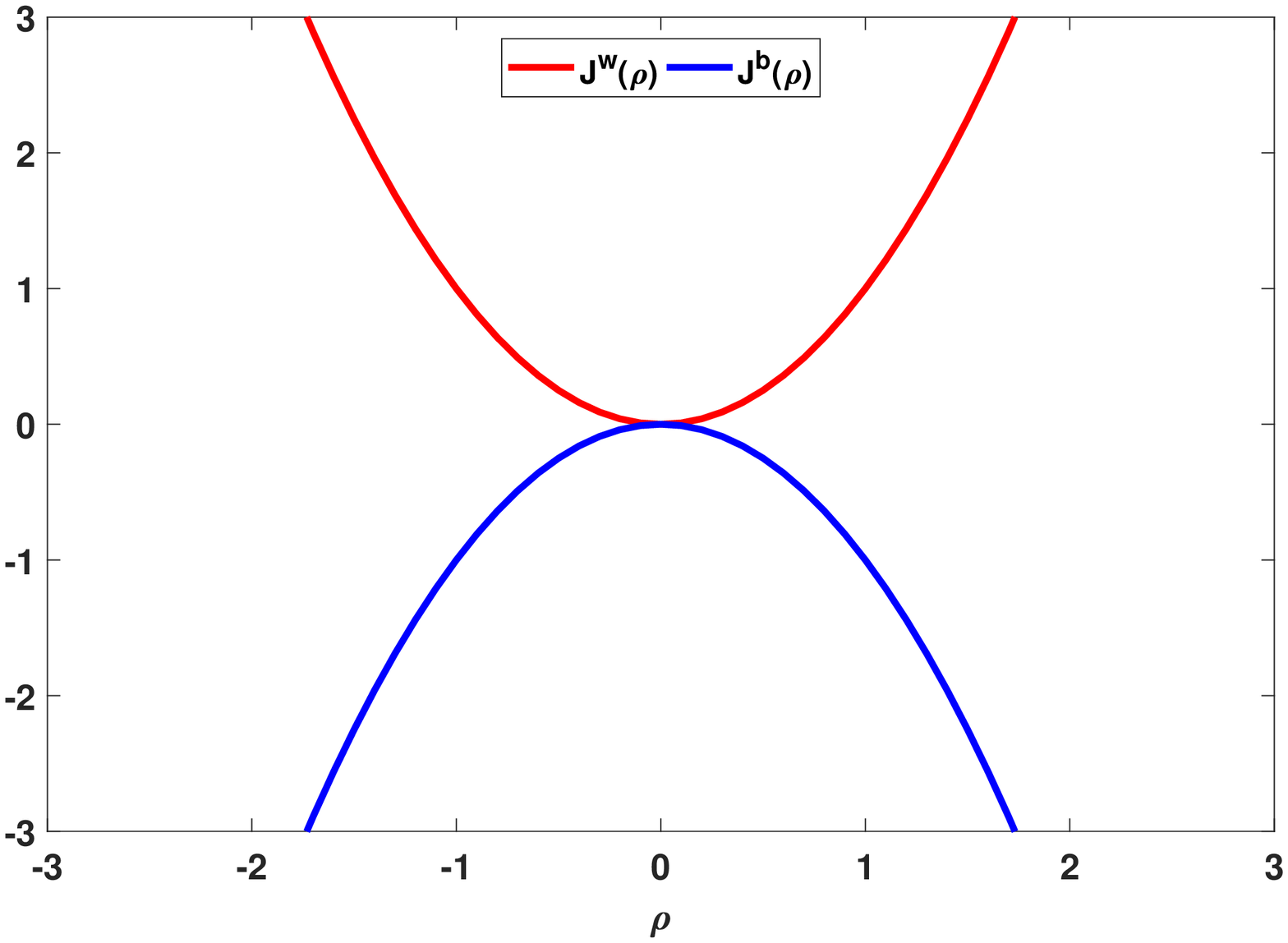}}
    \subfigure[TWSVC]{\includegraphics[width=0.23\textheight]{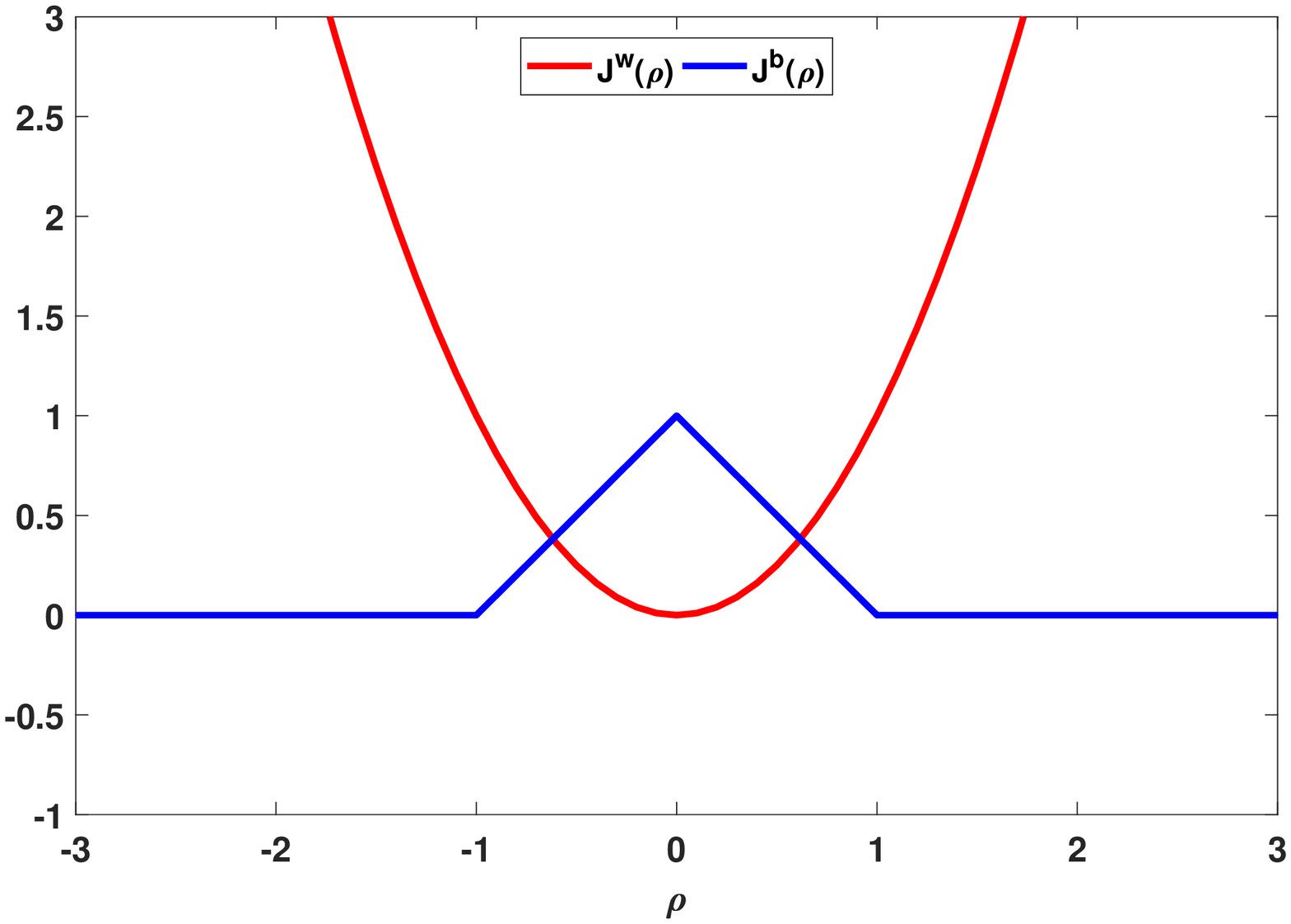}}
    \subfigure[RTWSVC]{\includegraphics[width=0.23\textheight]{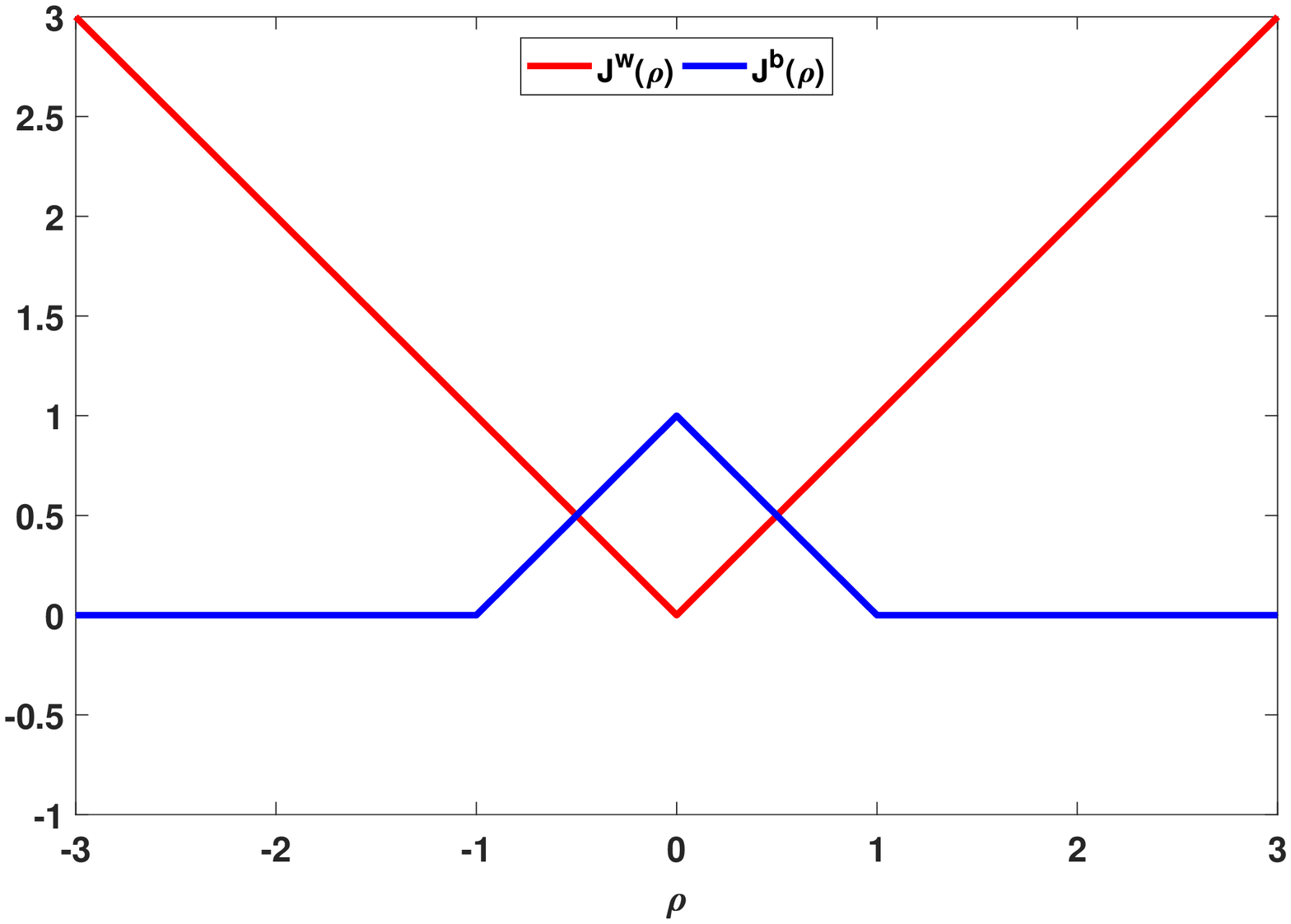}}
    \subfigure[FRTWSVC]{\includegraphics[width=0.23\textheight]{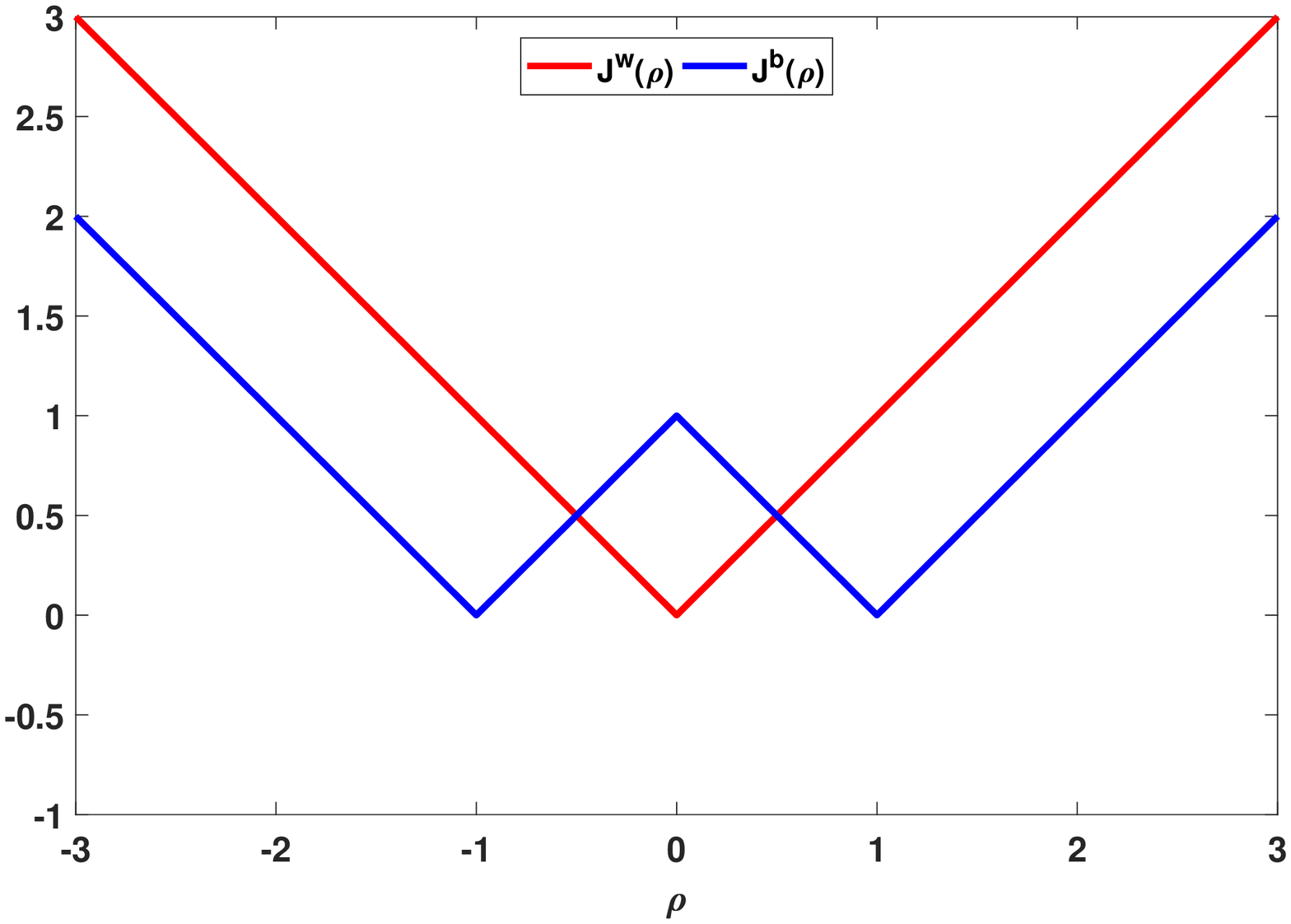}}
    \subfigure[RampTWSVC]{\includegraphics[width=0.23\textheight]{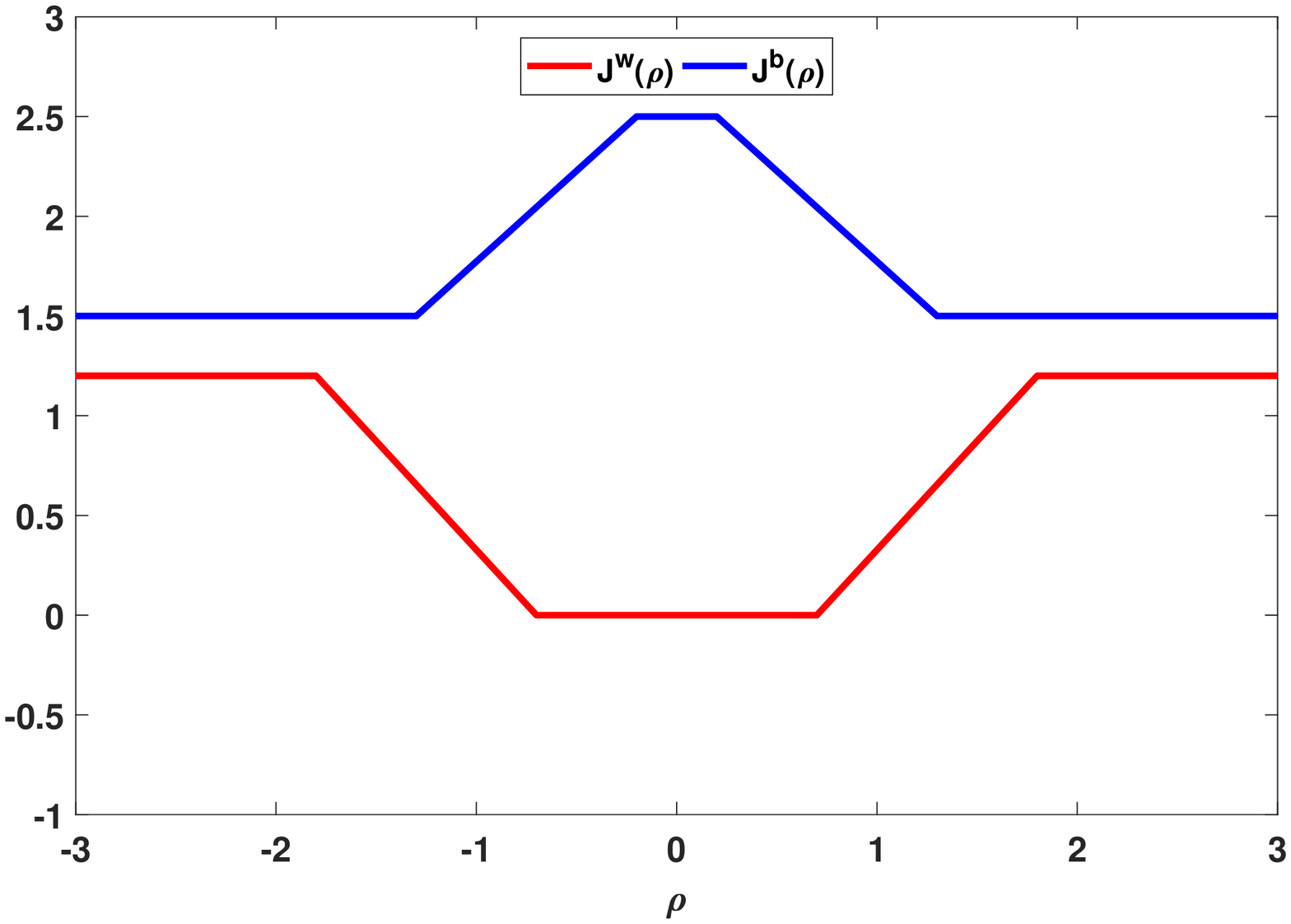}}
\caption{Within-cluster function $J^w(\rho)$ and between-cluster function $J^b(\rho)$ selected in different plane-based clustering methods, where $c_w=c_b=1,~\Delta=0.3$, and $s=-0.2$.}\label{LossFig}
\end{figure*}

To organize kPC by the general model, we select $f(x;w_j,b_j)=\frac{w_j^\top x+b_j}{||w_j||}$ and hire the within-cluster function $J_{1}^w(\rho)=\rho^2$ and between-cluster function $J_{1}^b(\rho)=0$ with $c_w=1$ (see Fig. \ref{LossFig}(i)). Thus, the loss of the sample $x_i$ ($i=1,\ldots,m$) is
\begin{equation}\label{LosskPC}
\begin{array}{l}
L_1(y(x_i);F(x_i;W,\mathbf{b}))=(w_{y(x_i)}^\top x_i+b_{y(x_i)})^2.
\end{array}
\end{equation}
Without any difficulty, we can use the general model to generate a plane-based clustering method by using the loss function \eqref{LosskPC}. By setting $\tilde{w}_j=w_j/||w_j||$ and $\tilde{b}_j=b_j/||w_j||$, problem \eqref{Sub1} solved in the general model is equivalent to problem \eqref{kPC1} in kPC. Since $J_{1}^w(\rho)$ and $J_{1}^b(\rho)$ satisfy the conditions of Theorem 2.1, it is easy to conclude that kPC is consistent with the general model by the loss function \eqref{LosskPC}.
The global solution to problem \eqref{kPC1} can be obtained by solving an eigenvalue problem, and we immediately conclude that kPC finitely terminates at a local optimal point by Theorem 2.4 (this finite termination has been proven by Mangasarian, see Theorem 7 in \cite{Kplane}).

It is worth to notice that kPC only considers the discriminative information from within-cluster. The following PPC was proposed by introducing the discriminative information from between-cluster.

\subsection{PPC}
The procedure of PPC \cite{PPC,kPPC} is similar to kPC, where the only difference is the stage of reconstructing the cluster centers. PPC considers the samples from the current cluster should close to its cluster center, and  meanwhile the samples from different clusters should be far away from it. The $j$th ($j=1,\ldots,k$) cluster center plane is obtained by solving following problem
\begin{equation}\label{PPC1}
\begin{array}{l}
\underset{w_j,b_j}{\min}~~\sum\limits_{\begin{subarray}{c}
i=1\\
    y(x_i)=j
  \end{subarray}}^m(w_j^\top x_i+b_j)^2-c\sum\limits_{\begin{subarray}{c}
i=1\\
    y(x_i)\neq j
  \end{subarray}}^m(w_j^\top x_i+b_j)^2\\
\text{s.t.}~~~~||w_j||=1,
\end{array}
\end{equation}
where $c$ is a positive parameter.

Similarly, to organize PPC by the general model, we select $f(x;w_j,b_j)=\frac{w_j^\top x+b_j}{||w_j||}$ and hire the functions $J_{2}^w(\rho)=\rho^2$ and $J_{2}^b(\rho)=-\rho^2$ with $c_w=1,~c_b=c$ (see Fig. \ref{LossFig}(ii)). Thus, the loss of the sample $x_i$ ($i=1,\ldots,m$) is
\begin{equation}\label{LossPPC}
\begin{array}{l}
L_2(y(x_i);F(x_i;W,\mathbf{b}))=(w_{y(x_i)}^\top x_i+b_{y(x_i)})^2-\\c\sum\limits_{\begin{subarray}{c}
j=1\\
    j\neq y(x_i)
  \end{subarray}}^k(w_j^\top x_i+b_j)^2.
\end{array}
\end{equation}
Obviously, $J_{2}^w(\rho)$ and $J_{2}^b(\rho)$ satisfy the conditions of Theorem 2.1. Therefore, PPC can be regarded as the general model by using the loss function \eqref{LossPPC}.
Since the global solution to problem \eqref{PPC1} can be obtained by solving an eigenvalue problem, we can immediately conclude that PPC finitely terminates at a local optimal point by Theorem 2.4, which was not provided previously. By the loss function \eqref{LossPPC}, it can be seen that PPC uses $L_2$ norm to measure the discriminative information from between-cluster, which may be sensitive with noise or outliers.

\subsection{TWSVC}
To reduce the influence of the noise and outliers, TWSVC \cite{TWSVC} makes the samples from different clusters far away from the cluster center to a certain distance. The $j$th ($j=1,\ldots,k$) cluster center is considered from following problem
\begin{equation}\label{TWSVC1}
\begin{array}{l}
\underset{w_j,b_j,\xi_i}{\min}~~\sum\limits_{\begin{subarray}{c}
i=1\\
    y(x_i)=j
  \end{subarray}}^m(w_j^\top x_i+b_j)^2
+c\sum\limits_{\begin{subarray}{c}
i=1\\
    y(x_i)\neq j
  \end{subarray}}^m\xi_i\\
\text{s.t.}~~~~|w_j^\top x_i+b_j|\geq1-\xi_i,\xi_i\geq0,y(x_i)\neq j,\\
~~~~~~~~i=1,\ldots,m,
\end{array}
\end{equation}
where $\xi_i\in R$ is a slack variable.

By selecting $f(x;w_j,b_j)=w_j^\top x+b_j$ and hiring the functions $J_{3}^w(\rho)=\rho^2$ and $J_{3}^b(\rho)=(1-|\rho|)_+$ with $c_w=1$, $c_b=c$ (see Fig. \ref{LossFig}(iii)), the loss of the sample $x_i$ ($i=1,\ldots,m$) is
\begin{equation}\label{LossTWSVC}
\begin{array}{l}
L_3(y(x_i);F(x_i;W,\mathbf{b}))=(w_{y(x_i)}^\top x_i+b_{y(x_i)})^2+\\c\sum\limits_{\begin{subarray}{c}
j=1\\
    j\neq y(x_i)
  \end{subarray}}^k(1-|w_j^\top x_i+b_j|)_+,
\end{array}
\end{equation}
where $(\cdot)_+$ replaces the negative value with zero.
Obviously, $J_{3}^w(\rho)$ and $J_{3}^b(\rho)$ satisfy the conditions of Theorem 2.1. Therefore, TWSVC can be regarded as our general model by using the loss function \eqref{LossTWSVC} except a slight difference in the solution to problem \eqref{TWSVC1}, which is obtained independently. It is worth to mention that if TWSVC is implemented by the general model strictly, it would terminate in a finite number of steps at a weak local optimal point by Theorem 2.5.

\begin{figure*}[htb]
\begin{center}{
\subfigure{
\resizebox*{8cm}{!}
{\includegraphics{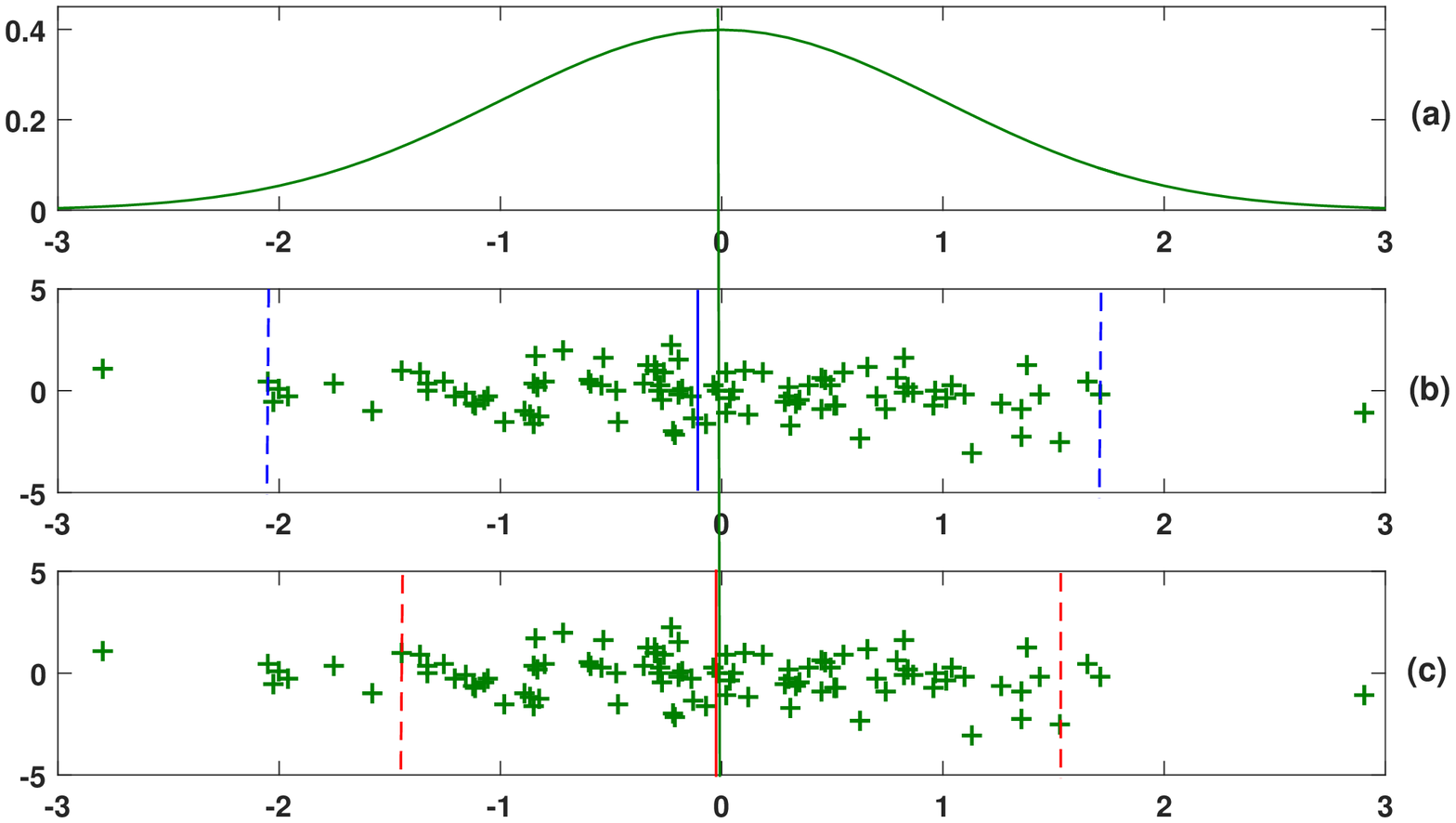}}}\hspace{5pt}
\subfigure{
\resizebox*{8cm}{!}
{\includegraphics{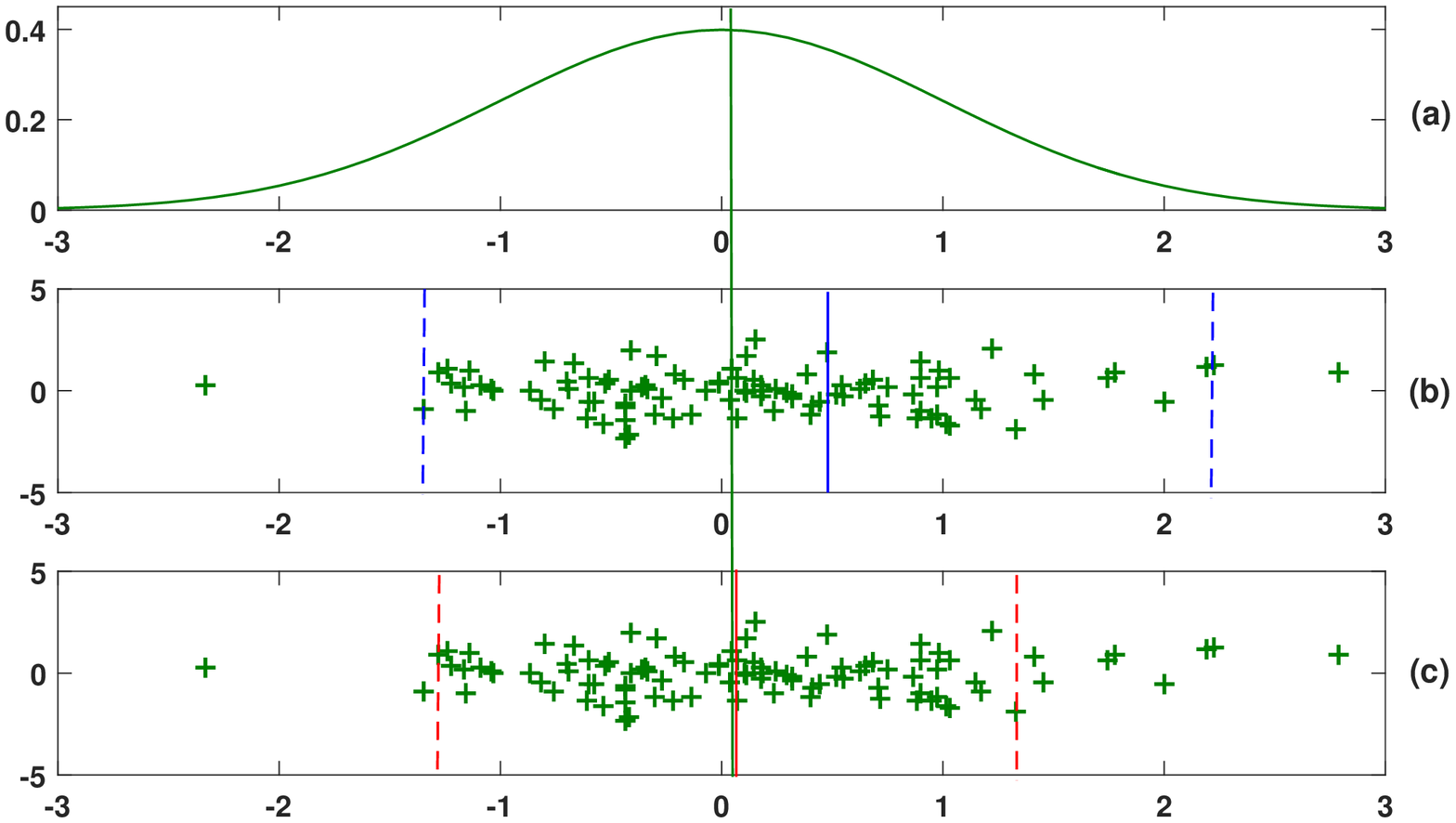}}}\hspace{5pt}
\caption{Illustration of RampTWSVC and RFDPC on two groups of 100 samples from the same data distribution. The vertical green line is the mean of data samples. (a) is the image of normal distribution $N(0,1)$, (b) is 100 samples from the distribution (where the solid blue lines are the centers by $J_6^w(\rho)$ used in RampTWSVC), and (c) is 100 samples the same as (b) (where the solid red lines are the centers by $J_7^w(\rho)$ used in our RFDPC).}\label{RFDPCfig}
}
\end{center}
\end{figure*}

\subsection{Extensions on TWSVC}
There are several extensions on TWSVC due to its stable performance. For instance, RTWSVC \cite{RTWSVC} replaces $L_2$ norm with $L_1$ norm in the within-cluster function, resulting in decreasing the influence of the noise and outliers further. Another extension FRTWSVC \cite{RTWSVC} uses a least squares formation to accelerate the learning speed. The third extension RampTWSVC \cite{RampTWSVC} introduces the ramp loss function into TWSVC to further decrease the influence of the noise and outliers from both within-cluster and between-cluster. They construct the cluster centers by different optimization problems. By selecting $f(x;w_j,b_j)=w_j^\top x+b_j$, we summarize their within-cluster, between-cluster and loss functions (see Fig. \ref{LossFig}(iv)-(vi)) as follows.
\begin{equation}\label{loss3}
\begin{array}{l}
\text{RTWSVC:}\\
J_{4}^w(\rho)=|\rho|,~~~~J_{4}^b(\rho)=(1-|\rho|)_+,\\
L_4(y(x_i);F(x_i;W,\mathbf{b}))=|w_{y(x_i)}^\top x_i+b_{y(x_i)}|+\\c\sum\limits_{\begin{subarray}{c}
j=1\\
    j\neq y(x_i)
  \end{subarray}}^k(1-|w_j^\top x_i+b_j|)_+;\\
\text{FRTWSVC:}\\
J_{5}^w(\rho)=|\rho|,~~~~J_{5}^b(\rho)=|1-|\rho||,\\
L_5(y(x_i);F(x_i;W,\mathbf{b}))=|w_{y(x_i)}^\top x_i+b_{y(x_i)}|+\\c\sum\limits_{\begin{subarray}{c}
j=1\\
    j\neq y(x_i)
  \end{subarray}}^k|1-|w_j^\top x_i+b_j||;\\
\text{and RampTWSVC:}\\
J_6^w(\rho)=\left\{
\begin{array}{ll}
0&\text{if~} |\rho|\leq 1-\Delta\\
|\rho|-1+\Delta& \text{if~}1-\Delta< |\rho|<2-\Delta-s\\
1-s&\text{if~} |\rho|\geq2-\Delta-s
\end{array}\right.,\\
J_6^{b}(\rho)=\left\{
\begin{array}{ll}
2+2\Delta&\text{if~}|\rho|\leq-s\\
-|\rho|+2+2\Delta-s&\text{if~} -s< |\rho|<1+\Delta\\
1+\Delta-s&\text{if~} |\rho|\geq1+\Delta
\end{array}\right.,\\
L_6(y(x_i);F(x_i;W,\mathbf{b}))=J_6^w(f_{y(x_i)}(x_i))+c\sum\limits_{\begin{subarray}{c}
j=1\\
    j\neq y(x_i)
  \end{subarray}}^kJ_6^b(f_j(x_i)),
\end{array}
\end{equation}
where $\Delta\in[0,1]$, $s\in(-1,0]$ are the user defined constants.

By substituting these loss functions \eqref{loss3} into our general model, it is easy to get
the optimization problems of RTWSVC, FRTWSVC and RampTWSVC. In theory, RTWSVC, FRTWSVC and RampTWSVC would terminate in a finite number of steps at the weak local optimal points if they are implemented by the general model strictly. The details are omitted.

\section{RFDPC}
In this section, we introduce a new loss function fluctuated with the dataset, and then propose our robust fitting distribution planes for clustering (RFDPC) based on the general model.

Let us start from the efficient RampTWSVC \cite{RampTWSVC}. Its ability to reduce the influence of the noise and outliers is manifested in Fig. 1. However,
for the case of the samples from the same distribution, RampTWSVC may obtain very different cluster centers, leading bias from the data distribution. For instance, in Fig. \ref{RFDPCfig}, there are two groups of samples from $N(0,1)$ (i.e., left and right three columns). RampTWSVC obtains two centers, depicted by solid blue lines in Fig. \ref{RFDPCfig}(b), are very different from each other.

To capture the data distribution, we introduce the 1-order and 2-order statistics \cite{STATbook} of the cluster into the within-cluster function and propose a new within-cluster function as
\begin{equation}\label{RwRFDPC}
\begin{array}{ll}
J_{7}^w(f(x;w_j,b_j))=&J_6^w(f(x;w_j,b_j))+\frac{\gamma_1}{c_w}\bar{f}(x;w_j,b_j)^2\\
&+\frac{\gamma_2}{c_w}\tilde{f}(x;w_j,b_j),
\end{array}
\end{equation}
where $\gamma_1,\gamma_2$ are positive parameters. $\bar{f}(x;w_j,b_j)=\frac{1}{|N|}f(x;w_j,b_j)$ and $\tilde{f}(x;w_j,b_j)=\frac{1}{|N|-1}(f(x;w_j,b_j)-\frac{1}{|N|}\sum\limits_{y(x_i)\in N}f(x_i;w_j,b_j))^2$, where $N$ is the index set of the $j$th cluster that $x$ belongs to and $|N|$ denotes the sample number of this cluster. In other words, $\bar{f}(x;w_j,b_j)$ and $\tilde{f}(x;w_j,b_j)$ are the corresponding parts in the mean and variance of the $j$th cluster with $j=1,\ldots,k$. The additional statistics in \eqref{RwRFDPC} mean that a sample $x$ assigned to a cluster would lead additional losses: (i) loss derived from the mean deviation, i.e., the deviation of the sample from the statistical center; (ii) loss derived from the variance of deviation, i.e., the deviation proportionality. Minimizing these statistics would make the cluster center close to the highest density region and the samples be uniformly distributed along with the cluster center. Fig. \ref{RFDPCfig}(c) shows the result by new function \eqref{RwRFDPC}.

Then, by setting the between-cluster function $J_7^b(\rho)=J_6^b(\rho)$, the loss function of RFDPC becomes
\begin{equation}\label{lossRFDPC}
\begin{array}{ll}
L_7(y(x_i),F(x_i))=&c_wJ_7^w(f(x_i;w_{y(x_i)},b_{y(x_i)}))\\
&+c_b\sum\limits_{\begin{subarray}{c}
j=1\\
    j\neq y(x_i)
  \end{subarray}}^kJ_7^b(f(x_i;w_j,b_j)),
\end{array}
\end{equation}
where $f(x;w_j,b_j)=w_j^\top x+b_j$.

By introducing a $L_2$ regularization term, the subproblem in step 2(a) is considered as
\begin{equation}\label{RFDPC1}
\begin{array}{l}
\underset{w_j,b_j}{\min}~~c_w\sum\limits_{\begin{subarray}{c}
i=1\\
    y^{(t)}(x_i)=j
  \end{subarray}}^m J_7^w(f(x_i;w_j,b_j))+\\c_b\sum\limits_{\begin{subarray}{c}
i=1\\
    y^{(t)}(x_i)\neq j
  \end{subarray}}^m J_7^b(f(x_i;w_j,b_j))+\frac{1}{2}(||w_j||^2+b_j^2),
\end{array}
\end{equation}
and its local solution can be obtained by the concave-convex procedure (CCCP) \cite{CCCP}.

It should be pointed out that the cluster assignment \eqref{Sub2} can be replaced by the simplified assignment \eqref{Predict}, though the function $J_7^w(\rho)$ does not satisfy properties (i)-(iii).
\begin{theorem}
In RFDPC, the sample assignment \eqref{Sub2} can be simplified as \eqref{Predict}.
\end{theorem}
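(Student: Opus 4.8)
The plan is to imitate the proof of Theorem~2.1. Fix an arbitrary sample $x$ and abbreviate $f_j=f(x;w_j,b_j)=w_j^\top x+b_j$ for $j=1,\dots,k$; let $l^*$ be the label returned by \eqref{Predict}, so $|f_{l^*}|=\min_{j}|f_j|$, and let $l$ be any competing label. In $L_7(l^*;F(x))$ and $L_7(l;F(x))$ the summand $c_b\sum_{j=1}^{k}J_7^b(f_j)$ occurs identically, since $J_7^b=J_6^b$ carries no cluster statistics; subtracting this common, $l$-independent quantity, the rule \eqref{Sub2} reduces to minimising $\phi(l):=c_wJ_7^w(f_l)-c_bJ_7^b(f_l)$ over $l$, so it suffices to show $\phi(l^*)\le\phi(l)$. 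I would then split, using \eqref{RwRFDPC},
\[
\phi(l)=\underbrace{c_wJ_6^w(f_l)-c_bJ_6^b(f_l)}_{A(f_l)}+\underbrace{\gamma_1\,\bar f(x;w_l,b_l)^2+\gamma_2\,\tilde f(x;w_l,b_l)}_{B(l)},
\]
and bound $A$ and $B$ separately.

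For $A$ the argument is routine: reading off the piecewise formulas in \eqref{loss3}, $J_6^w$ and $J_6^b$ are even and continuous, $J_6^w$ is zero, then affine increasing, then constant on $[0,\infty)$ (hence non-decreasing) and $J_6^b$ is constant, then affine decreasing, then constant on $[0,\infty)$ (hence non-increasing), so they do satisfy properties (i)--(iii). Since $|f_{l^*}|\le|f_l|$, property (ii) gives $J_6^w(f_{l^*})\le J_6^w(f_l)$ and property (iii) gives $J_6^b(f_l)\le J_6^b(f_{l^*})$; with $c_w,c_b>0$ this yields $A(f_{l^*})\le A(f_l)$. This is exactly the chain of inequalities in \eqref{Sub2-1}--\eqref{Sub2-2}, now carried out for $J_6^w,J_6^b$ in place of the generic $J^w,J^b$.

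The term $B$ is the main obstacle, and is precisely the reason $J_7^w$ fails properties (i)--(iii): $\bar f(x;w_j,b_j)^2=f_j^2/|N_j|^2$ is rescaled by the candidate-dependent cluster size, and $\tilde f(x;w_j,b_j)=(f_j-\mu_j)^2/(|N_j|-1)$ is moreover recentred by the cluster mean $\mu_j=\frac1{|N_j|}\sum_{y(x_i)\in N_j}f(x_i;w_j,b_j)$, so $B$ is not an even function of $f_j$ and $|f_{l^*}|\le|f_l|$ does not transfer to it for free. To close this gap I would use the cluster-update step \eqref{RFDPC1}: since the sample variance in $\tilde f$ is translation invariant, the stationarity condition in $b_j$ only couples $J_6^w$, $J_6^b$, the $\bar f^2$ term and the regulariser $\tfrac12 b_j^2$, and the aim is to deduce from it that the cluster mean $\mu_j$ attached to $(w_j^{(t)},b_j^{(t)})$ is negligible --- ideally zero --- so that $\bar f(x;w_j,b_j)^2$ and $\tilde f(x;w_j,b_j)$ each collapse to $f_j^2$ divided by a positive constant, which is even and non-decreasing in $|f_j|$. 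Under that identification $|f_{l^*}|\le|f_l|$ gives $B(l^*)\le B(l)$, hence $\phi(l^*)\le\phi(l)$, and the assignment \eqref{Sub2} coincides with \eqref{Predict}. I expect verifying that the statistical surcharge is, after this bookkeeping, genuinely an even and $|f_j|$-monotone quantity --- i.e.\ controlling the recentring by $\mu_j$ --- to be the delicate point, and that is where I would concentrate the effort.
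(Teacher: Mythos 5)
Your reduction is sound and is exactly the paper's: the between-cluster sum $c_b\sum_{j}J_7^b(f_j)$ is common to all candidate labels, so \eqref{Sub2} reduces to minimising $c_wJ_7^w(f_l)-c_bJ_7^b(f_l)$, and the part involving $J_6^w$ and $J_6^b$ is handled by the evenness/monotonicity argument of Theorem~2.1, since both functions satisfy properties (i)--(iii). Where you stop, however, is precisely where the proof has to be finished: you never establish $B(l^*)\le B(l)$ for the statistical surcharge $\gamma_1\bar f^2+\gamma_2\tilde f$, and your proposed repair does not close the gap. First, the stationarity condition of \eqref{RFDPC1} in $b_j$ does not force the cluster mean $\mu_j$ of the deviations to vanish: the objective contains the flat and kinked pieces of $J_6^w$, the between-cluster terms $J_6^b$ evaluated at samples of \emph{other} clusters, and the regulariser $\tfrac12 b_j^2$, all of which shift the optimal $b_j$ away from the value that would centre the within-cluster deviations. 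Second, even granting $\mu_j=0$, the surcharge collapses to $\bigl(\gamma_1/|N_l|^2+\gamma_2/(|N_l|-1)\bigr)f_l^2$ with a \emph{cluster-dependent} positive constant; since the candidate clusters $l^*$ and $l$ generally have different sizes, $|f_{l^*}|\le|f_l|$ does not imply the required inequality unless $|N_{l^*}|\ge|N_l|$, which nothing guarantees. So the argument, as proposed, would fail on both counts.

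For comparison, the paper's own proof dispatches this step with the single assertion that ``smaller $|f|$ leads smaller $\bar f^2$ and smaller $\tilde f$,'' i.e.\ it treats $\bar f^2$ and $\tilde f$ as if they were even, $|f|$-monotone functions of the deviation alone. That assertion is exactly what your analysis shows to be problematic: $\bar f^2=f_j^2/|N_j|^2$ is rescaled by the candidate cluster size and $\tilde f$ is recentred by $\mu_j$, so the claim is false in general (take $\mu_l=f_l$ so that $\tilde f$ vanishes at the \emph{larger} deviation, or take $|N_l|\gg|N_{l^*}|$). You have therefore correctly located the weak point of the published argument rather than overlooked an idea the paper supplies; but your write-up is a proof sketch with an acknowledged hole, not a proof, and the hole is real. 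To make the theorem correct as stated one would need an additional hypothesis (e.g.\ equal cluster sizes and centred deviations) or a reformulation of the statistics in \eqref{RwRFDPC} that removes the cluster-dependent normalisation and recentring.
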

\begin{proof}
Suppose $l^*$ is the label of an arbitrary sample $x$ obtained by \eqref{Predict}, and $l$ is an arbitrary label of $x$. From the proof of Theorem 2.1, we just need to prove $J_7^w(f(x;w_{l^*},b_{l^*}))\leq J_7^w(f(x;w_l,b_l))$ and $J_7^b(f(x;w_l,b_l))\leq J_7^b(f(x;w_{l^*},b_{l^*}))$.

Note that $\gamma_1$ and $\gamma_2$ are positive parameters. Since smaller $|f|$ leads smaller $\bar{f}^2$ and smaller $\tilde{f}$, and since $J_6^w(\rho)$ is non-decreasing in $[0,\infty)$, the inequality $J_7^w(f(x;w_{l^*},b_{l^*}))\leq J_7^w(f(x;w_l,b_l))$ holds.
Noticing that $J_7^b(\rho)$ satisfies properties (i)-(iii), the inequality $J_7^b(f(x;w_l,b_l))\leq J_7^b(f(x;w_{l^*},b_{l^*}))$ holds. Therefore, the conclusion is obtained.
\end{proof}

In addition, our RFDPC hires termination condition (iii), and thus it terminates in a finite number of steps at a weak local optimal point by Theorem 2.5.

\section{Experimental results}
In this section, we analyze the performance of our RFDPC compared with some state-of-the-art partition-based clustering methods on several artificial and benchmark datasets. All the methods were implemented by MATLAB2017 on a PC with an Intel Core Duo Processor (double 4.2 GHz) with 16GB RAM.
In the experiments, we used the metrics accuracy (AC) \cite{TWSVC} and mutual information (MI) \cite{MI} to measure the performance of these methods.

\begin{table}
\caption{Details of the synthetic datasets} \centering
\begin{tabular}{l}
\hline
\begin{tabular}{lllll}
\multirow{2}*{Dataset}&\multicolumn{4}{c}{Group}\\ \cline{2-5}
&G1&G2&G3&G4\\
\hline
No. of samples&120&100&80&60\\
No. of dimensions&3&3&3&3
\end{tabular}\\
\hline
\hline
\begin{tabular}{llll}
\multirow{2}*{Distribution}&\multicolumn{3}{c}{Class}\\ \cline{2-4}
&1&2&3\\
\hline
Coordinate x&$\mathcal{N}$(1,1)&$\mathcal{N}$(3,1)&$\mathcal{N}$(2,1)\\
Coordinate y&1&1&$\mathcal{N}$(1,1)\\
Coordinate z&-x+1&x-1&0
\end{tabular}\\
\hline
\end{tabular} \label{Art}\\
\end{table}

\begin{figure*}
\centering
    \subfigure[kPC (Group G1)]{\includegraphics[width=0.168\textheight]{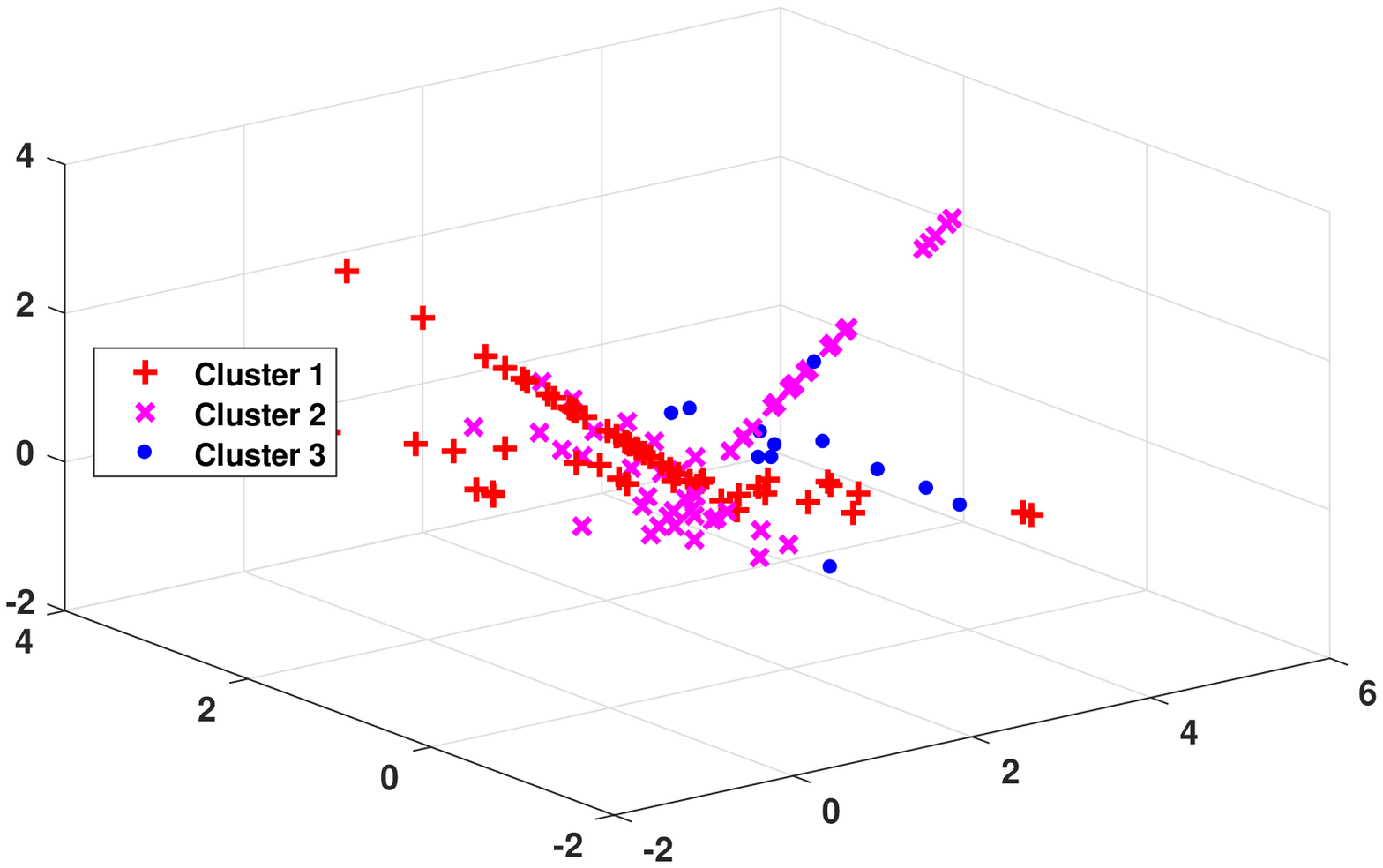}}
    \subfigure[kPC (Group G2)]{\includegraphics[width=0.168\textheight]{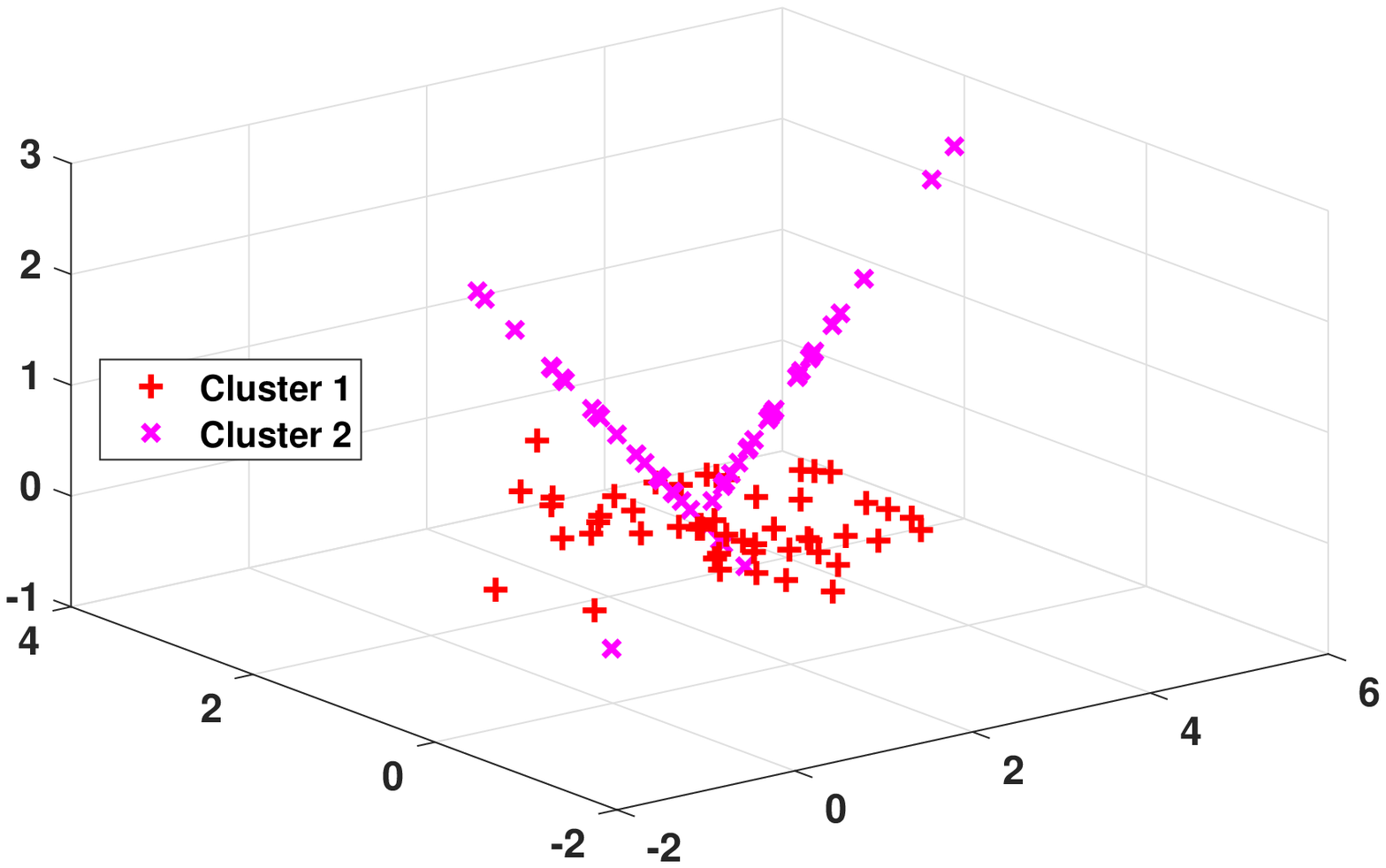}}
    \subfigure[kPC (Group G3)]{\includegraphics[width=0.168\textheight]{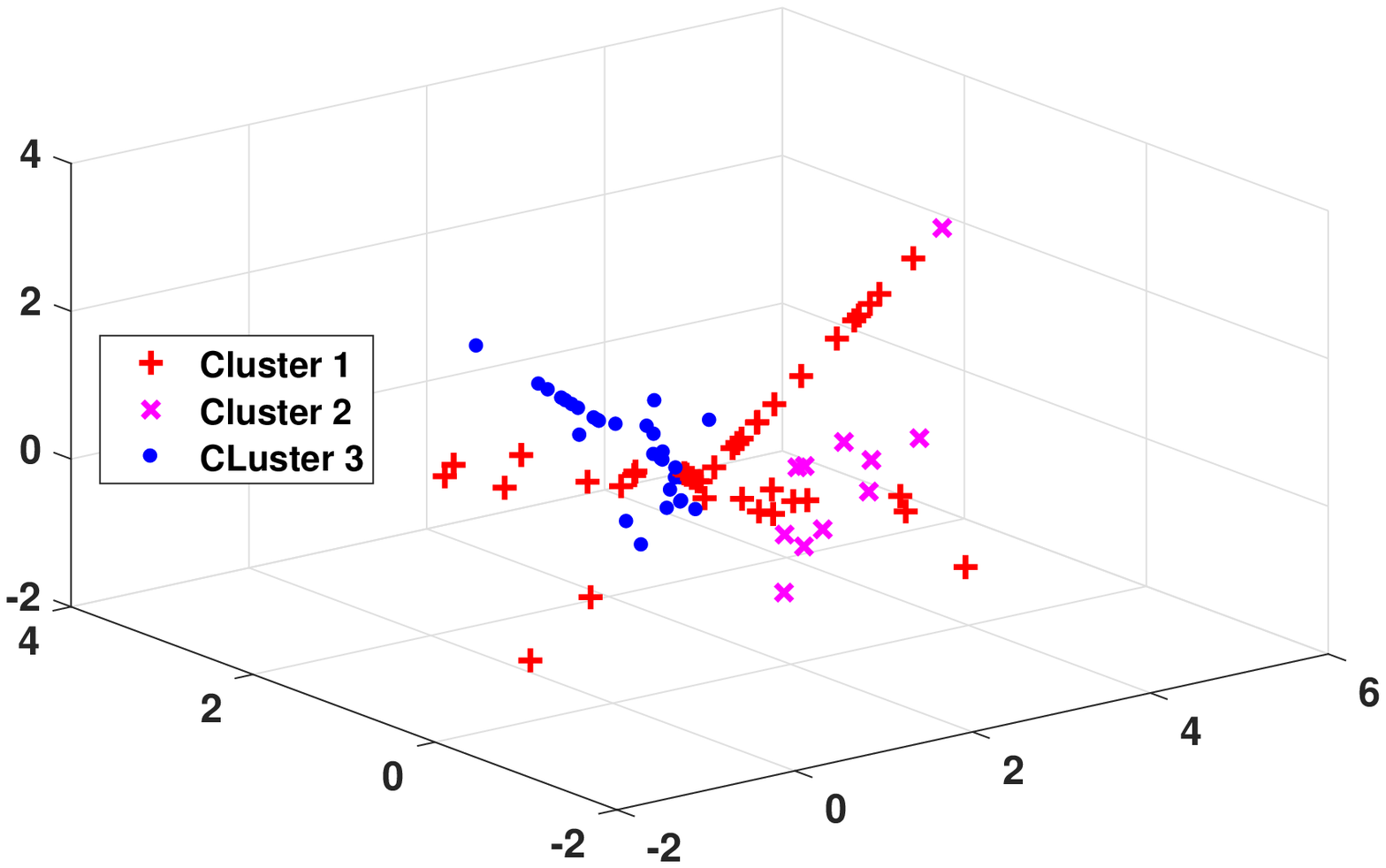}}
    \subfigure[kPC (Group G4)]{\includegraphics[width=0.168\textheight]{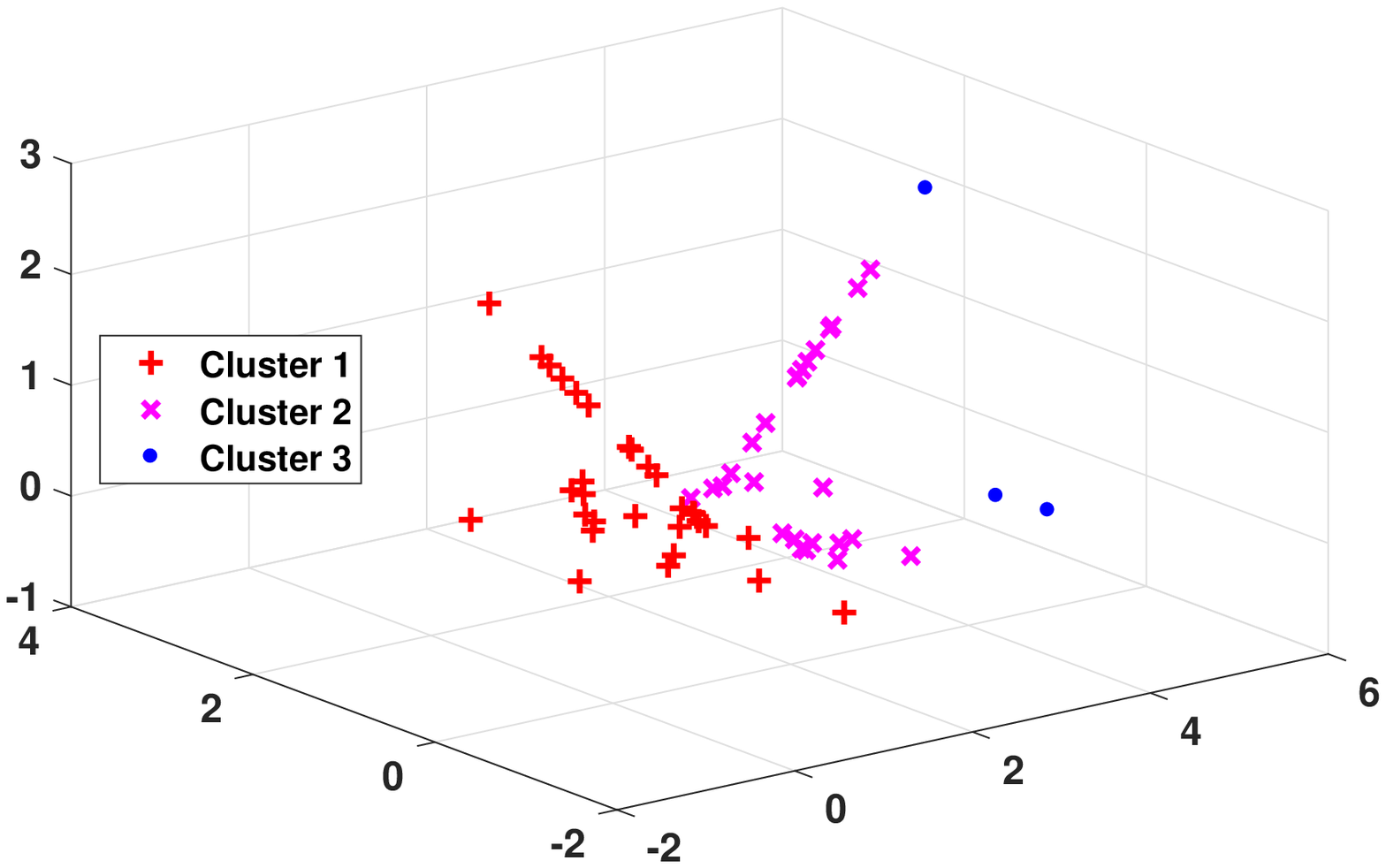}}
    \subfigure[PPC (Group G1)]{\includegraphics[width=0.168\textheight]{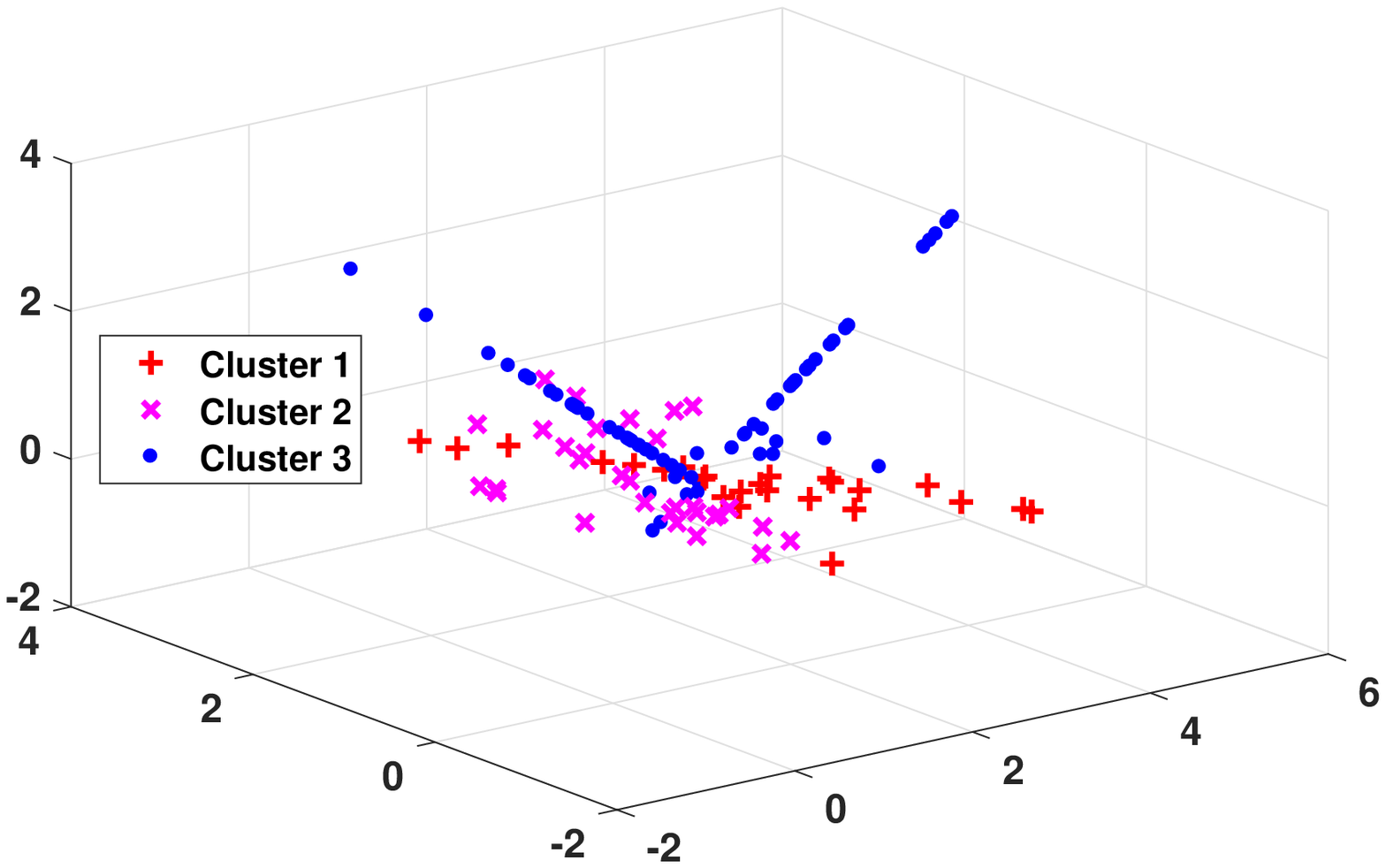}}
    \subfigure[PPC (Group G2)]{\includegraphics[width=0.168\textheight]{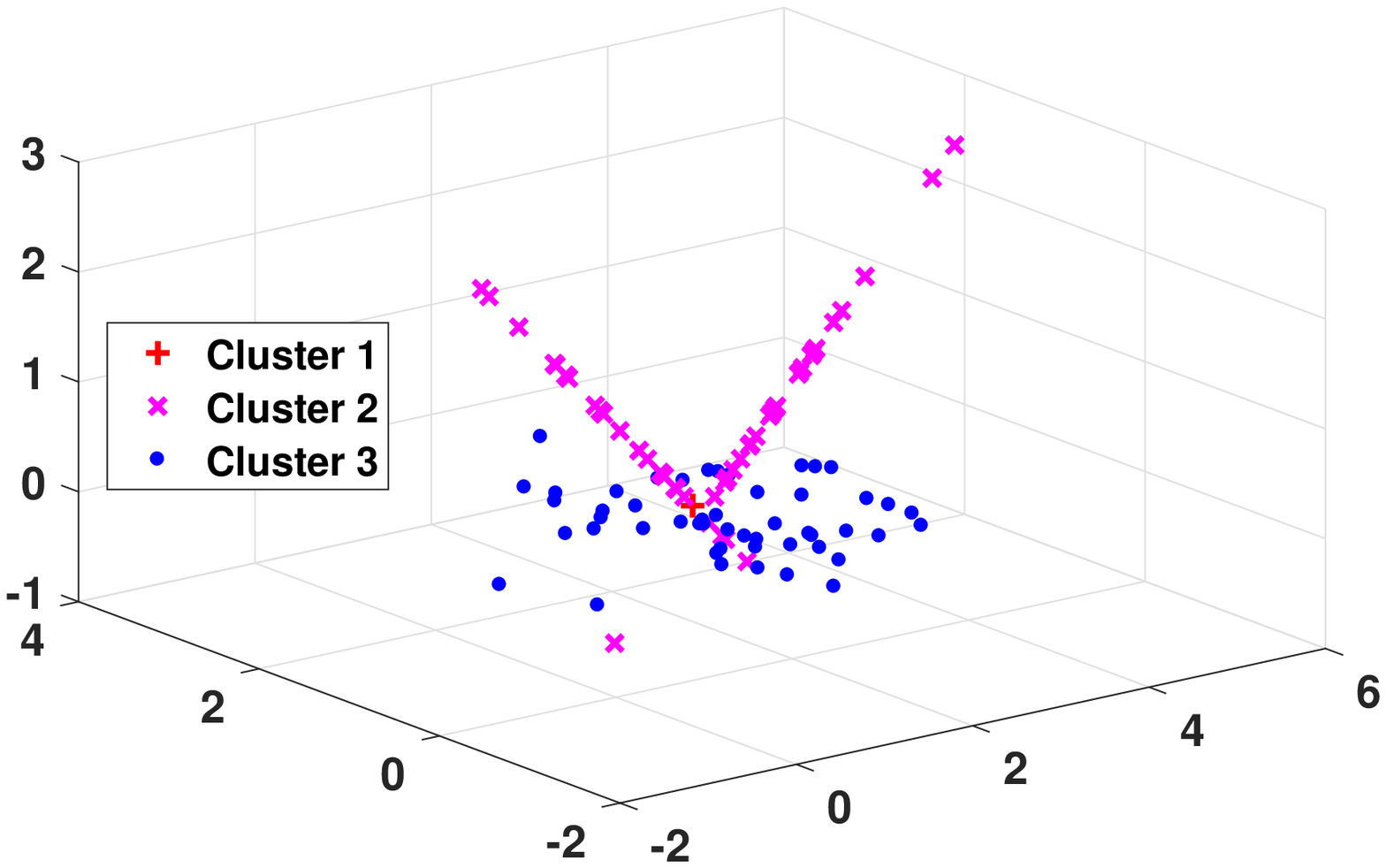}}
    \subfigure[PPC (Group G3)]{\includegraphics[width=0.168\textheight]{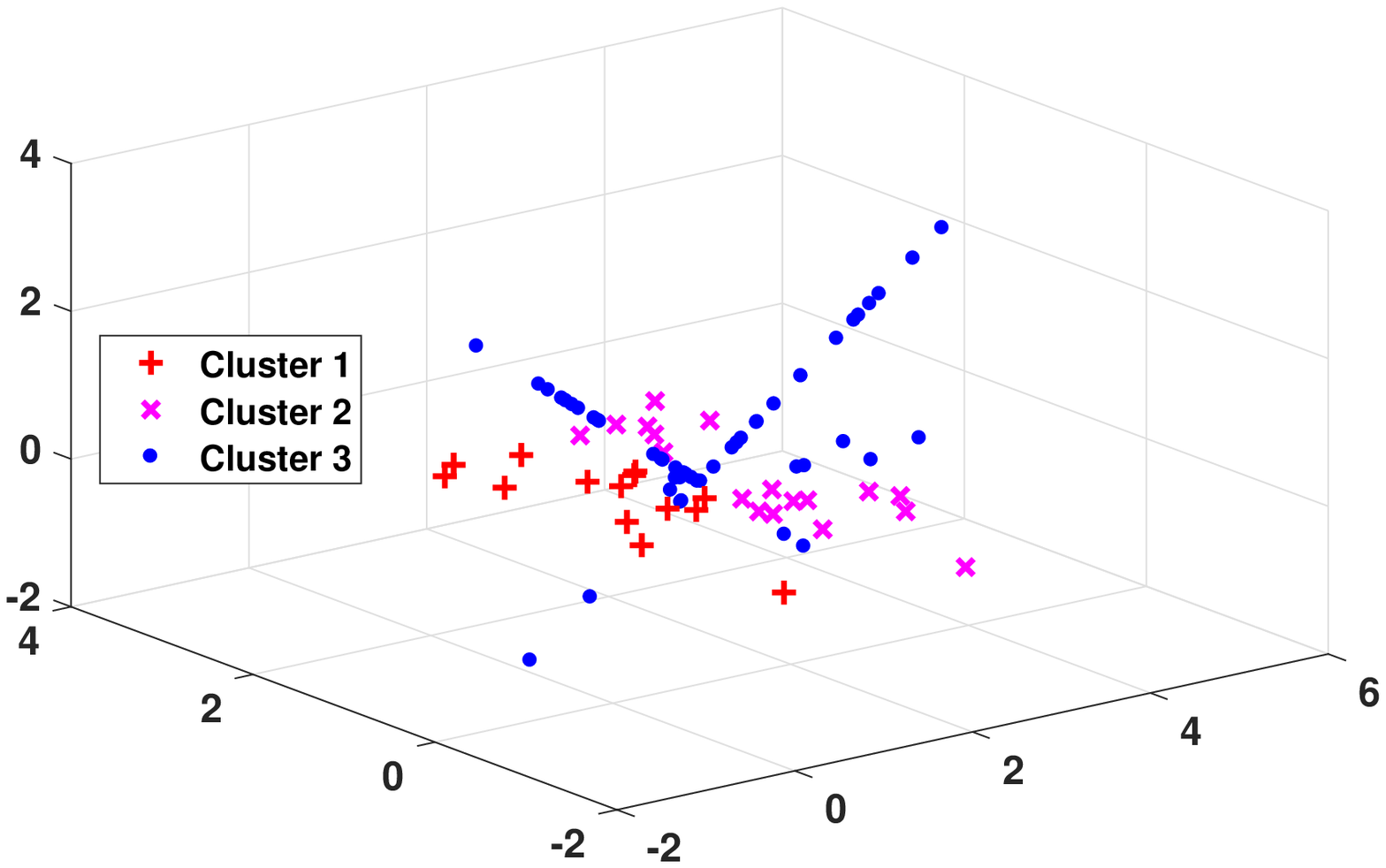}}
    \subfigure[PPC (Group G4)]{\includegraphics[width=0.168\textheight]{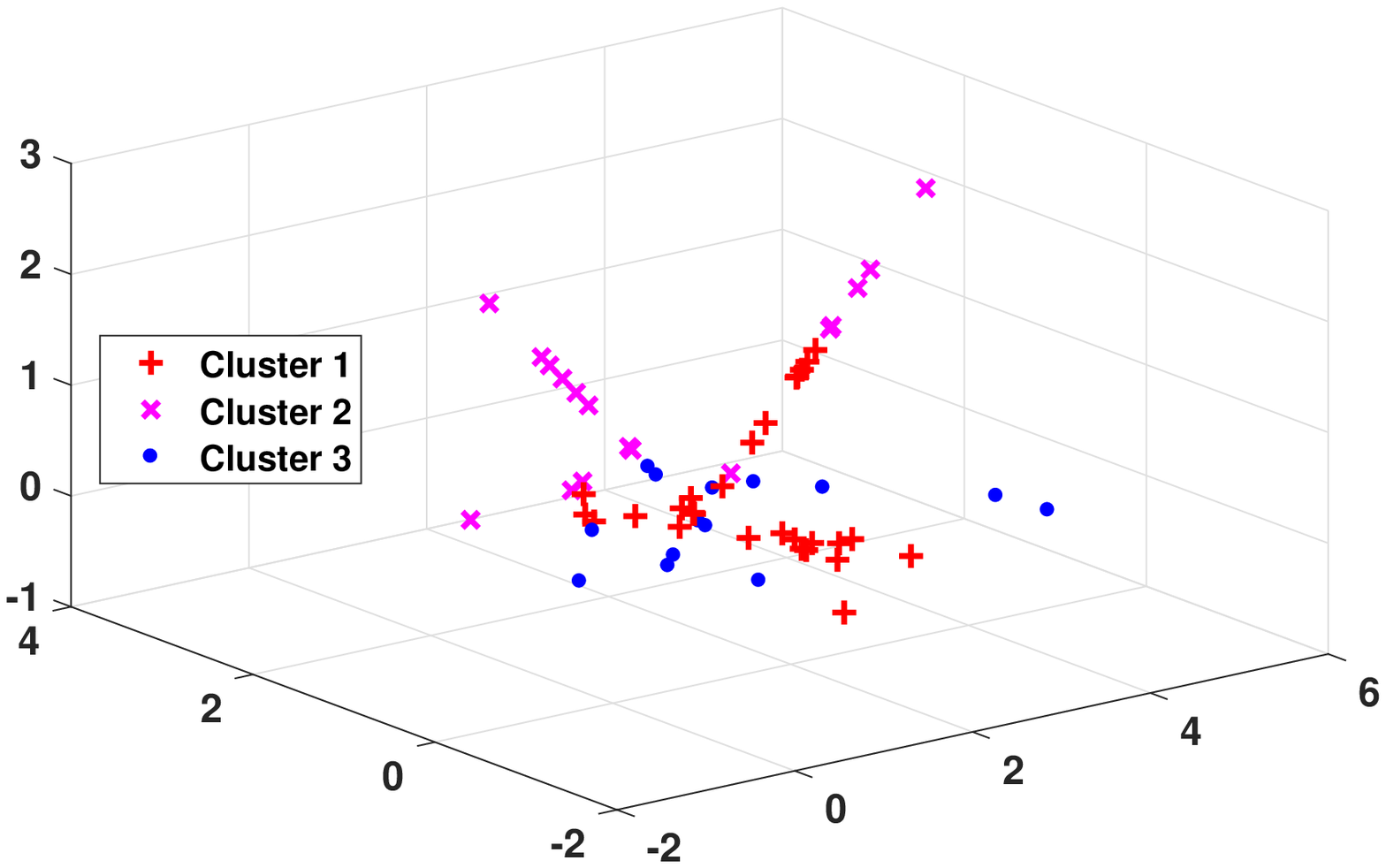}}
    \subfigure[TWSVC (Group G1)]{\includegraphics[width=0.168\textheight]{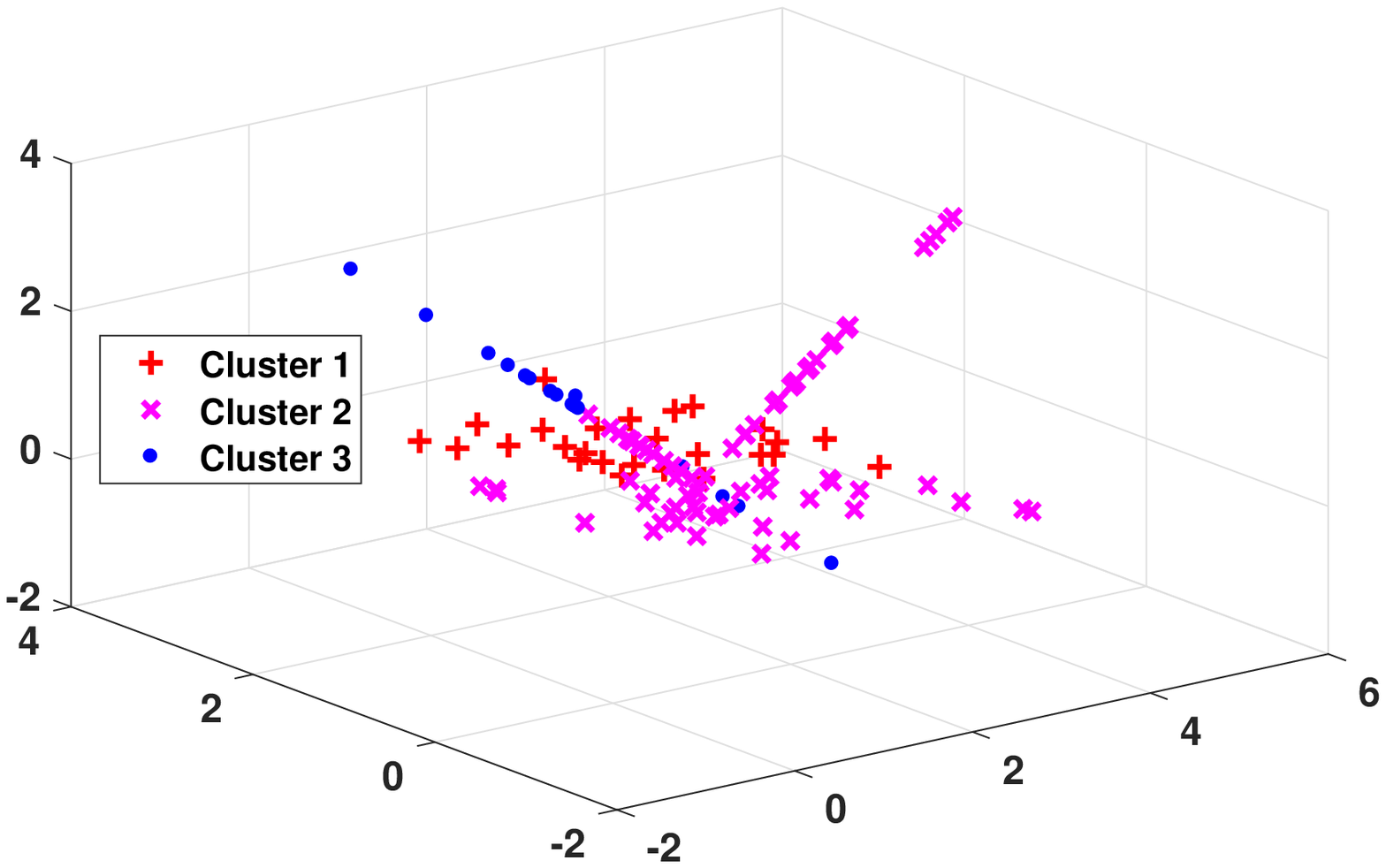}}
    \subfigure[TWSVC (Group G2)]{\includegraphics[width=0.168\textheight]{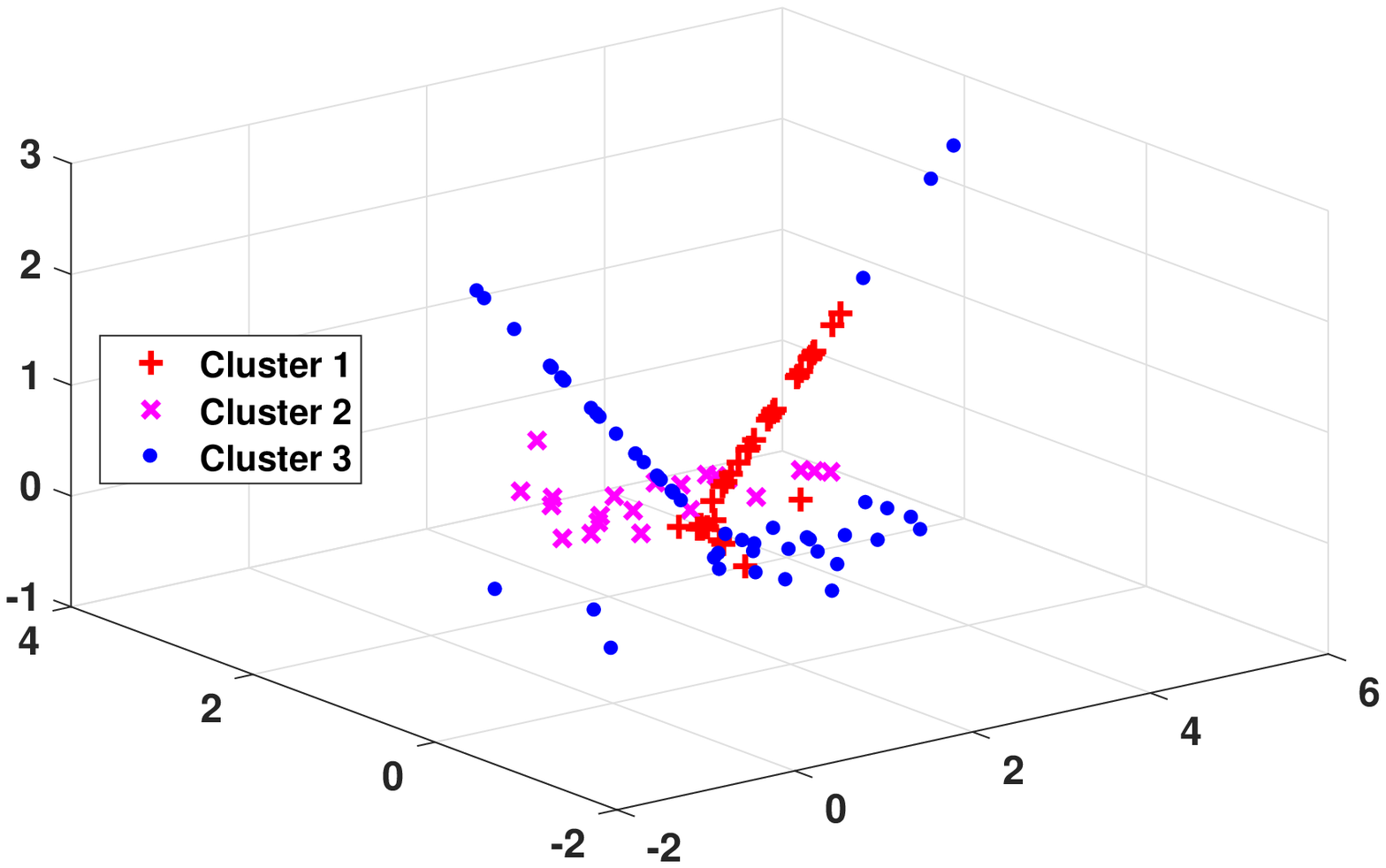}}
    \subfigure[TWSVC (Group G3)]{\includegraphics[width=0.168\textheight]{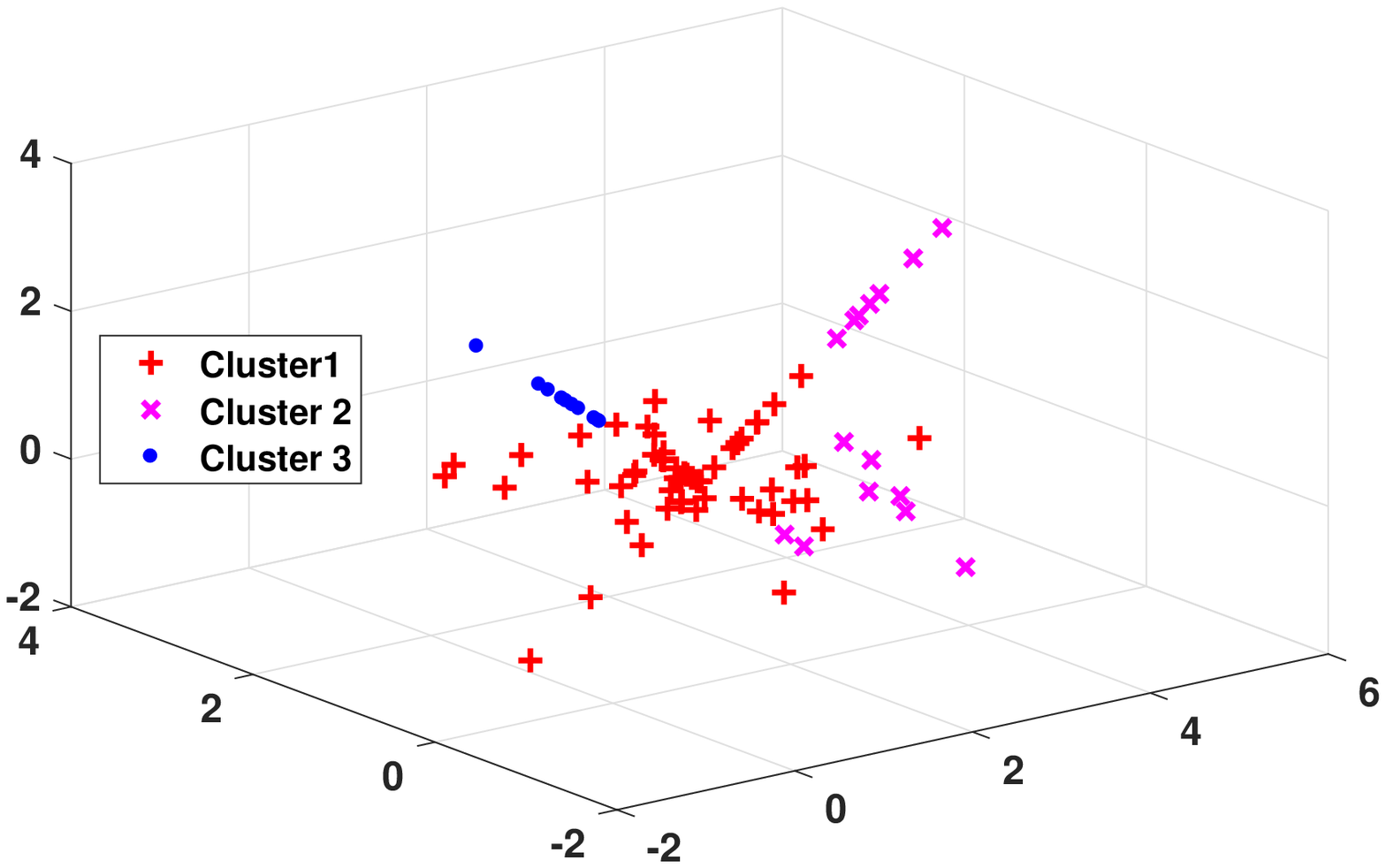}}
    \subfigure[TWSVC (Group G4)]{\includegraphics[width=0.168\textheight]{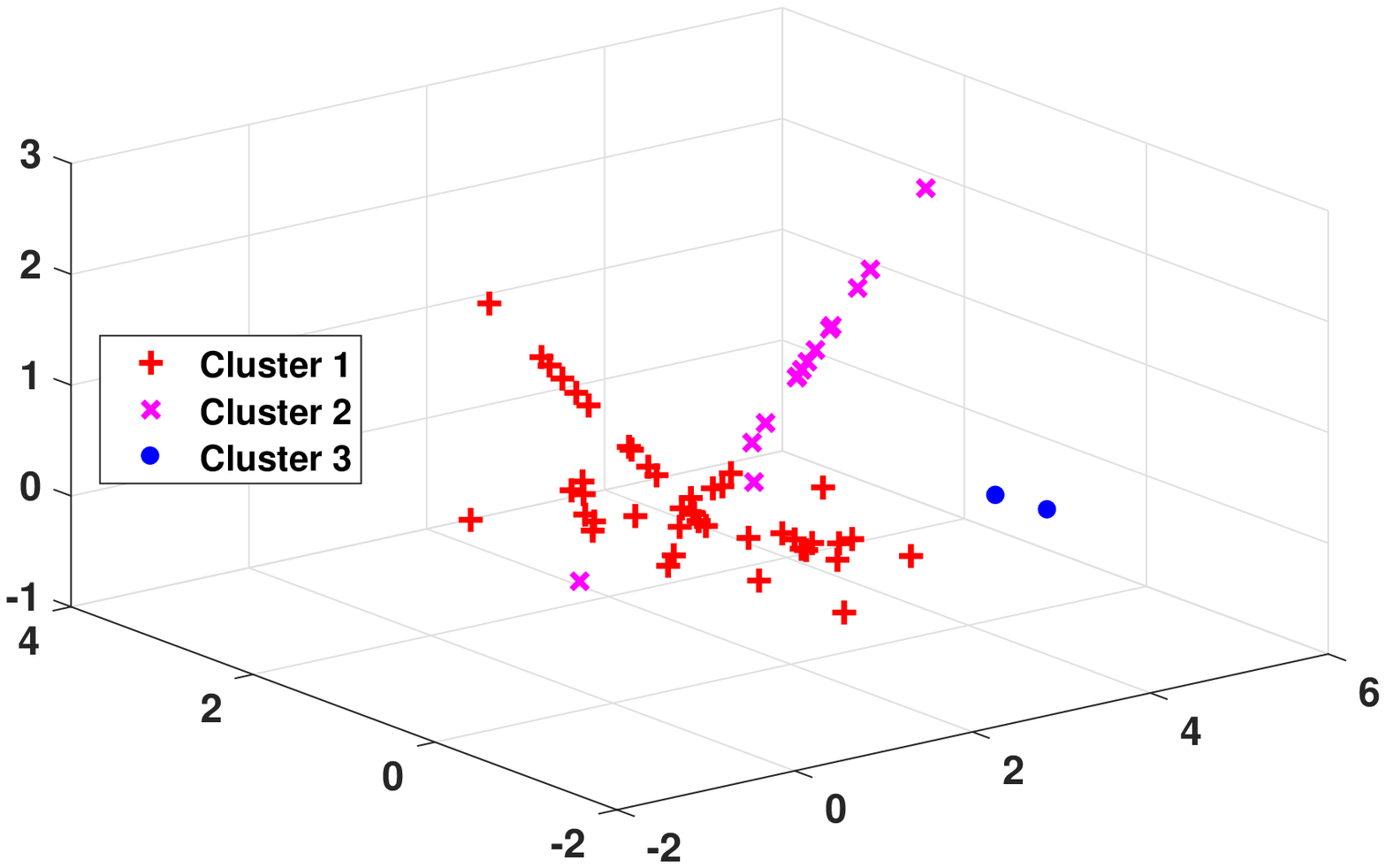}}
    \subfigure[RTWSVC (Group G1)]{\includegraphics[width=0.168\textheight]{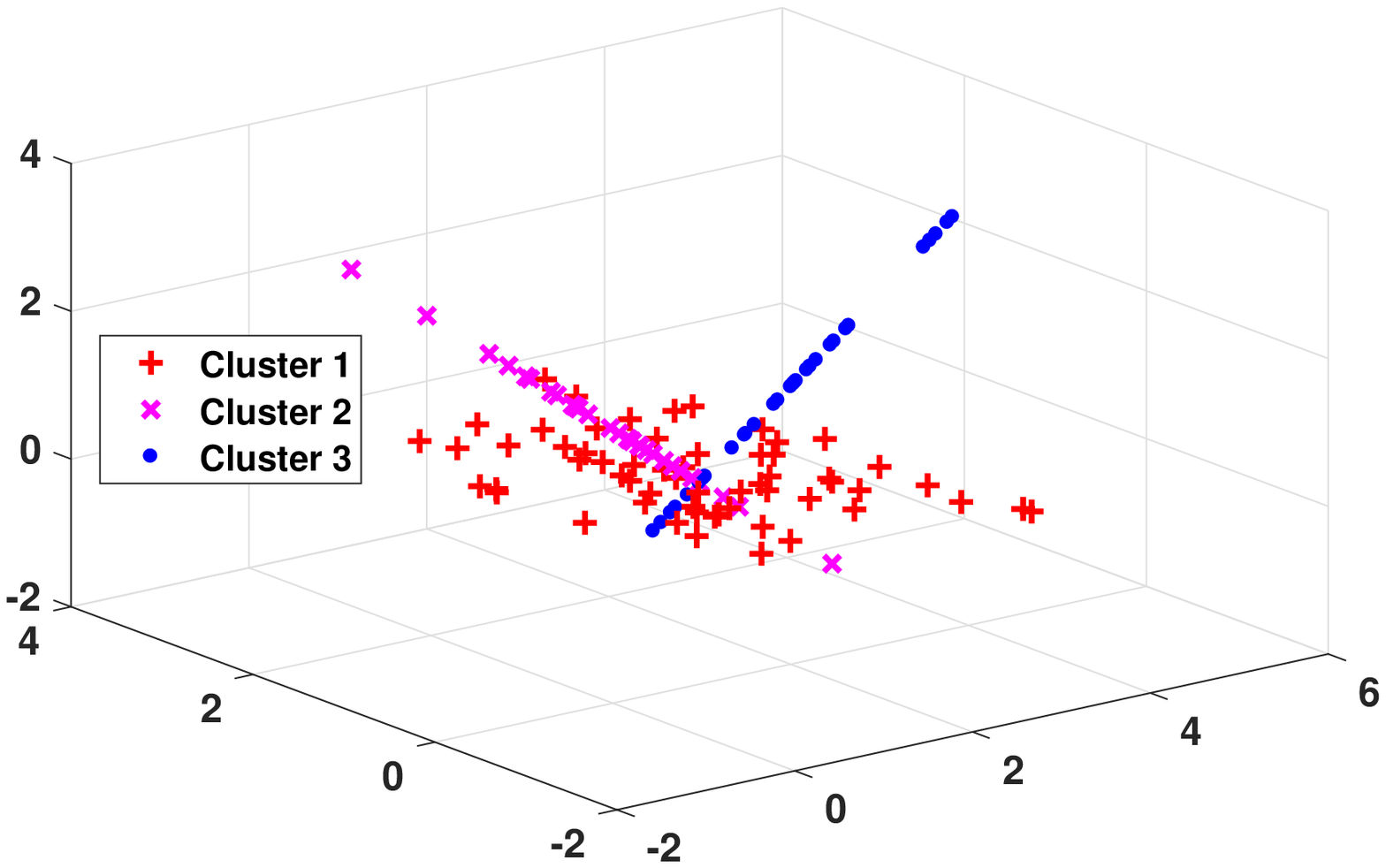}}
    \subfigure[RTWSVC (Group G2)]{\includegraphics[width=0.168\textheight]{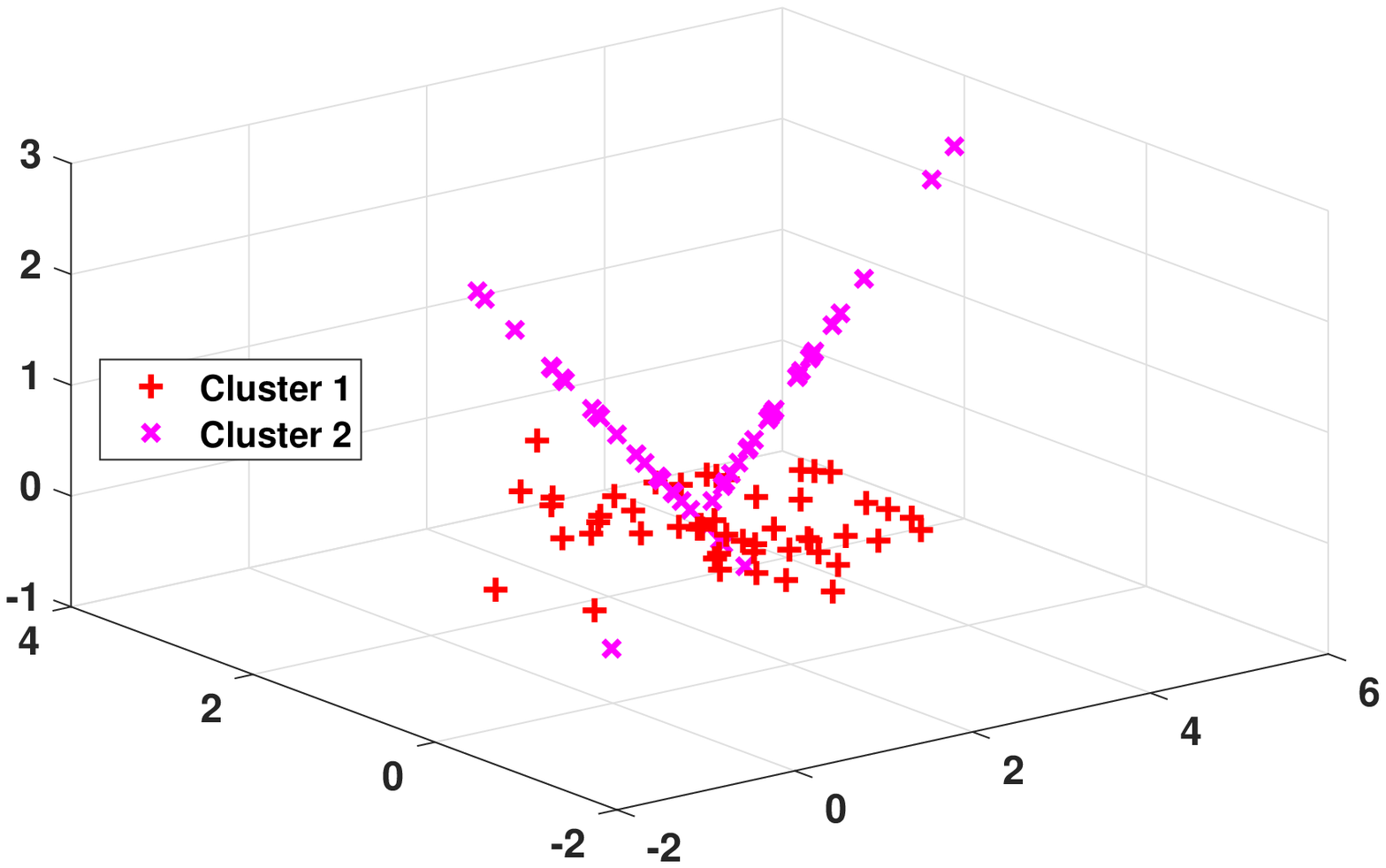}}
    \subfigure[RTWSVC (Group G3)]{\includegraphics[width=0.168\textheight]{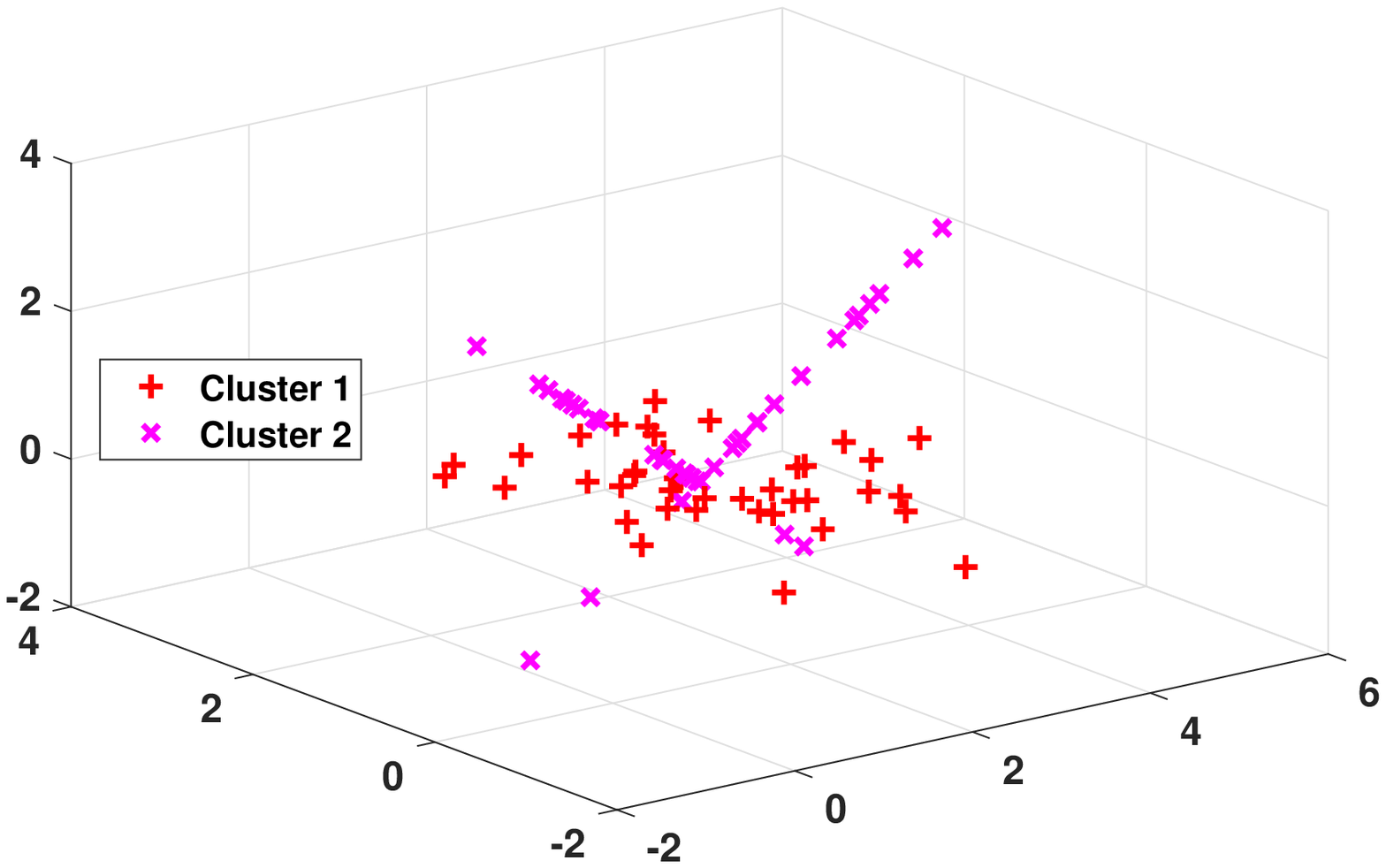}}
    \subfigure[RTWSVC (Group G4)]{\includegraphics[width=0.168\textheight]{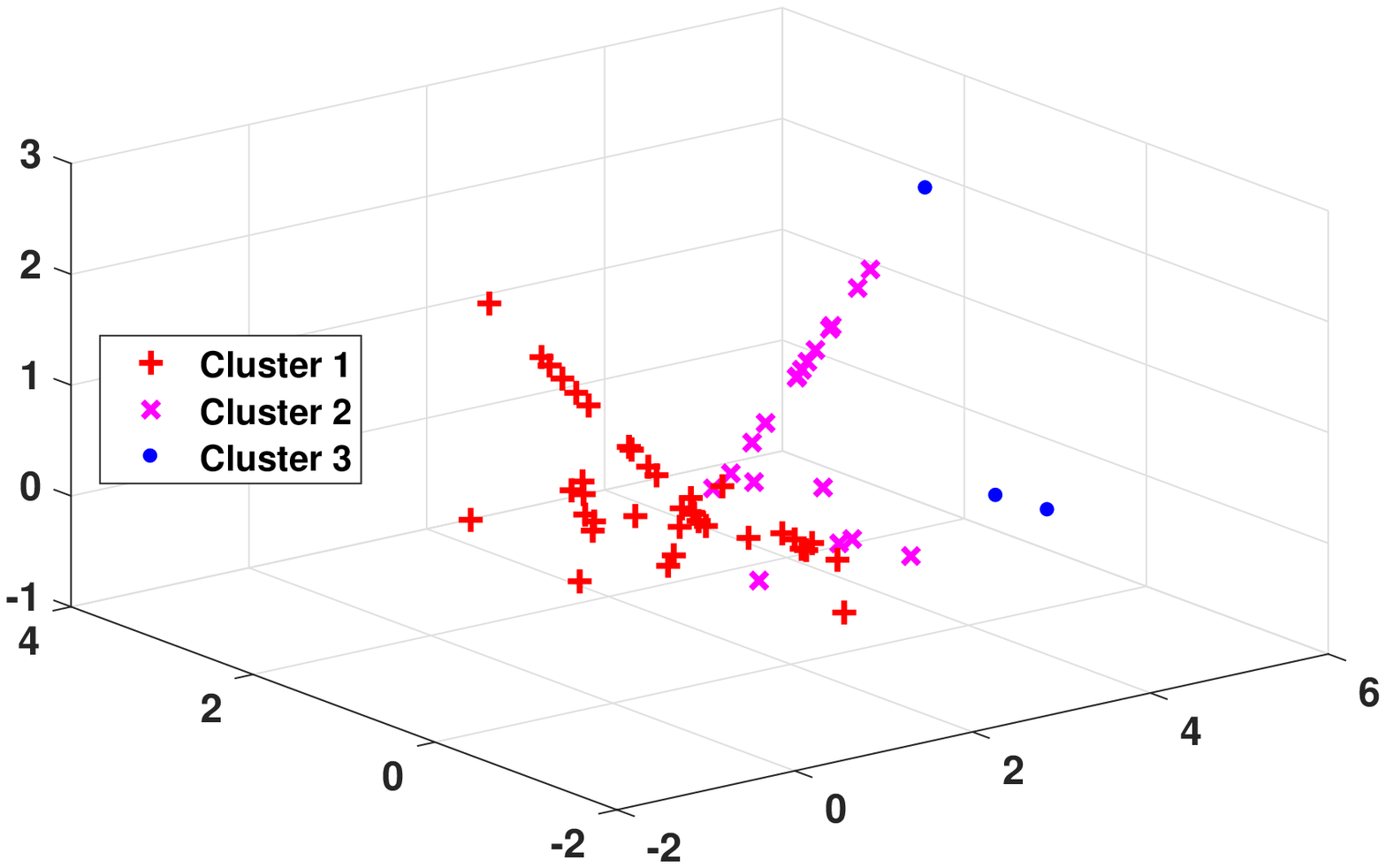}}
    \subfigure[FRTWSVC (Group G1)]{\includegraphics[width=0.168\textheight]{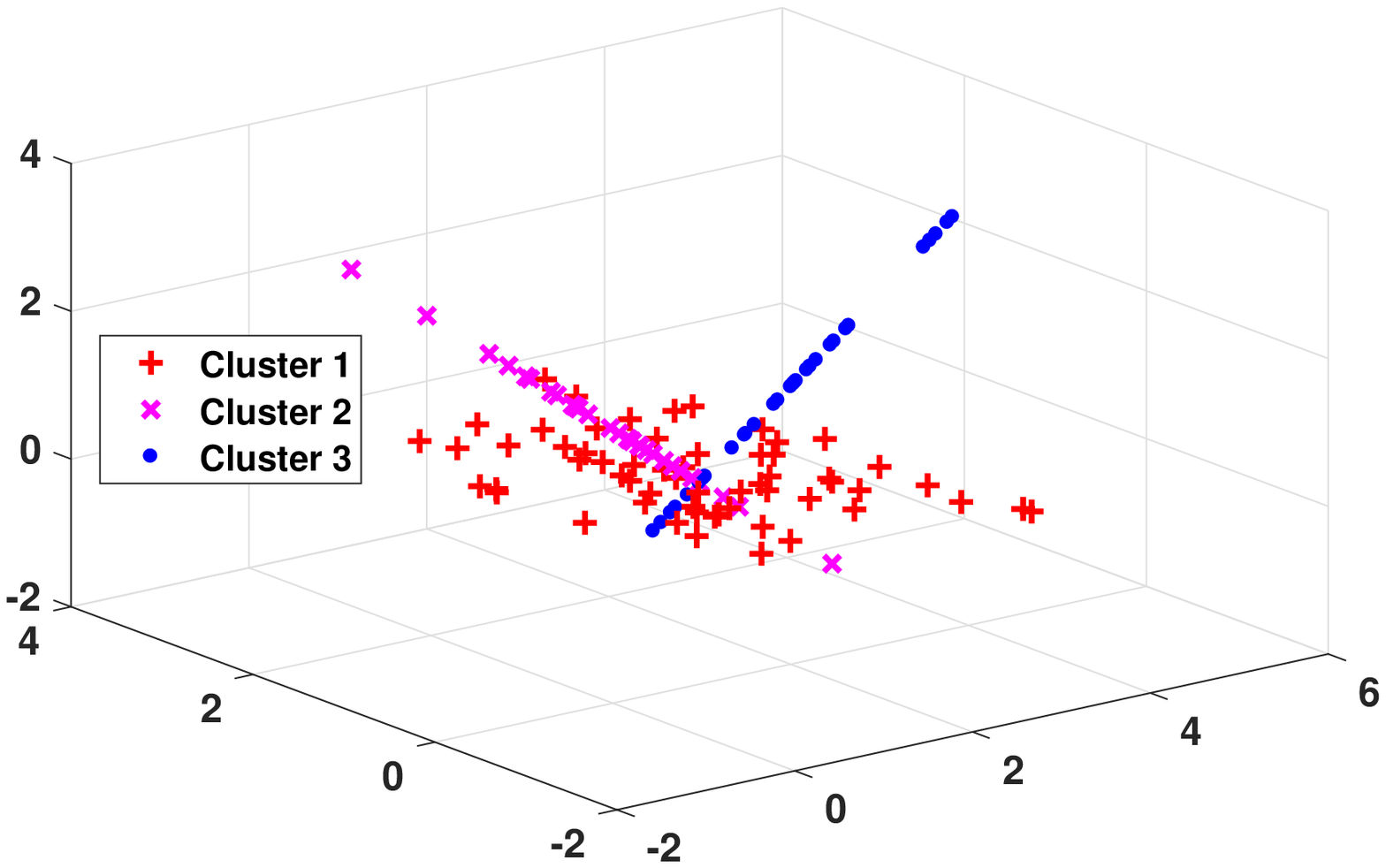}}
    \subfigure[FRTWSVC (Group G2)]{\includegraphics[width=0.168\textheight]{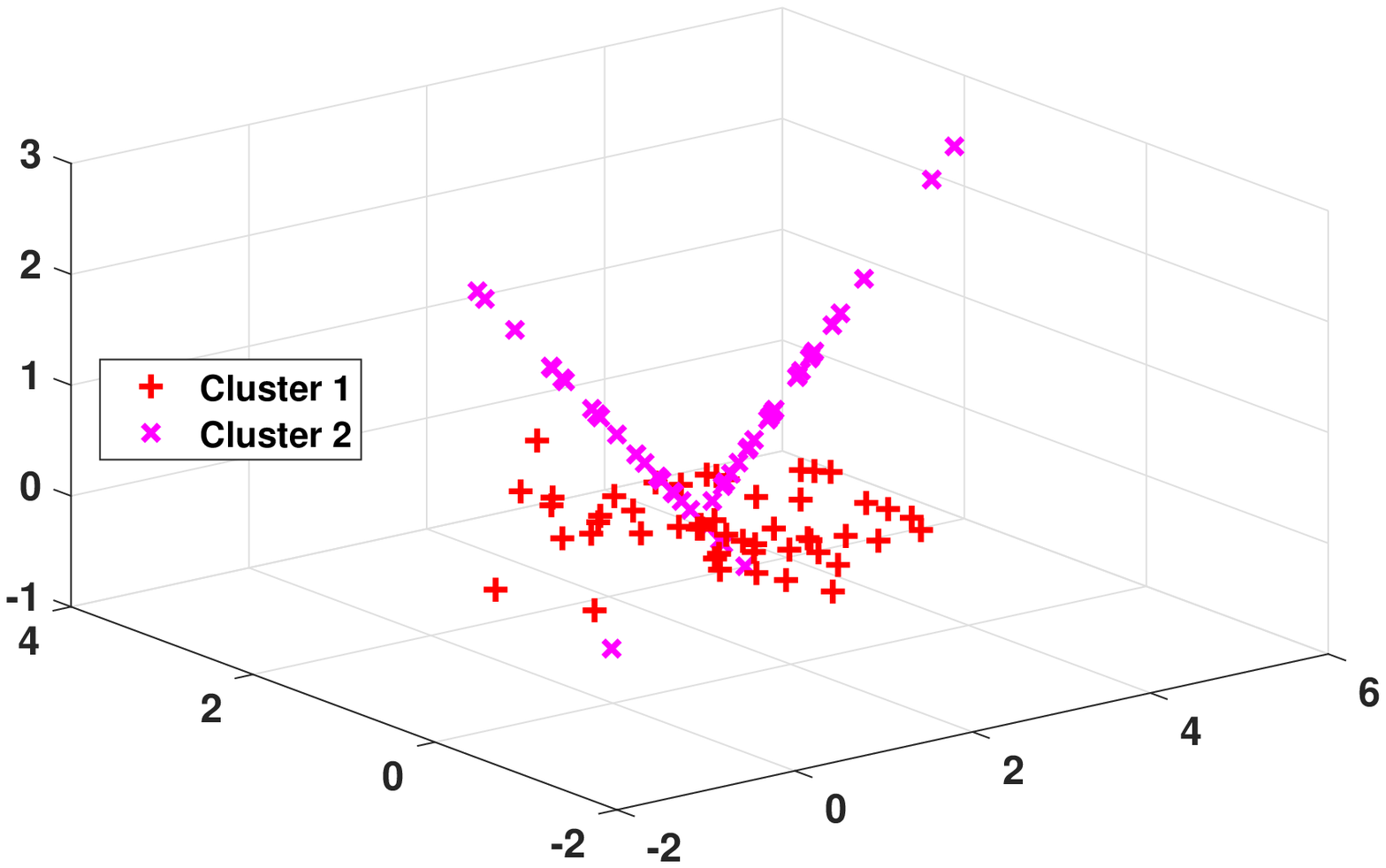}}
    \subfigure[FRTWSVC (Group G3)]{\includegraphics[width=0.168\textheight]{A80_RTWSVC.eps}}
    \subfigure[FRTWSVC (Group G4)]{\includegraphics[width=0.168\textheight]{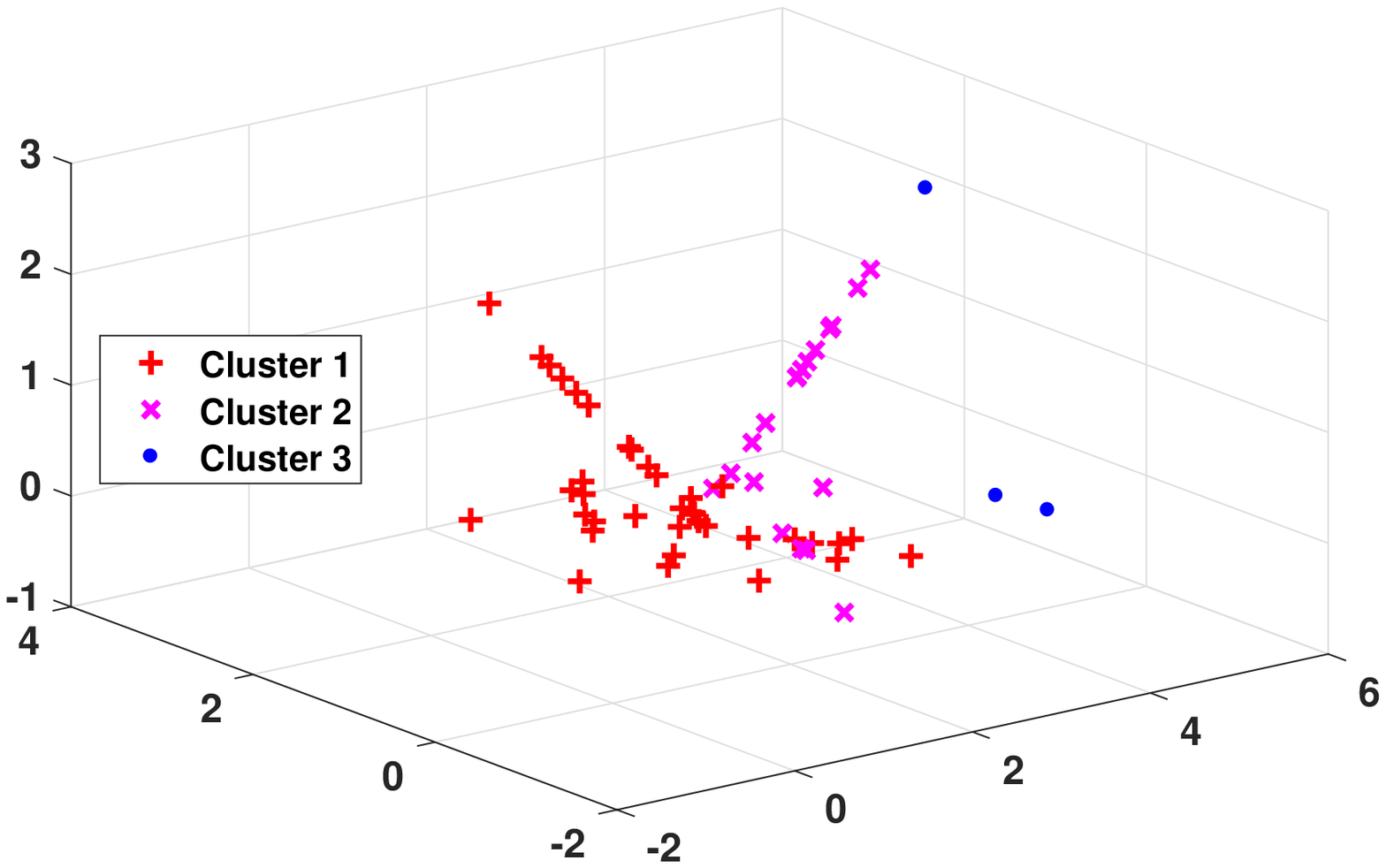}}
    \subfigure[RampTWSVC (Group G1)]{\includegraphics[width=0.168\textheight]{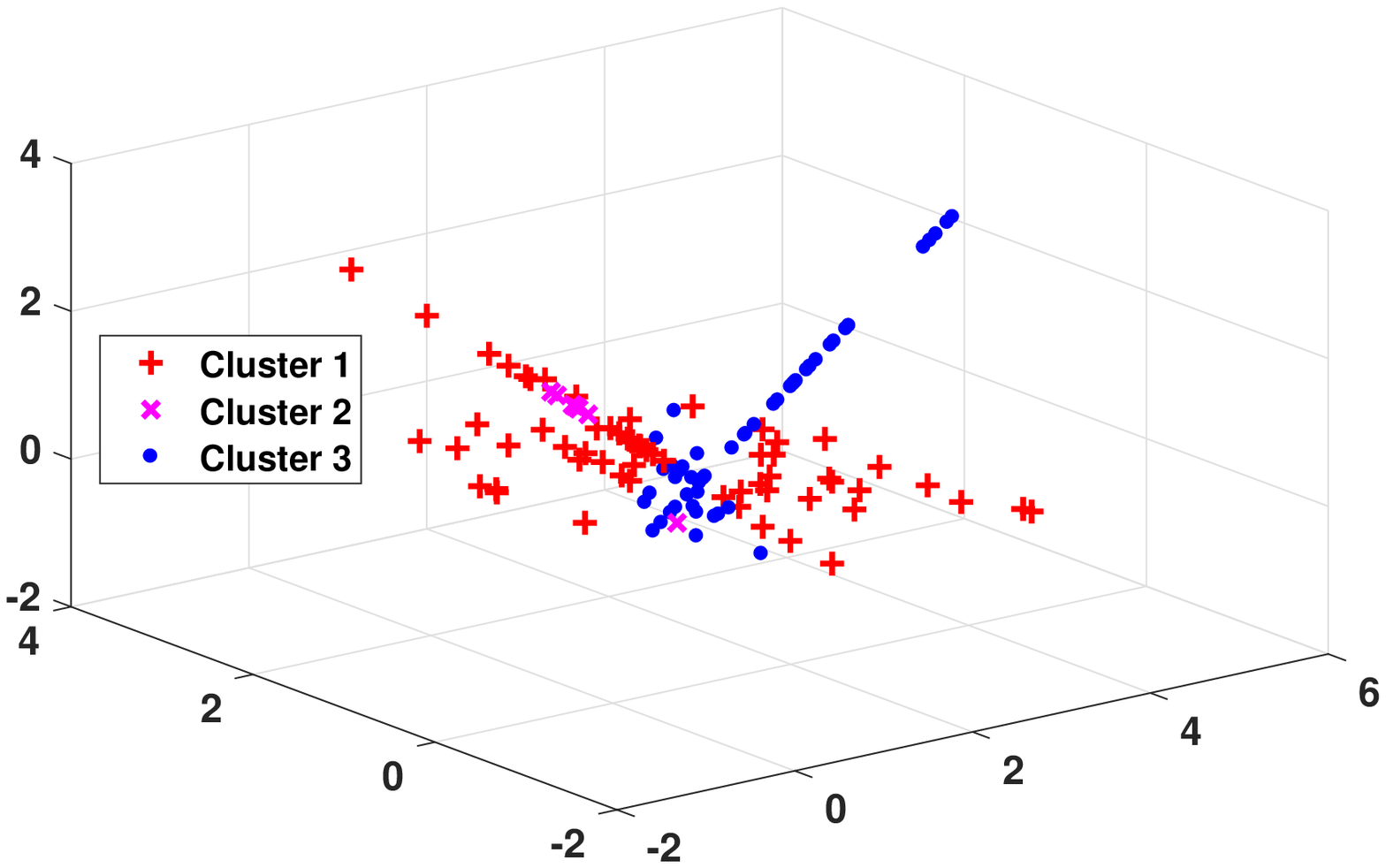}}
    \subfigure[RampTWSVC (Group G2)]{\includegraphics[width=0.168\textheight]{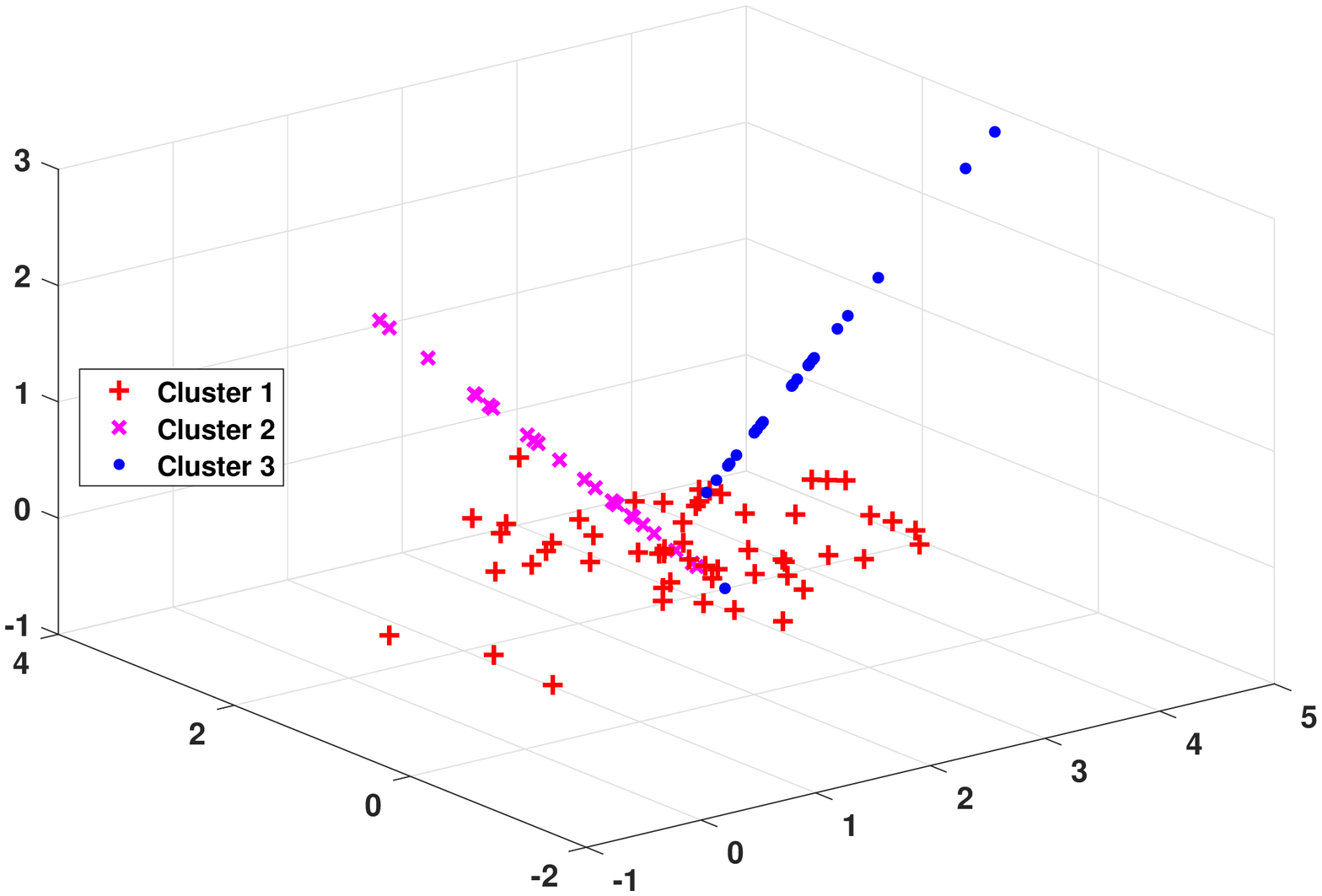}}
    \subfigure[RampTWSVC (Group G3)]{\includegraphics[width=0.168\textheight]{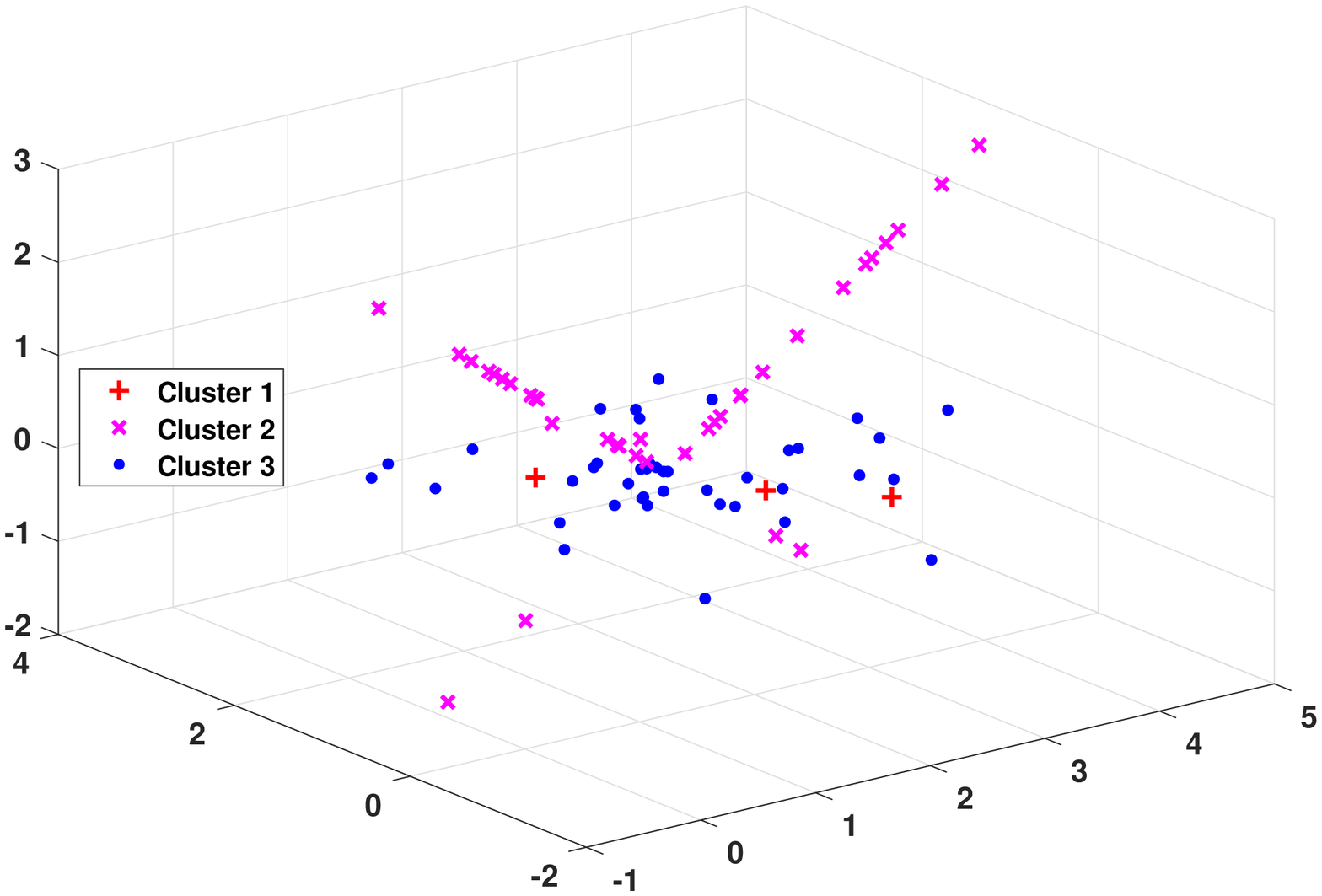}}
    \subfigure[RampTWSVC (Group G4)]{\includegraphics[width=0.168\textheight]{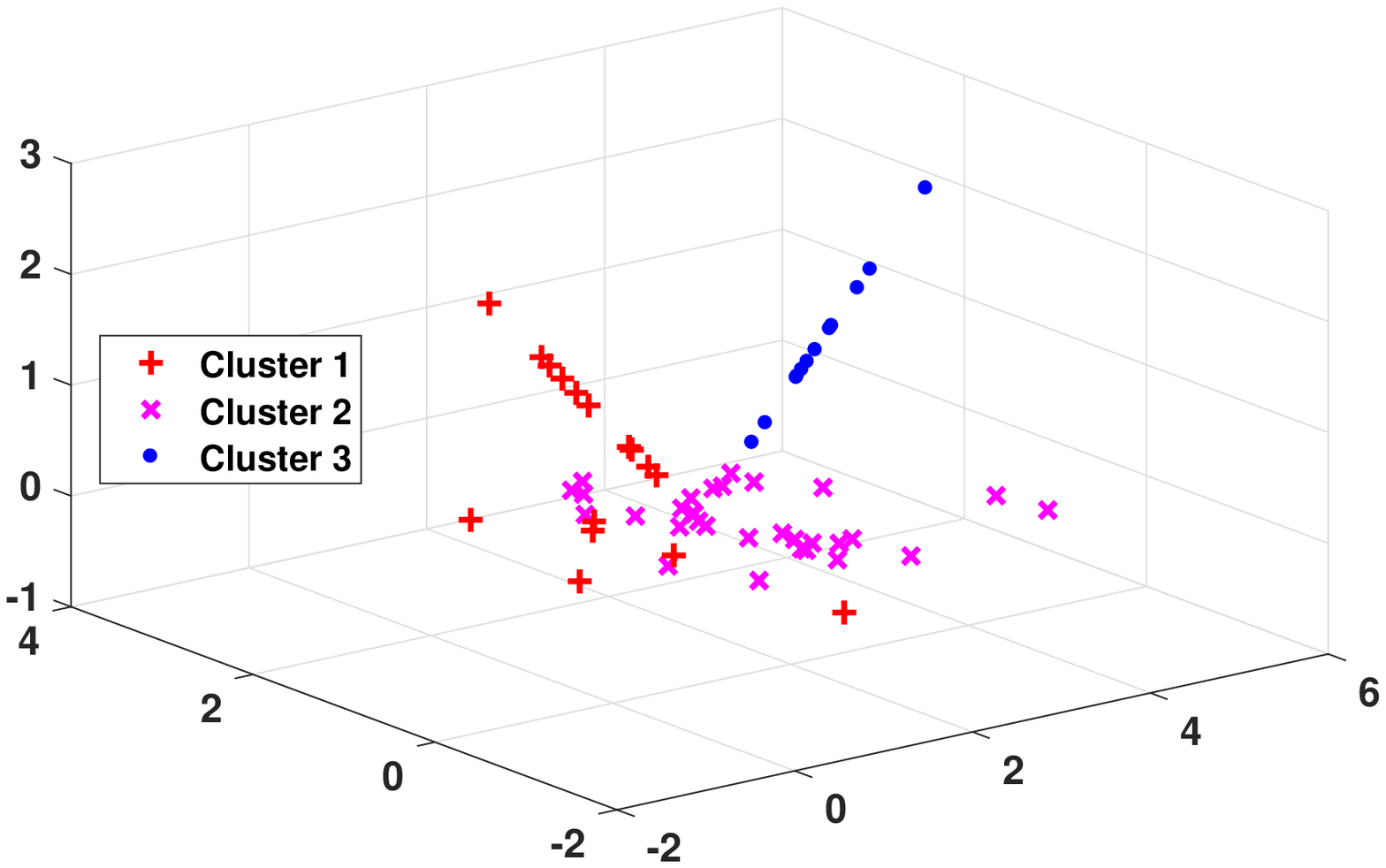}}
    \subfigure[RFDPC (Group G1)]{\includegraphics[width=0.168\textheight]{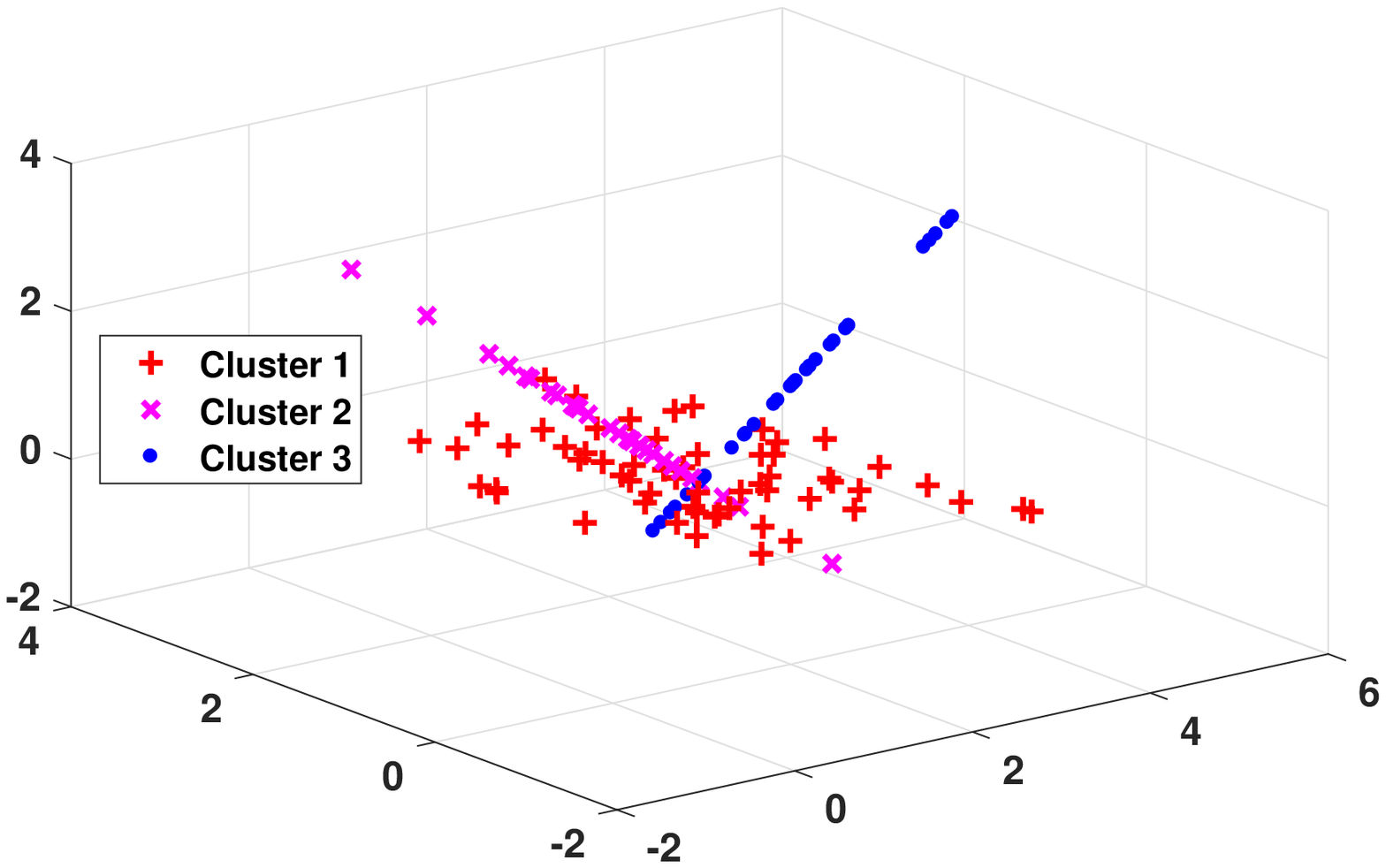}}
    \subfigure[RFDPC (Group G2)]{\includegraphics[width=0.168\textheight]{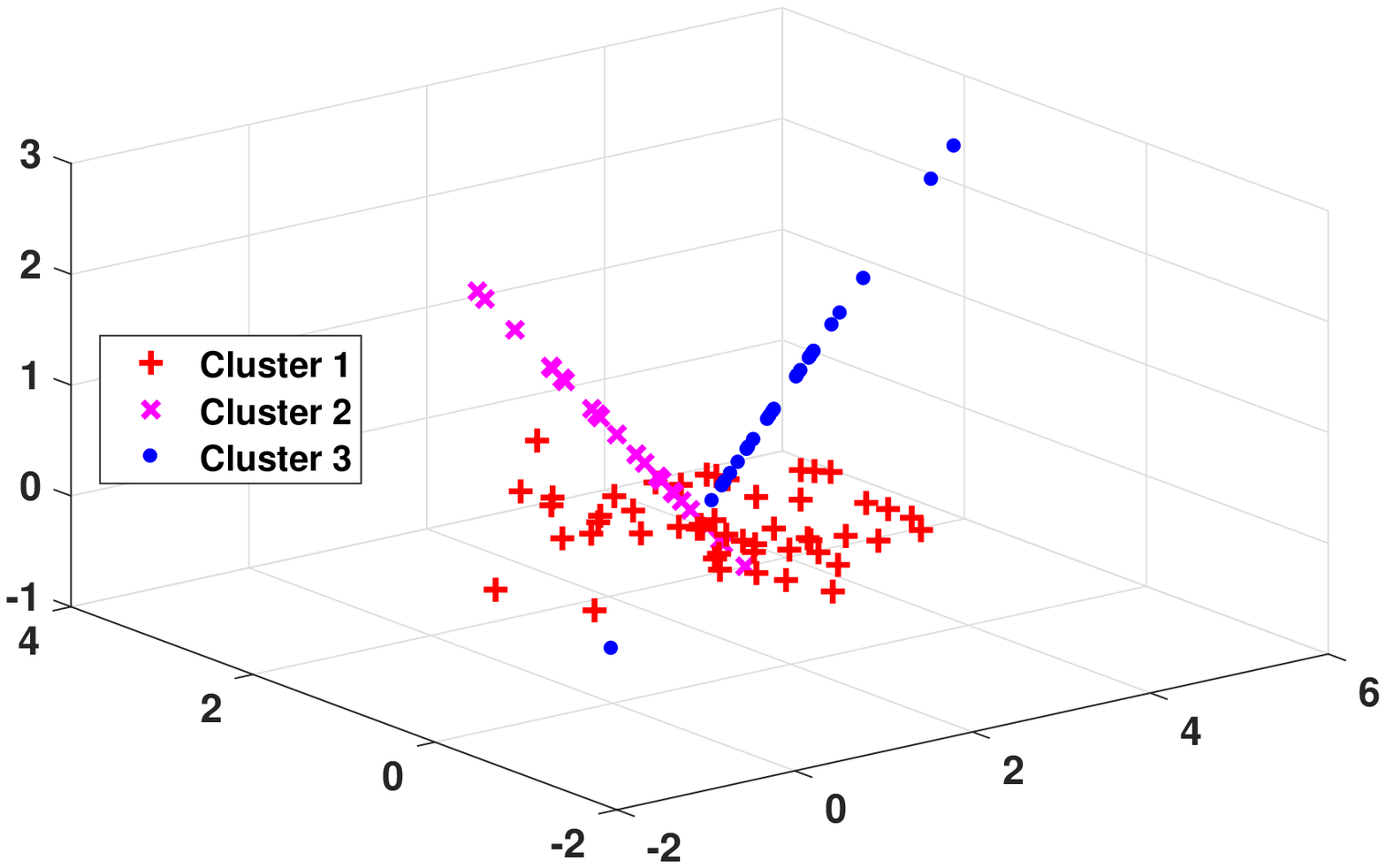}}
    \subfigure[RFDPC (Group G3)]{\includegraphics[width=0.168\textheight]{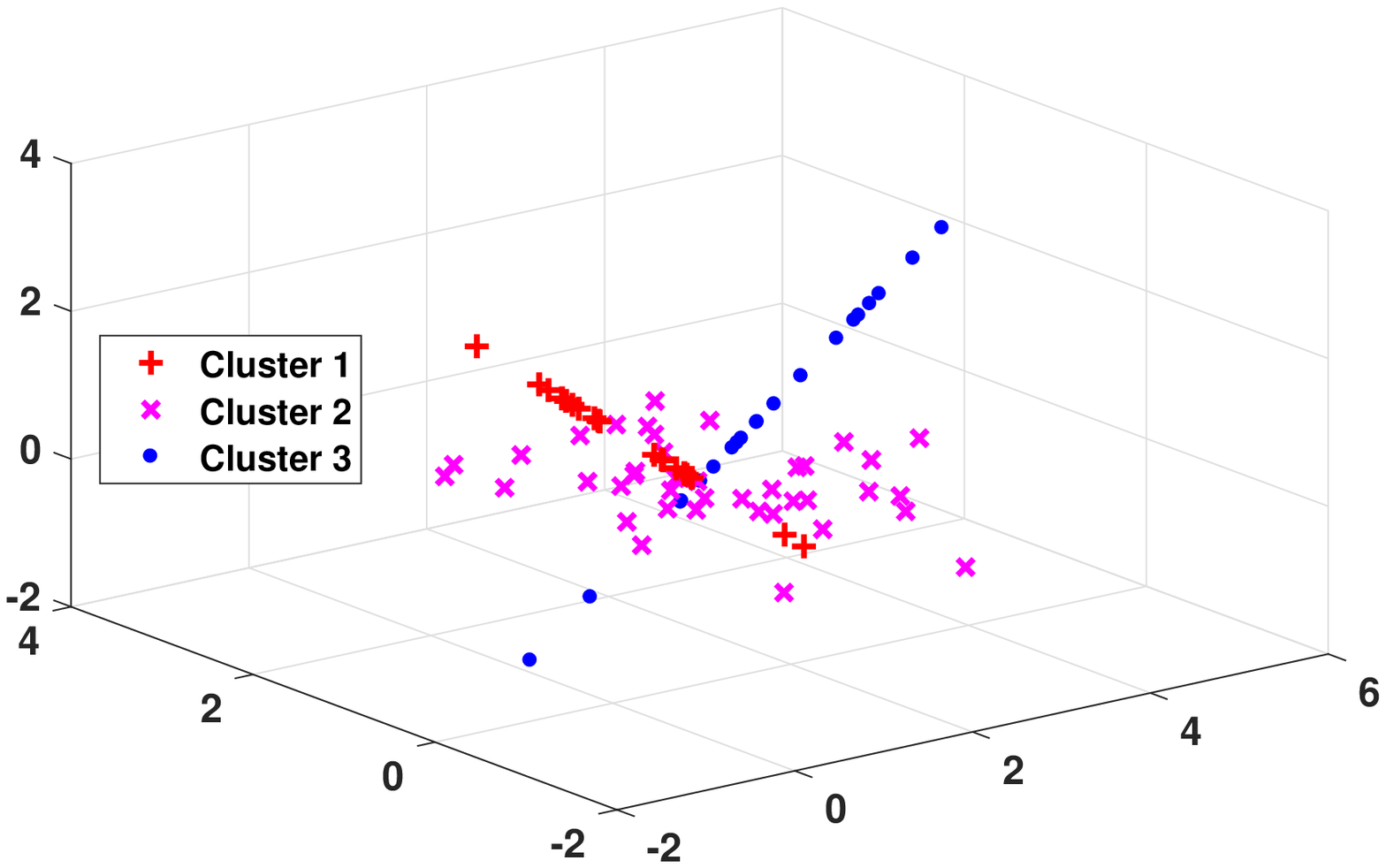}}
    \subfigure[RFDPC (Group G4)]{\includegraphics[width=0.168\textheight]{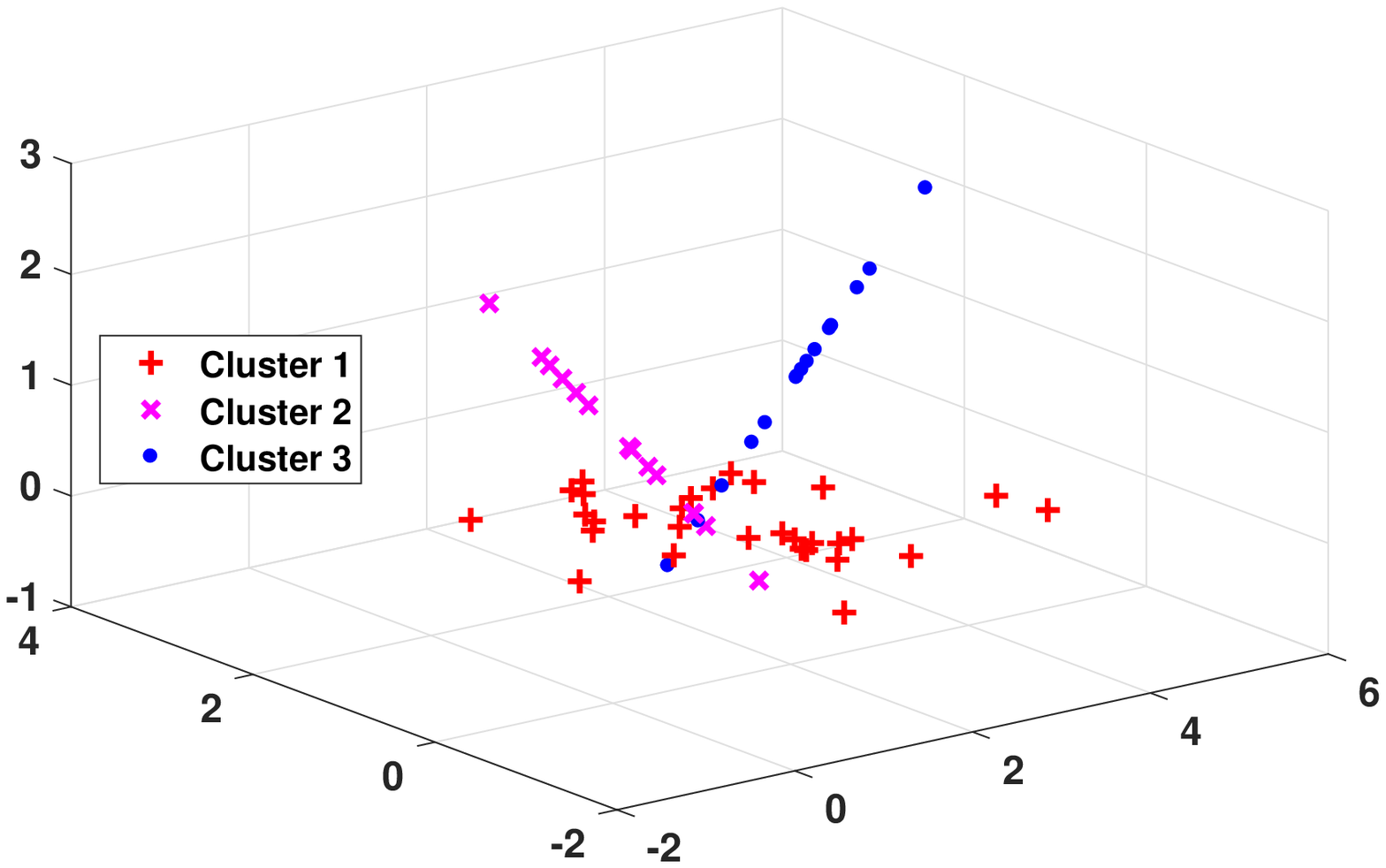}}
\caption{Plane-based clustering methods applied to four groups of the synthetic datasets, where groups G1, G2, G3 and G4 include 120, 100, 80 and 60 samples, respectively.}\label{Artshow}
\end{figure*}

\begin{figure*}
\centering
    \subfigure[Group G1]{\includegraphics[width=0.33\textheight]{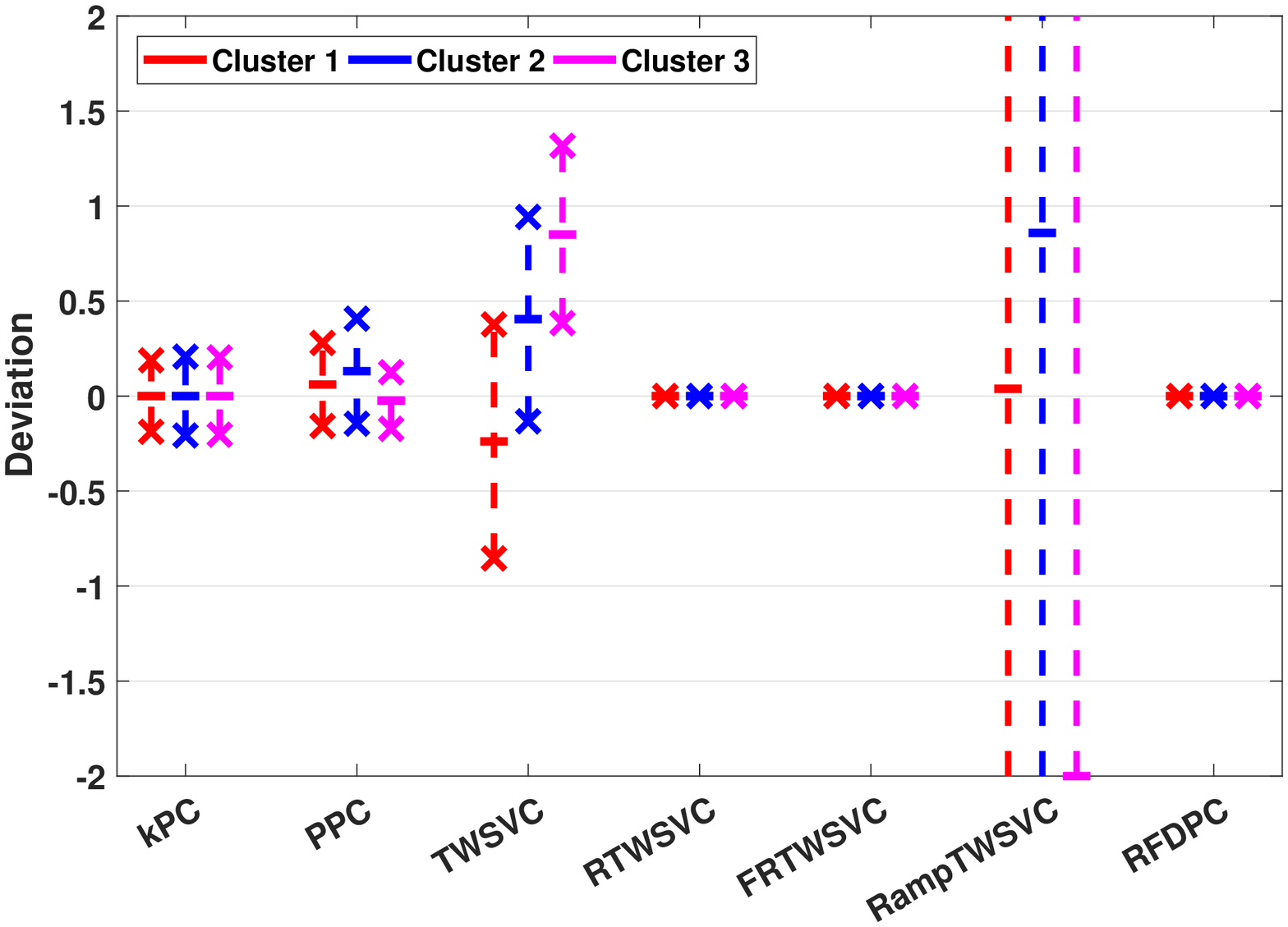}}
    \subfigure[Group G2]{\includegraphics[width=0.33\textheight]{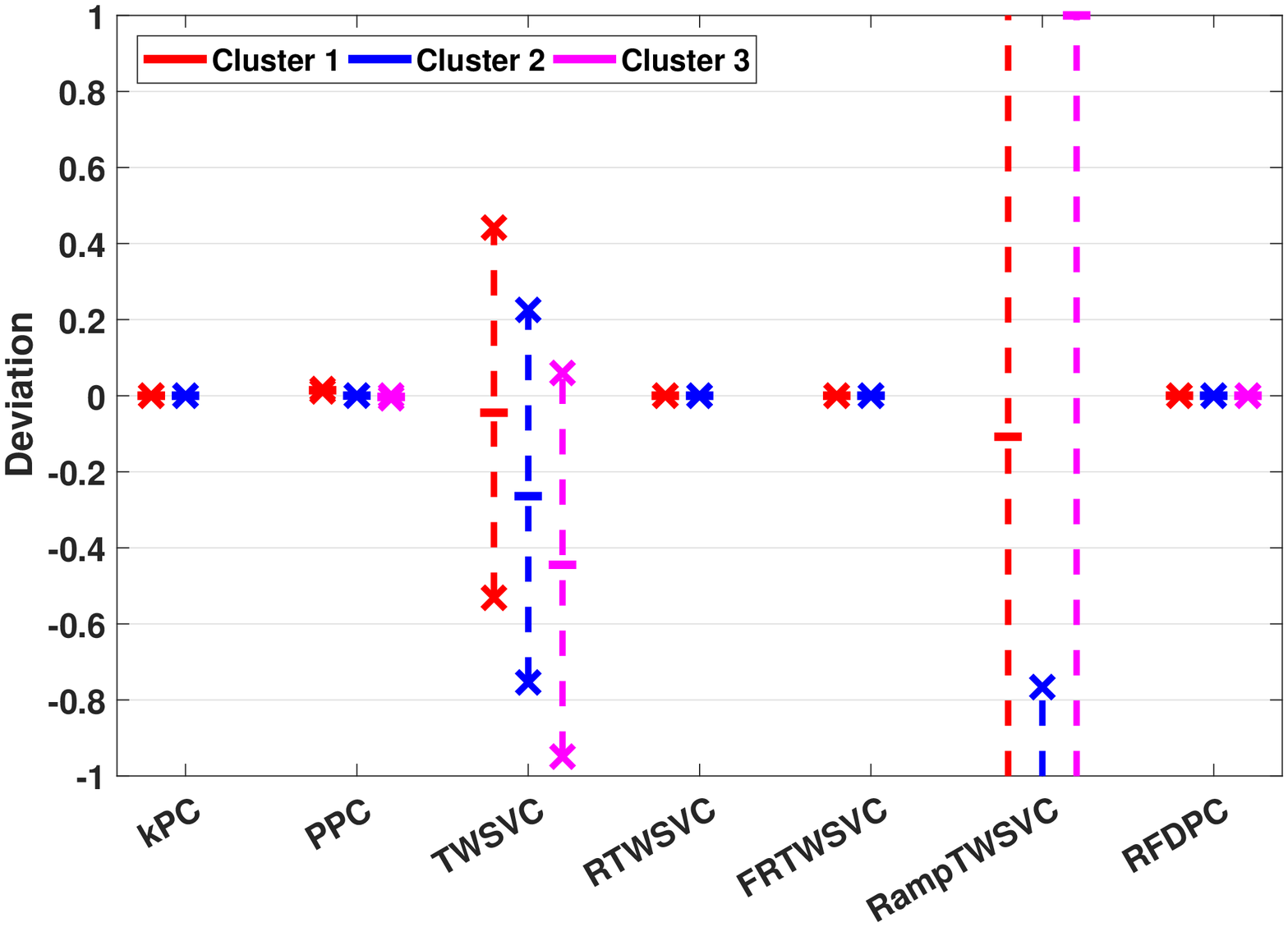}}
    \subfigure[Group G3]{\includegraphics[width=0.33\textheight]{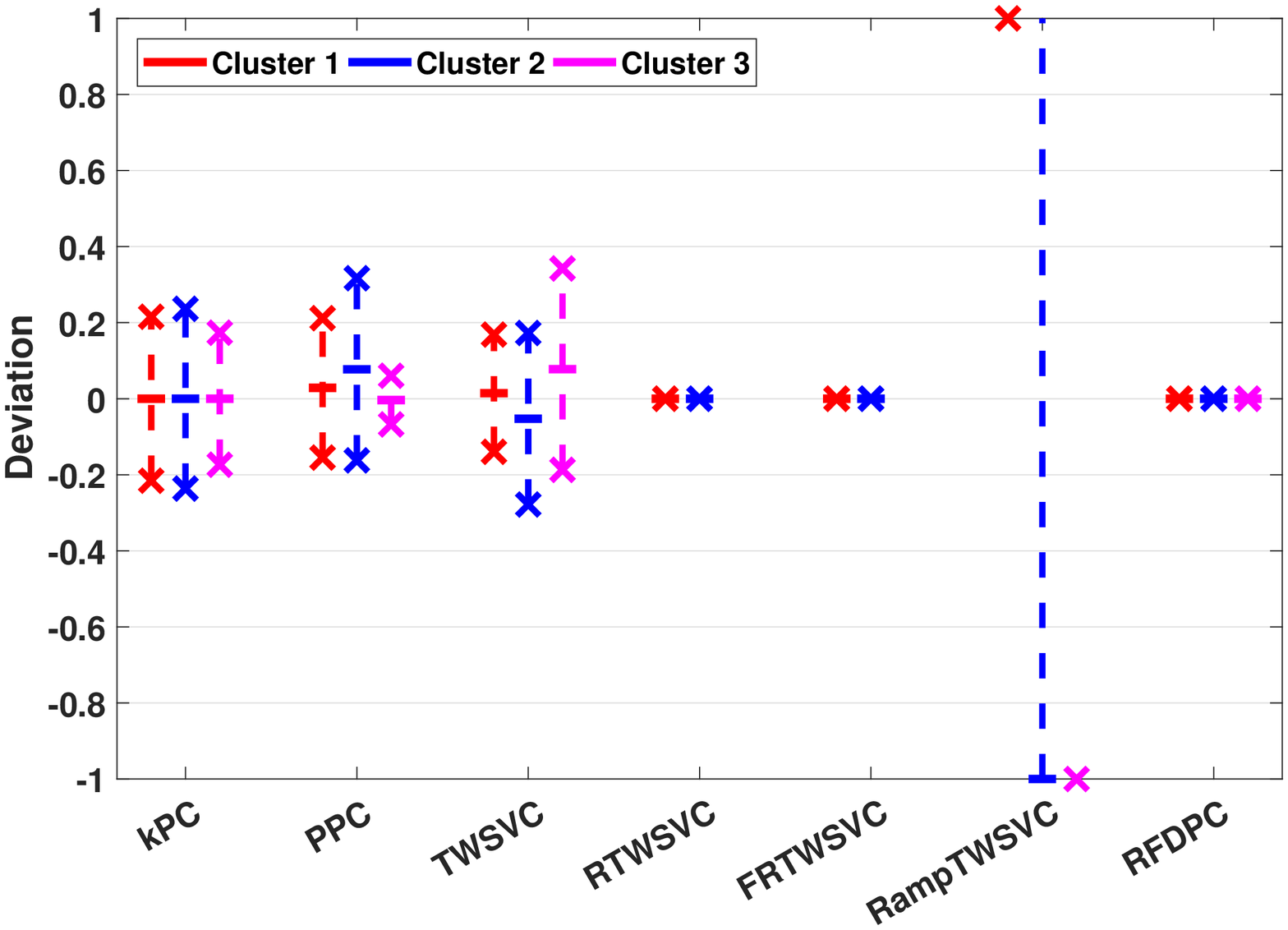}}
    \subfigure[Group G4]{\includegraphics[width=0.33\textheight]{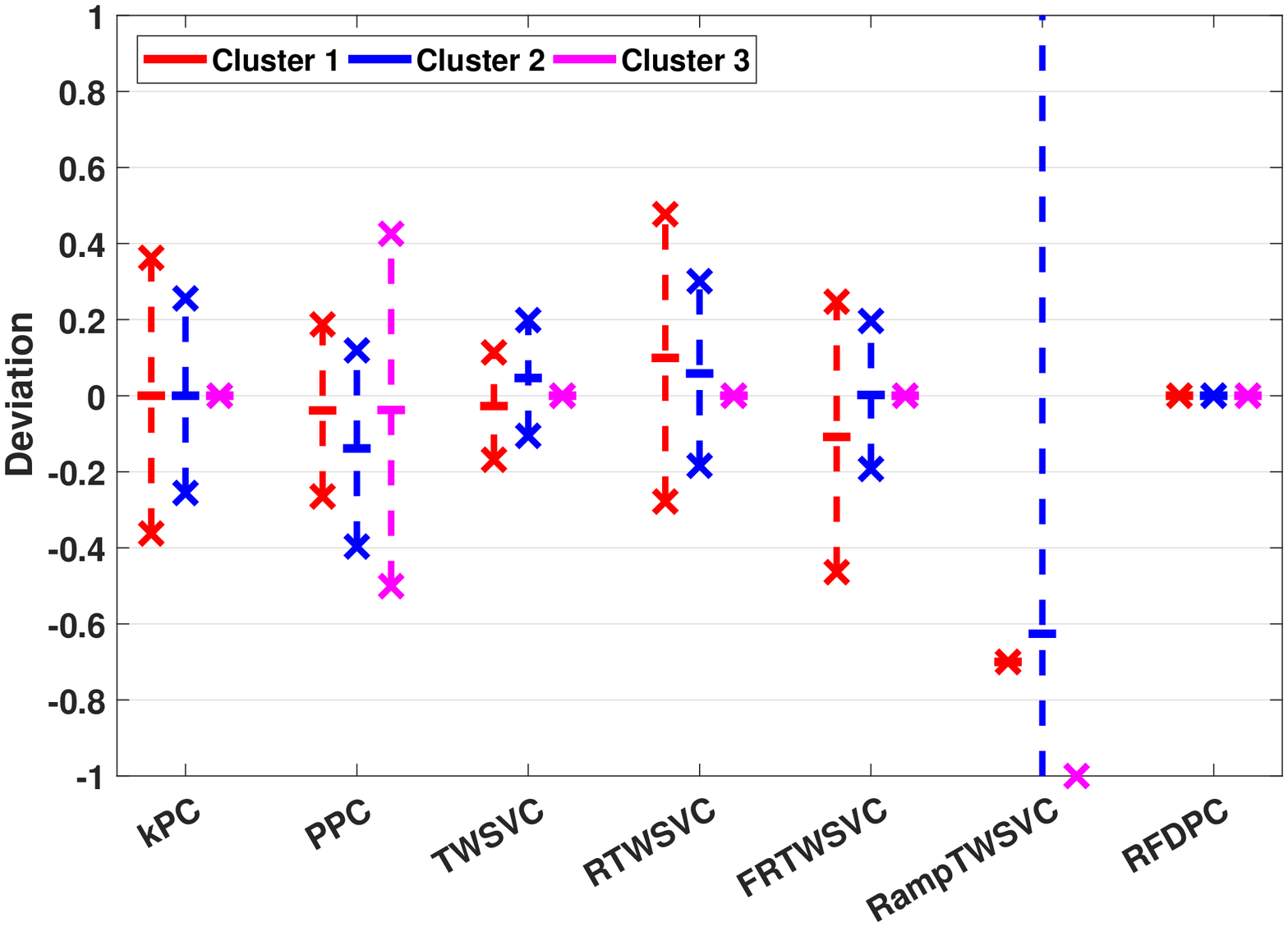}}
\caption{Deviation statistics of the samples from their cluster center planes on the synthetic datasets, where the clustering methods include kPC, PPC, TWSVC, RTWSVC, FRTWSVC, RampTWSVC and RFDPC, `-' denotes the 1-order statistics and `$\times$' denotes the 2-order statistics. If the symbol `$\times$' for a method is on the lower bound or upper bound, it means the deviations of this cluster are out of the region. If there is not any symbol for a cluster, it means there is not such a cluster for this method.}\label{ArtDev}
\end{figure*}

\begin{table*}
\caption{AC and MI of the plane-based clustering methods on the synthetic datasets} \centering
\begin{tabular}{llllllll}
\hline\noalign{\smallskip}
 Group&\scriptsize{kPC}  &\scriptsize{PPC}  &\scriptsize{TWSVC}&\scriptsize{RTWSVC}&\scriptsize{FRTWSVC}&\scriptsize{RampTWSVC} &\scriptsize{RFDPC} \\
&AC(\%)/MI(\%)&AC(\%)/MI(\%)&AC(\%)/MI(\%)&AC(\%)/MI(\%)&AC(\%)/MI(\%)&AC(\%)/MI(\%)&AC(\%)/MI(\%)\\
\noalign{\smallskip}\hline\noalign{\smallskip}
G1&60.74/27.18&62.20/29.63&56.57/27.55&$\mathbf{100.0/100.0}$&$\mathbf{100.0/100.0}$&72.82/33.97&$\mathbf{100.0/100.0}$\\
G2&87.37/67.10&87.39/67.34&62.79/28.42&87.37/67.10&87.37/67.10&84.38/56.71&$\mathbf{100.0/100.0}$\\
G3&56.42/13.13&61.14/40.67&58.96/21.39&87.34/67.21&87.34/67.21&65.09/23.83&$\mathbf{98.13/94.35}$\\
G4&58.87/24.09&58.81/11.19&61.53/30.08&57.06/16.79&57.18/21.07&73.45/48.10&$\mathbf{100.0/100.0}$\\
\hline \noalign{\smallskip}
\end{tabular} \label{ArtAC}\\
\end{table*}

\begin{table*}
\caption{AC and MI of the clustering methods on benchmark datasets for linear case} \centering
\begin{tabular}{lllllllll}
\hline\noalign{\smallskip}
Data &kmeans&kPC  &PPC  &TWSVC&RTWSVC&FRTWSVC&RampTWSVC &RFDPC \\
k:m$\times$n& AC(\%)& AC(\%)& AC(\%)& AC(\%)& AC(\%)& AC(\%)& AC(\%)& AC(\%)\\
&MI(\%)&MI(\%)&MI(\%)&MI(\%)&MI(\%)&MI(\%)&MI(\%)&MI(\%)\\
\noalign{\smallskip}\hline\noalign{\smallskip}
Compound& 86.29$\pm$4.05& 72.04& 73.90& 75.97& 80.44& 82.52& 77.07&$\mathbf{ 89.01}$\\
6:399$\times$2&$\mathbf{70.44}$$\pm$7.14& 34.71& 38.85& 50.44& 56.63& 48.18& 48.37& 67.38\\\hline
Dermatology& 69.76$\pm$0.77& 60.50& 70.36& 71.93& 60.50& 60.50& 72.67&$\mathbf{ 93.47}$\\
6:366$\times$34& 11.47$\pm$2.15& 29.65&  3.48& 10.17& 28.95& 28.59& 24.42&$\mathbf{ 76.48}$\\\hline
Ecoli& 82.19$\pm$2.68& 33.11& 66.46& 85.74& 34.33& 34.33& 79.42&$\mathbf{ 91.03}$\\
8:336$\times$7& 56.84$\pm$4.42&  8.61&  9.65& 58.45& 10.42& 10.42& 43.35&$\mathbf{ 70.37}$\\\hline
Glass& 65.58$\pm$3.22& 55.73&$\mathbf{ 66.75}$& 66.62& 57.59& 57.40& 62.77& 62.37\\
6:214$\times$9&$\mathbf{ 35.76}$$\pm$2.23& 22.55&  8.54& 17.83& 17.69& 18.20& 20.95& 12.61\\\hline
Iris& 84.57$\pm$6.86& 67.54& 60.95& 91.24& 92.67& 94.95& 86.79&$\mathbf{ 98.25}$\\
3:150$\times$4& 70.47$\pm$9.10& 25.41& 12.04& 82.53& 82.31& 86.97& 71.71&$\mathbf{ 94.86}$\\\hline
Pathbased& 74.85$\pm$0.09& 66.49& 74.57& 73.94& 76.30& 76.30& 65.73&$\mathbf{79.14}$\\
3:300$\times$2& 51.46$\pm$0.16& 30.17& 50.92& 47.90& 54.63& 54.63& 28.21&$\mathbf{ 59.21}$\\\hline
Zoo& 87.49$\pm$1.96& 54.12& 84.06& 88.83& 54.12& 54.12& 90.22&$\mathbf{ 95.47}$\\
7:101$\times$16& 71.93$\pm$3.15& 34.23& 55.56& 72.93& 32.15& 32.15&$\mathbf{ 76.98}$& 71.79\\\hline
Aggregation& 91.91$\pm$0.69& 79.19& 79.00& 88.49& 82.82& 84.10& 80.71&$\mathbf{ 95.97}$\\
7:788$\times$2& 81.24$\pm$0.75& 48.84& 48.43& 63.52& 64.23& 60.70& 52.36&$\mathbf{ 84.82}$\\\hline
R15&$\mathbf{98.21}$$\pm$0.67& 92.15& 92.00& 93.76& 93.07& 92.91& 81.76& 96.92\\
15:600$\times$2&$\mathbf{ 93.35}$$\pm$2.40& 64.86& 59.28& 73.64& 73.49& 67.33& 47.37& 86.80\\\hline
Vehicle& 63.24$\pm$2.21& 62.03& 62.77& 51.00& 65.23& 65.00& 58.59&$\mathbf{ 68.42}$\\
4:846$\times$18& 17.73$\pm$0.82&  3.25&  1.28&  9.02& 12.63& 12.07& 14.84&$\mathbf{ 22.19}$\\\hline
Vowel&$\mathbf{ 86.62}$$\pm$0.61& 82.93& 84.10& 83.28& 84.23& 83.60& 80.94& 84.18\\
11:528$\times$10&$\mathbf{ 45.64}$$\pm$1.94& 11.39& 10.60& 11.57& 24.28&  7.73& 25.93& 34.20\\\hline
Echocardiogram& 66.41$\pm$7.92& 52.81& 56.66& 56.10& 75.01& 75.01& 71.84&$\mathbf{ 85.79}$\\
2:131$\times$10& 24.79$\pm$17.27&  0.54&  2.99&  1.35& 39.64& 39.64& 35.46&$\mathbf{ 58.50}$\\\hline
Haberman& 49.91$\pm$0.02& 49.84& 60.95& 61.89& 61.57& 62.21& 60.95&$\mathbf{ 64.96}$\\
2:306$\times$3&  0.04$\pm$0.04&  0.07&  0.74&  2.28&$\mathbf{  9.00}$&  4.44&  0.74&  8.70\\\hline
Heartc& 51.04$\pm$0.00& 50.12& 50.23& 50.67& 59.21& 59.21& 50.75&$\mathbf{ 66.57}$\\
2:303$\times$14&  1.39$\pm$0.00&  0.05&  0.14&  0.90& 13.98& 13.98&  1.14&$\mathbf{ 25.41}$\\\hline
Heartstatlog& 51.45$\pm$0.07& 50.04& 50.35& 50.81& 51.40& 51.40& 51.82&$\mathbf{ 59.40}$\\
2:270$\times$13&  1.87$\pm$0.07&  0.20&  0.15&  0.63&  1.63&  1.67&  2.40&$\mathbf{ 17.08}$\\\hline
Hepatitis& 62.77$\pm$3.03& 55.56& 71.90& 66.27& 67.02&$\mathbf{ 73.66}$& 67.02& 69.38\\
2:155$\times$19&  0.29$\pm$0.13&  0.96& 14.93&  0.17&  7.18&$\mathbf{ 17.36}$&  1.95&  6.09\\\hline
Hourse& 50.15$\pm$0.00& 51.34&$\mathbf{ 54.15}$& 50.15& 51.34& 51.34& 52.12& 51.98\\
2:300$\times$26&  1.24$\pm$0.00&  0.55&$\mathbf{  5.39}$&  1.24&  0.55&  0.55&  0.46&  0.25\\\hline
Housevotes& 78.83$\pm$0.15& 63.77& 68.77& 75.83& 71.40& 71.40& 79.61&$\mathbf{ 83.64}$\\
2:435$\times$16& 48.07$\pm$0.38& 34.16& 27.27& 44.66& 39.36& 39.36& 50.15&$\mathbf{ 56.38}$\\\hline
Sonar& 50.22$\pm$0.18& 49.80& 49.99& 50.43& 51.26& 50.06&$\mathbf{ 51.62}$& $\mathbf{51.62}$\\
2:208$\times$60&  0.74$\pm$0.28&  0.01&  0.23&  0.64&  2.06&  0.67&$\mathbf{  4.05}$&  2.72\\\hline
Spect& 52.97$\pm$0.00& 65.86& 50.67& 65.86& 50.88& 50.58& 67.17& $\mathbf{67.61}$\\
2:267$\times$44&$\mathbf{  8.48}$$\pm$0.00&  0.51&  0.51&  0.51&  0.35&  0.34&  1.15&  1.17\\\hline
Spectf& 53.95$\pm$2.31& 49.49& 50.51& 51.93& 41.93& 51.39& 53.20& $\mathbf{59.62}$\\
2:88$\times$44&$\mathbf{ 16.09}$$\pm$6.43&  0.19&  1.67&  6.34&  4.00&  3.03&  6.51& 15.15\\\hline
Pimaindian& 55.07$\pm$0.00& 51.74& 54.50& 57.99& 53.97& 54.82& 55.07& $\mathbf{60.33}$\\
2:768$\times$8&  2.67$\pm$0.00&  0.23&  0.09&  5.64&  0.21&  0.58&  0.95&$\mathbf{  9.51}$\\\hline
Tictactoe& 50.90$\pm$0.47& 50.08& 96.71& 55.26& 59.84& 59.84& 55.82& $\mathbf{96.91}$\\
2:958$\times$27&  0.69$\pm$0.14&  0.69& 87.89&  0.97& 13.48& 13.48&  1.95&$\mathbf{ 88.49}$\\\hline
AC-win & 2  &  0 & 2  & 0 & 0&1&1&18 \\
MI-win&6&0&1&0&1&1&2&12\\
Both-win&2&0&1&0&0&1&1&12\\
\hline \noalign{\smallskip}
\end{tabular} \label{Accuracy1}\\
\end{table*}

\begin{table*}
\caption{AC and MI of the clustering methods on benchmark datasets for nonlinear case} \centering
\begin{tabular}{lllllllll}
\hline\noalign{\smallskip}
Data &kmeans&kPC  &PPC  &TWSVC&RTWSVC&FRTWSVC&RampTWSVC &RFDPC \\
k:m$\times$n& AC(\%)& AC(\%)& AC(\%)& AC(\%)& AC(\%)& AC(\%)& AC(\%)& AC(\%)\\
&MI(\%)&MI(\%)&MI(\%)&MI(\%)&MI(\%)&MI(\%)&MI(\%)&MI(\%)\\
\noalign{\smallskip}\hline\noalign{\smallskip}
Compound& 84.84$\pm$4.11& 90.16& 70.32& 90.25& 90.16& 90.16& 89.73&$\mathbf{ 91.65}$\\
6:399$\times$2& 70.65$\pm$6.84&$\mathbf{ 72.24}$& 16.97& 71.60& 72.24& 72.24& 64.07& 61.57\\\hline
Dermatology& 71.66$\pm$1.26& 72.66& 70.62& 72.60& 72.60& 72.60& 72.90&$\mathbf{ 74.44}$\\
6:366$\times$34& 17.84$\pm$3.67& 18.00&  3.65& 18.00& 18.00& 18.00&$\mathbf{ 26.79}$& 20.16\\\hline
Ecoli& 79.93$\pm$1.24& 82.49& 69.13& 88.29& 82.49& 82.68& 83.01& $\mathbf{90.34}$\\
8:336$\times$7& 49.31$\pm$2.28& 57.79& 16.46&$\mathbf{ 62.21}$& 57.79& 57.57& 49.97& 50.97\\\hline
Glass& 69.27$\pm$1.45& 69.04& 66.82& 70.10& 69.04& 69.04&70.77& $\mathbf{ 71.41}$\\
6:214$\times$9& 37.50$\pm$2.09&$\mathbf{ 41.42}$&  7.35& 23.42& 41.42& 41.42& 29.18& 30.81\\\hline
Iris& 87.63$\pm$8.09& 91.24& 59.47& 91.24& 91.24& 91.24& 94.95& $\mathbf{97.40}$\\
3:150$\times$4& 76.26$\pm$9.85& 79.15& 13.93& 79.15& 79.15& 79.15&$\mathbf{ 86.23}$& 85.59\\\hline
Pathbased&$\mathbf{96.11}$$\pm$0.18& 76.29& 59.94& 80.57& 76.29& 76.29& 91.92& 93.92\\
3:300$\times$2&$\mathbf{ 88.23}$$\pm$0.40& 57.51& 11.60& 65.87& 57.51& 57.51& 79.83& 82.28\\\hline
Zoo& 87.14$\pm$3.39& 90.63& 89.52& 90.63& 90.63& 90.63&$\mathbf{ 91.25}$& 91.15\\
7:101$\times$16& 70.79$\pm$5.39& 77.99& 72.90& 77.99& 77.99& 77.99&$\mathbf{ 79.70}$& 70.27\\\hline
Echocardiogram& 71.14$\pm$0.82& 55.04& 56.66& 56.66& 55.04& 55.04& 71.84& $\mathbf{83.23}$\\
2:131$\times$10& 32.41$\pm$0.53&  0.85&  2.73&  2.73&  0.85&  0.85& 28.53&$\mathbf{ 51.66}$\\\hline
Haberman& 60.61$\pm$0.30& 63.21& 61.26&$\mathbf{ 63.55}$& 63.21& 63.21& 63.21& $\mathbf{63.55}$\\
2:306$\times$3&  0.23$\pm$0.13&  4.97&  0.75&$\mathbf{  5.55}$&  4.97&  4.97&  4.97&  5.41\\\hline
Heartc& 50.76$\pm$0.09& 51.37& 51.26& 50.50& 51.37& 51.37& 52.44& $\mathbf{53.24}$\\
2:303$\times$14&  1.74$\pm$0.31&  2.19&  1.68&  0.62&  2.19&  2.19&  3.52&$\mathbf{  5.31}$\\\hline
Heartstatlog& 50.83$\pm$0.41& 53.00& 51.54& 50.92& 53.00& 53.00&$\mathbf{ 54.91}$& 54.22\\
2:270$\times$13&  1.88$\pm$0.54&  3.79&  1.64&  0.81&  3.79&  3.79&$\mathbf{  6.98}$&  6.25\\\hline
Hepatitis& 65.35$\pm$1.58& 66.27& 67.79& 67.79& 66.27& 66.27& 67.02&$\mathbf{ 69.38}$\\
2:155$\times$19&  1.04$\pm$0.68&  0.29&  2.01&  2.01&  0.29&  0.29&  0.29&$\mathbf{  5.35}$\\\hline
Hourse& 52.27$\pm$0.25& 52.12& 53.05& 51.71& 52.12& 52.12& 52.12& $\mathbf{54.55}$\\
2:300$\times$26&  0.68$\pm$0.57&  0.46&  2.30&  0.50&  0.46&  0.46&  0.46&$\mathbf{  3.92}$\\\hline
Housevotes& 79.79$\pm$0.94& 75.50& 75.83&$\mathbf{ 91.21}$& 75.50& 75.50& 80.68& 79.96\\
2:435$\times$16& 46.91$\pm$1.87& 42.09& 46.38&$\mathbf{ 72.31}$& 42.09& 42.09& 48.86& 48.74\\\hline
Sonar& 50.16$\pm$0.28& 51.62& 52.66& 52.22& 51.62& 51.62&$\mathbf{ 54.52}$& $\mathbf{54.52}$\\
2:208$\times$60&  0.39$\pm$0.39&  4.24&  4.08&  5.43&  4.24&  4.24&  6.64&$\mathbf{  6.77}$\\\hline
Spect& 60.68$\pm$4.79& 66.73& 68.06& 68.06& 66.73& 66.73& 68.98&$\mathbf{ 71.87}$\\
2:267$\times$44&  3.38$\pm$3.72&  0.17&  2.35&  2.35&  0.17&  0.17&$\mathbf{ 17.69}$& 10.96\\\hline
Spectf& 63.87$\pm$0.94& 50.16&$\mathbf{ 74.18}$& 50.16& 50.16& 50.16& 62.03& 70.76\\
2:88$\times$44& 21.84$\pm$1.52&  3.88&$\mathbf{ 49.34}$&  3.88&  3.88&  3.88& 20.54& 34.36\\\hline
AC-win & 1  &0   &1   &2  &0&0&3&12  \\
MI-win&1&2&1&3&0&0&5&5\\
Both-win&1&0&1&1&0&0&2&5\\
\hline \noalign{\smallskip}
\end{tabular} \label{Accuracy2}\\
\end{table*}

On the synthetic data, we tested the ability of the plane-based clustering methods to capture the plane-based data distribution. The synthetic data in $R^3$ consists of three classes, where one class is on a plane and the other two classes are on two lines, respectively. The details of the synthetic data are shown in Table \ref{Art}. We sampled four groups from the synthetic data which include 120, 100, 80 and 60 samples, respectively. Then, the plane-based clustering methods, including kPC \cite{Kplane}, PPC \cite{PPC,kPPC}, TWSVC \cite{TWSVC}, RTWSVC \cite{RTWSVC}, FRTWSVC \cite{RTWSVC}, RampTWSVC \cite{RampTWSVC} and our RFDPC, were implemented on these four groups, where the parameters $c$, $c_1$ and $c_2$ were set to $0.1$, $\Delta$ was set to $0.3$, and $s$ was set to $-0.2$.
The clustering results were depicted in Fig. \ref{Artshow}. It can be seen from Fig. \ref{Artshow} that (i) kPC and TWSVC cannot capture these plane-based clusters; (ii) PPC obtains a plane constructed by the two lines frequently; (iii) RTWSVC and FRTWSVC capture the three clusters on group G1, but both of them lose a cluster when the number of samples decreases; (iv) RampTWSVC always finds three clusters inaccurately; (iv) our RFDPC finds the three clusters exactly. Thus, our RFDPC captures the plane-based clusters more precisely than other methods on the synthetic datasets.

To exhibit the relationship between sample and its cluster center, the deviation statistics of the samples from their cluster center planes were depicted in Fig. \ref{ArtDev}, where `-' denotes the 1-order statistics $\bar{f}$ of each cluster and `$\times$' denotes the 2-order statistics $\pm\tilde{f}$ of each cluster. A cluster that only has a `$\times$' in Fig. \ref{ArtDev} means its 1-order and 2-order statistics are out of the figure window. It is obvious that the 2-order statistics of deviation of kPC, PPC, and TWSVC are far from their 1-order statistics, and hence they cannot find the three plane-based clusters exactly. The cluster samples lie on their cluster center planes by RTWSVC and FRTWSVC on groups G1, G2 and G3, but they fail to find the 3rd cluster on groups G2 and G3. RampTWSVC has great fluctuation, and thus it cannot capture the three plane-based clusters exactly. Accordingly, our RFDPC captures the plane-based clusters well by adding additional statistics. The quantitative measurements were reported in Table \ref{ArtAC}, and the highest ones were bold. Apparently, our RFDPC owns the highest performance on the four groups than other plane-based clustering methods, which is consistent with the previous observations.

\begin{figure*}
\centering
    \subfigure[Haberman]{\includegraphics[width=0.33\textheight]{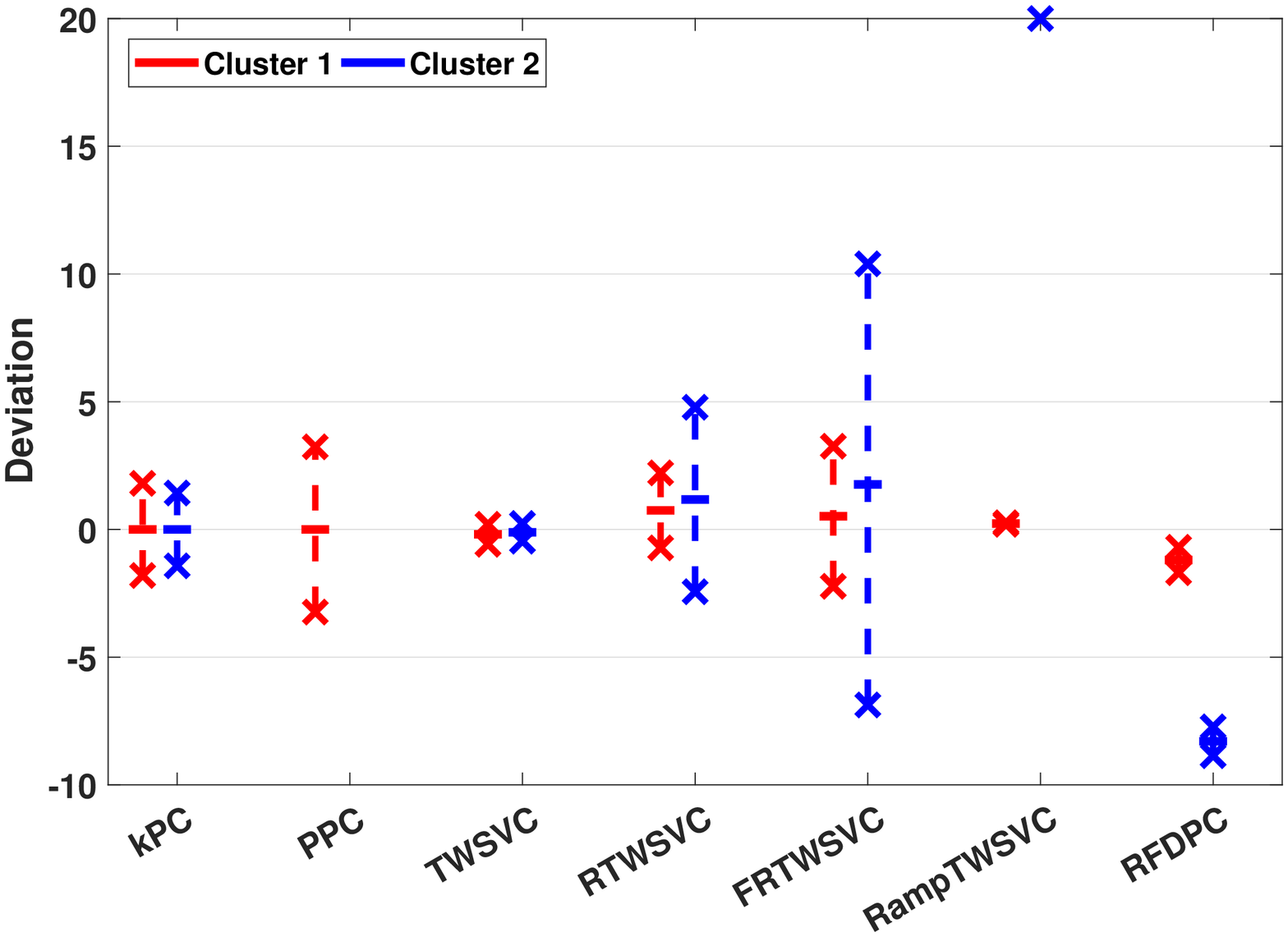}}
    \subfigure[Iris]{\includegraphics[width=0.33\textheight]{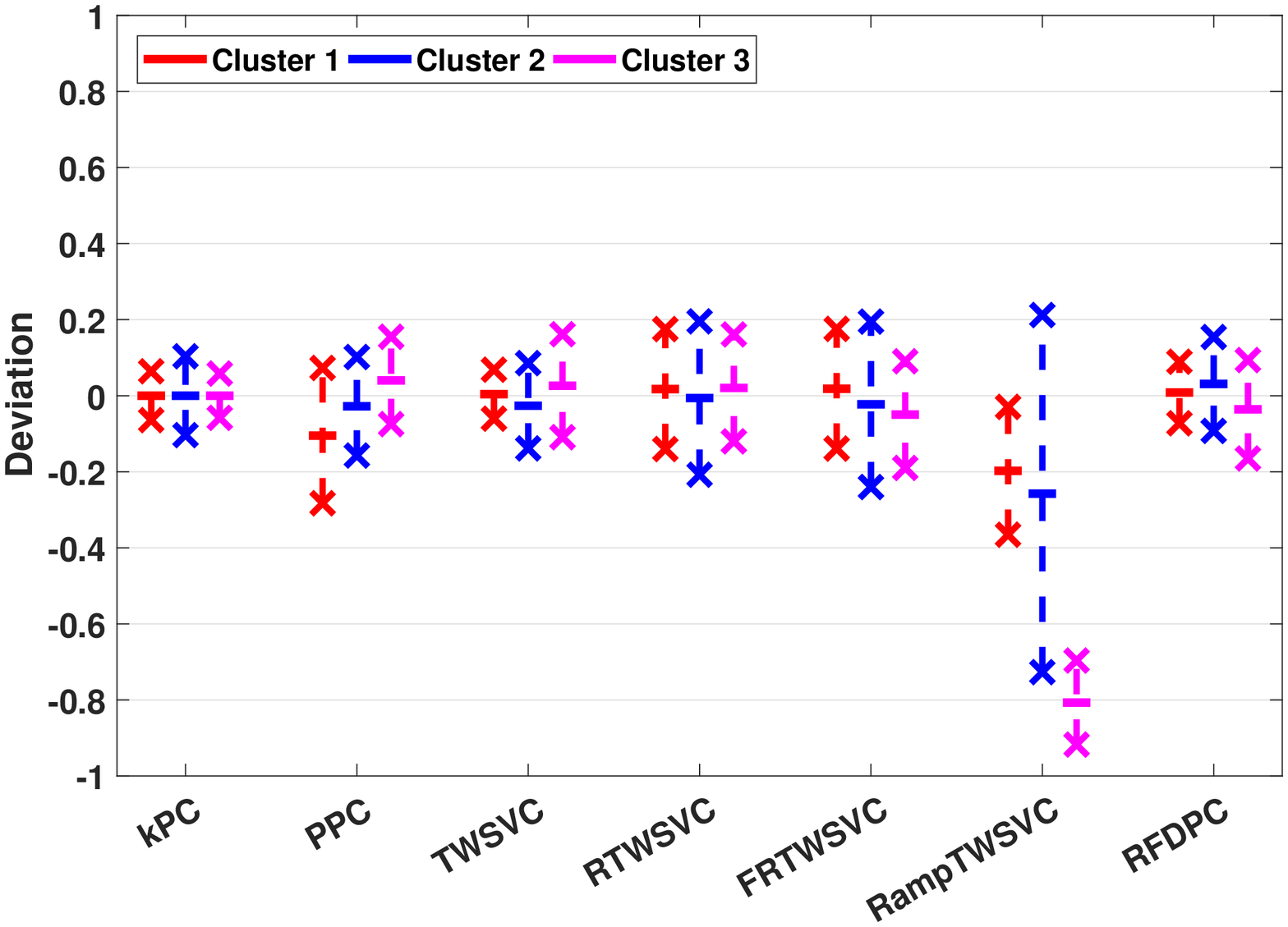}}
    \subfigure[Pathbased]{\includegraphics[width=0.33\textheight]{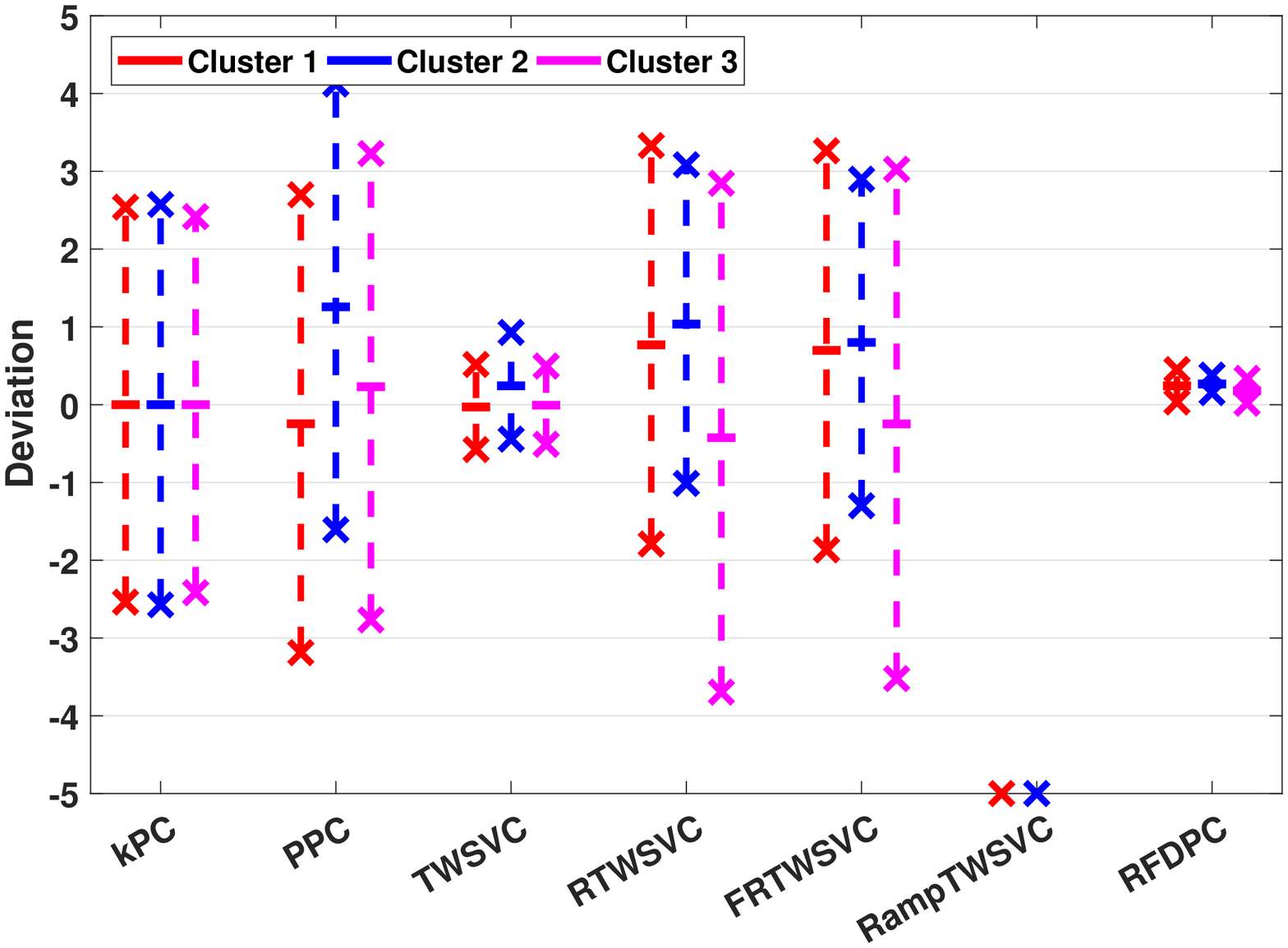}}
    \subfigure[Vehicle]{\includegraphics[width=0.33\textheight]{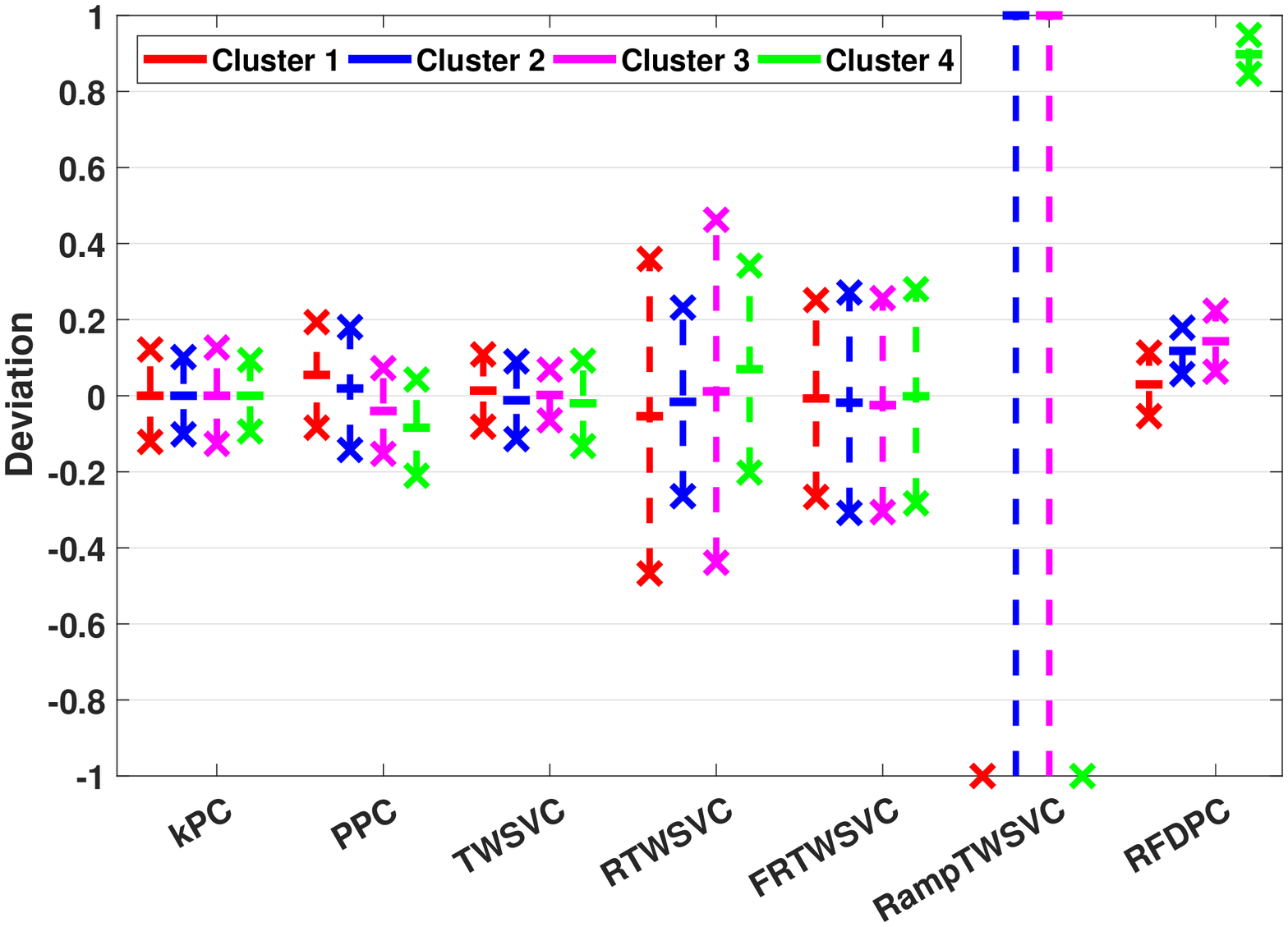}}
\caption{Deviation statistics of the samples from their cluster center planes on four benchmark datasets, where the symbolic descriptions can be found in Fig. \ref{ArtDev}.}\label{UCIDev}
\end{figure*}

In the following experiment, we implemented the above methods and kmeans \cite{Kmeans} on several benchmark datasets \cite{UCI} for linear and nonlinear cases.
Typically, $\Delta$ was set to $0.3$, $s$ was set to $-0.2$, and $\gamma_1=\gamma_2$. Other parameters in these methods were selected from $\{2^i|i=-8,-7,\ldots,7\}$. For nonlinear case, Gaussian kernel \cite{Kernel1,kPPC,Kernel4} $K(x_1,x_2)=\exp\{-\mu||x_1-x_2||^2\}$ was used and its parameter $\mu$ was selected from $\{2^i |i = -10,-9,\ldots,5\}$. The random initialization was fused for kmeans, and the NNG initialization \cite{TWSVC} was fused for the rest plane-based clustering methods to obtain stable performances. We reported AC and MI of these methods in Tables \ref{Accuracy1} and \ref{Accuracy2} for linear and nonlinear cases, respectively. Thereinto, kmeans was implemented ten times, and then the mean value and standard deviation were computed and reported. The highest ACs or MIs are bold, and the numbers of datasets with highest AC, MI and both are also shown in these tables.
From Table \ref{Accuracy1}, it can be seen that our RFDPC outperforms other methods on most of the datasets. Our RFDPC has the highest AC on 18/23 datasets, the highest MI on 12/23 datasets, and both of them on 12/23 datasets. Moreover, our RFDPC is comparable with the methods that own the highest AC or MI on most of the rest datasets.
Table \ref{Accuracy2} has similar results to that of Table \ref{Accuracy1} and confirms the observation from Table \ref{Accuracy1}. To exhibit the cluster center planes obtained by these plane-based clustering methods, we depicted the deviation statistics on the datasets ``Haberman'', ``Iris'', ``Pathbased'' and ``Vehicle'' in Fig. \ref{UCIDev} as instances.
Obviously, our RFDPC has a small and tight 2-order deviation statistics around the 1-order statistics, which improves the performance of plane-based clustering significantly.

\section{Conclusions}
A general model for plane-based clustering has been
proposed by introducing loss function and regularization. It has been shown that the general model terminates in a finite number of steps at the local or weak local optimal points theoretically. The existing plane-based clustering methods, including kPC, PPC, TWSVC, RTWSVC, FRTWSVC and RampTWSVC, are consistent with this general model. Furthermore, a new plane-based clustering method (RFDPC) based on the general model has been proposed. Experimental results on the synthetic and public available datasets
have indicated that our RFDPC can capture the data distribution more precisely. For practical convenience,
the corresponding RFDPC Matlab code has been uploaded upon
http://www.optimal-group.org/Resources/Code/RFDPC.html. In the future work, it is interesting to find more efficient loss functions and generalization terms in the general model to suit for specific clustering purpose.

\bibliographystyle{IEEEtran}
\bibliography{FBib}

%

\end{document}